\def\arxiv{}
\def\blfootnote{\xdef\@thefnmark{}\@footnotet   ext}
\newenvironment{rproof}{ \ifdefined\informs \proof{\it Proof.}
\else \begin{proof} \fi }{ \ifdefined\informs 
\Halmos \endproof \else \end{proof} \fi  }
\renewcommand{\P}{\mathcal{P}}
\newcommand{\D}{\mathcal{D}}
\newcommand{\A}{\mathcal{A}}
\newcommand{\X}{\mathcal{X}}
\newcommand{\M}{\mathcal{M}}
\newcommand{\inv}{\texttt{Inv}\xspace}
\def\blfootnote{\xdef\@thefnmark{}\@footnotet   ext}
\newcommand{\GuidedMAC}{\textsc{Hindsight MAC}\xspace}
\newcommand{\GuidedRL}{Hindsight Learning~}
\newcommand{\GuidedQD}{\textsc{Hindsight Q-Distillation}\xspace}
\newcommand{\DeltaIPGap}{\Delta^{\texttt{\dag}}}
\newcommand{\Qip}{Q^{\texttt{\dag}}}
\newcommand{\Vip}{V^{\texttt{\dag}}}
\newcommand{\MARO}{\texttt{MARO}\xspace}
\newcommand{\pip}{\pi^{\texttt{\dag}}}
\newcommand{\perm}{\overline{\pi}^\star}
\newcommand{\ORSuite}{\texttt{ORSuite}\xspace}
\newcommand{\pibar}{\overline{\pi}}
\newcommand{\pstar}{\pi^\star}
\newcommand{\Exp}[1]{\mathbb E \left[ #1 \right]} % Variance
\newcommand{\BarExp}[1]{\overline{\mathbb E} \left[ #1 \right]}
\newcommand{\bfxi}{\boldsymbol \xi}
\newcommand{\PXi}{\mathcal{P}_{\Xi}}
\newcommand{\plan}{\textsc{Hindsight}}
\newcommand{\Regret}{\textsc{Regret}}
\mathchardef\mhyphen="2D % Define a "math hyphen"
\DeclareMathOperator*{\argmin}{arg\,min}
\DeclareMathOperator*{\argmax}{arg\,max}
\DeclarePairedDelimiter{\norm}{\lVert}{\rVert}
\DeclarePairedDelimiter{\abs}{\lvert}{\rvert}
\newcommand{\E}{\mathbb{E}}
\let\originalleft\left
\let\originalright\right
\renewcommand{\left}{\mathopen{}\mathclose\bgroup\originalleft}
\renewcommand{\right}{\aftergroup\egroup\originalright}
\DeclareMathOperator*{\tsum}{\textstyle\sum}
\newcommand{\Ind}[1]{\mathds{1}_{\left[ #1 \right]}}
\setlist[itemize]{leftmargin=*}
\newtheorem{assumption}{Assumption}
\newtheorem{definition}{Definition}
\newtheorem{theorem}{Theorem}%[section]
\newtheorem{lemma}[theorem]{Lemma}
\begin{document}
	\title{Hindsight Learning for MDPs with Exogenous Inputs}
        \author[1]{Sean R. Sinclair\footnote{Contact authors: Sean Sinclair \url{srs429@cornell.edu} and Adith Swaminathan \url{adswamin@microsoft.com}}}
        \author[2]{Felipe Frujeri}
        \author[2]{Ching-An Cheng}
        \author[2]{Luke Marshall}
        \author[2]{Hugo Barbalho}
        \author[3]{Jingling Li}
        \author[2]{Jennifer Neville}
        \author[2]{Ishai Menache}
        \author[2]{Adith Swaminathan$^*$}

        \affil[1]{School of Operations Research and Information Engineering, Cornell University}
        \affil[2]{Microsoft Research, Redmond}
        \affil[3]{Department of Computer Science, University of Maryland}

	\maketitle

% \begin{icmlauthorlist}
% \icmlauthor{Sean Sinclair}{sch}
% \icmlauthor{Felipe Frujeri}{msr}
% \icmlauthor{Ching-An Cheng}{msr}
% \icmlauthor{Luke Marshall}{msr}
% \icmlauthor{Hugo Barbalho}{msr}
% \icmlauthor{Jingling Li}{umd}
% \icmlauthor{Jennifer Neville}{msr}
% \icmlauthor{Ishai Menache}{msr}
% \icmlauthor{Adith Swaminathan}{msr}
% \end{icmlauthorlist}

% \icmlaffiliation{msr}{Microsoft Research, Redmond}
% \icmlaffiliation{sch}{Cornell University, Ithaca}
% \icmlaffiliation{umd}{University of Maryland, College Park}

% \icmlcorrespondingauthor{Sean Sinclair}{srs429@cornell.edu}

% \icmlcorrespondingauthor{Adith Swaminathan}{adswamin@microsoft.com}

	\begin{abstract}
     Many resource management problems require sequential decision-making under uncertainty, where the only uncertainty affecting the decision outcomes are exogenous variables outside the control of the decision-maker.
We model these problems as Exo-MDPs (Markov Decision Processes with Exogenous Inputs) 
and design a class of data-efficient algorithms for them termed \GuidedRL (HL). 
Our HL algorithms achieve data efficiency by leveraging a key insight: having samples of the exogenous variables, past decisions can be revisited in hindsight to infer counterfactual consequences that can accelerate policy improvements. 
We compare HL against classic baselines in the multi-secretary and airline revenue management problems. We also scale our algorithms to a  business-critical cloud resource management problem -- allocating Virtual Machines (VMs) to physical machines, and simulate their performance with real datasets from a large public cloud provider. We find that HL algorithms outperform domain-specific heuristics, as well as 
state-of-the-art reinforcement learning methods.
	\end{abstract}

	\newpage
	\setcounter{tocdepth}{3}
	\tableofcontents
	\newpage

\section{Introduction}
\label{sec:introduction}

\ifdefined\arxiv

\else
\begin{figure*}[htb!]
\centering
\subfigure{\includegraphics[\empty,height=30mm,]
{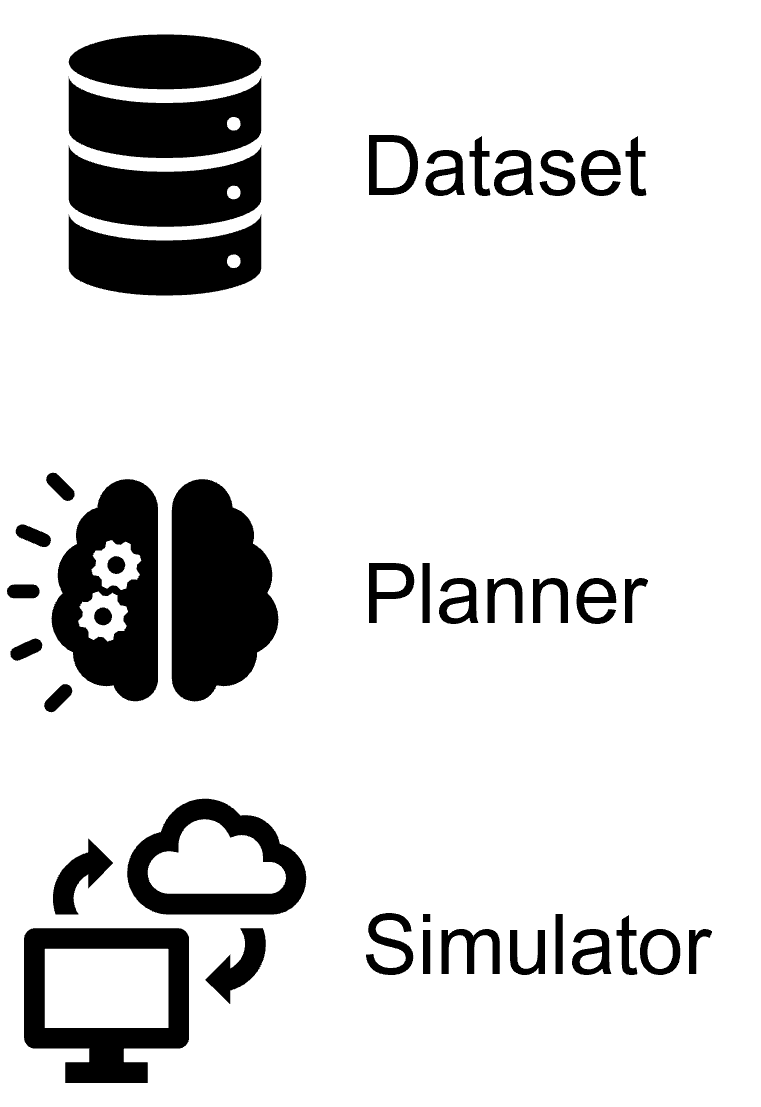}}\hspace{15mm}
\subfigure{\label{fig:pto}\includegraphics[height=30mm]{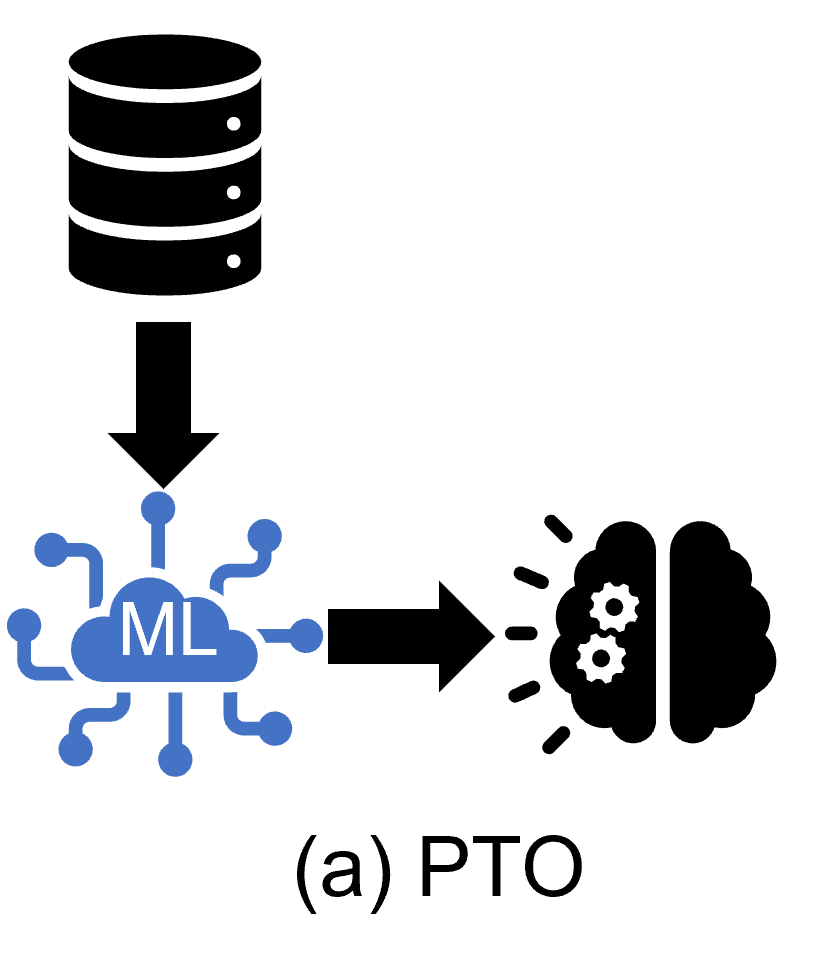}}\hspace{10mm}
\subfigure{\label{fig:rl}\includegraphics[height=30mm]{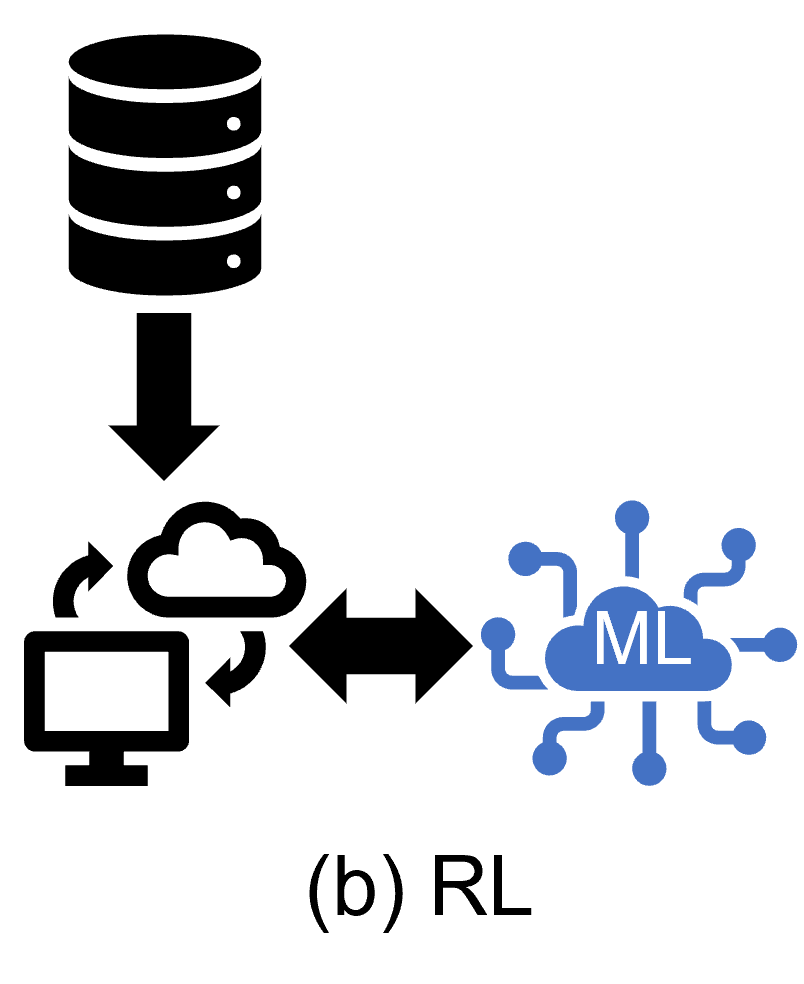}}\hspace{10mm}
\subfigure{\label{fig:hl}\includegraphics[height=30mm]{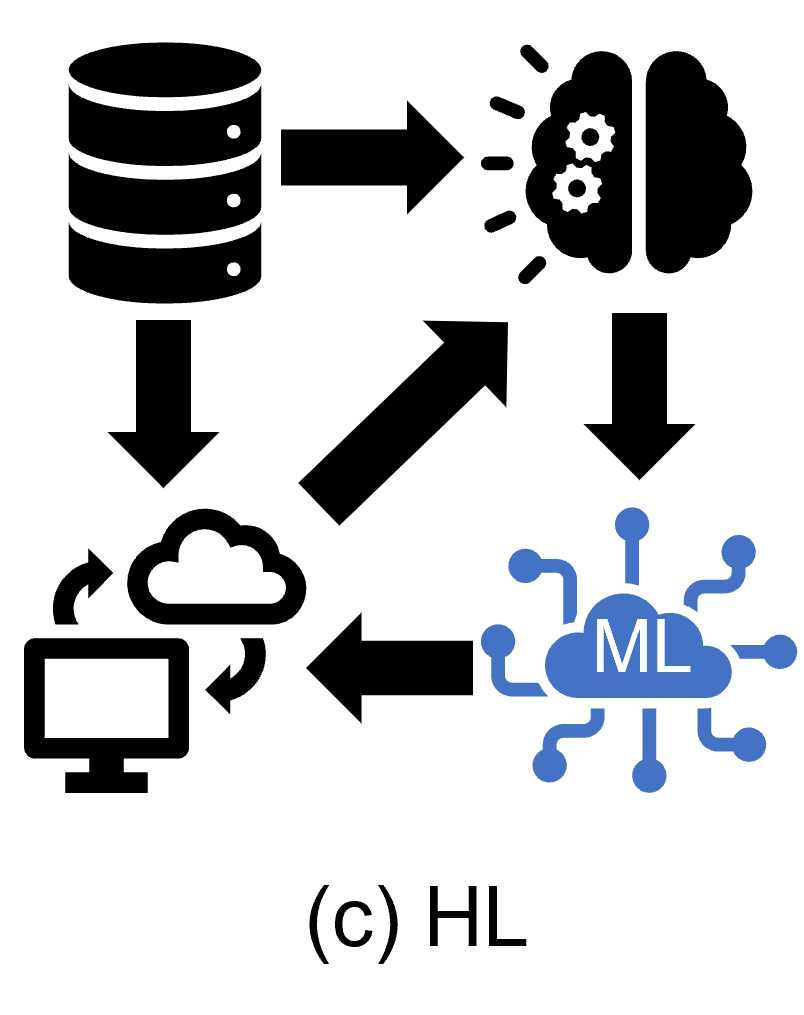}}
\vspace{-3mm}
\caption{\srsedit{Conceptual view of different ML approaches to solving Exo-MDPs ((a) and (b) are detailed in \cref{sec:exogenous_value}). Arrows in the figure indicate the flow of information from one component to the other.  In (a) {\bf Predict-Then-Optimize} (PTO) uses the dataset to train an ML forecasting model, uses the model to predict future exogenous inputs, and uses the forecasted inputs online for planning optimal actions.   
In (b) {\bf RL} replays the dataset through the simulator to evaluate an ML policy’s performance, and tunes policy parameters using the collected rewards as the training signal.  In (c) our {\bf \GuidedRL}(HL) approach uses the dataset directly with the planner (top arrow) to identify hindsight-optimal values, and trains the ML policy using state trajectories from the simulator annotated with the hindsight optimal values. }
} 
\label{fig:approaches}
\vspace{-2mm}
\end{figure*}
\fi

Many aspects of our physical and digital infrastructure --- like data centers, power grids, and supply chains --- can become more adaptive and efficient through data-driven decision-making. For instance, in a world-wide cloud service, even 1\% lower resource fragmentation can reduce energy use and save approximately \$100M per year~\citep{hadary_protean_2020}. This type of improvement could be achieved by examining historical patterns of compute demands and using machine learning (ML) to allocate future demands more efficiently. 

In this work, we make the key observation that in resource management applications the system is often partially known and the only uncertainty is due to \emph{exogenous} variables like resource requests --- that are (to a first-order approximation) \emph{independent} of an agent's decisions~\citep{powell_reinforcement_nodate}. For example, a cloud operator deciding to place a virtual machine (VM) on a specific server rack does not directly affect future VM requests, 
but future demands can strongly affect the eventual quality of their allocation decisions~\citep{hadary_protean_2020}. 
We define these problems as \emph{Exo-MDPs} (Markov Decision Processes with Exogenous Inputs), which are a subclass of \emph{Input-Driven MDPs}~\citep{mao_variance_2019}. In Input-Driven MDPs the minimal state describing the system dynamics decompose into (i) {\em exogenous} inputs, which evolve independently of the agent's actions, and (ii) {\em endogenous} factors that are impacted by the agent's actions and the exogenous inputs. 
Exo-MDPs make the additional assumption that the only unknowns are the distribution of future exogenous inputs (see~\cref{sec:preliminary}). 
\ifdefined\arxiv
This assumption often holds in resource management applications due to determinism in key system elements. As such, due to determinism {\em within} the system, the key challenge is to learn in the context of uncertainty {\em external} to the system. 
\else \fi

ML has been applied in several resource management applications, which we will show are Exo-MDPs, and found to outperform domain-specific heuristics~\citep{lykouris2021competitive,purohit2018improving,gollapudi2019online}.
The ML approaches often follow the Predict-Then-Optimize (PTO) paradigm~\citep{elmachtoub2022smart}, using ML to forecast the future exogenous inputs (e.g., demands). However, when the future is highly stochastic, forecasting is challenging. For example, VM requests from real-world data-centers~\citep{hadary_protean_2020} show long-term regularity of diurnal and weekly patterns but extreme short-term fluctuations (see Figure~\ref{fig:vm_trace_summary}). 

Reinforcement Learning (RL) is an alternative to PTO. RL directly optimizes decision quality~\citep{chen2020powernet,fang2019reinforcement,mao2016resource} by replaying historical samples of the exogenous inputs through the known dynamics of endogenous factors to learn good policies~\citep{madeka2022deep}. 
RL methods applied to Exo-MDPs must, however, learn by trial-and-error that their actions can never affect the exogenous inputs. RL is thus sensitive to variance in the outcomes introduced by the exogenous inputs and requires more data to learn an optimal policy when the variance is high~\citep{foster2021statistical} (see \cref{sec:example}).

Recent works have proposed to use \emph{hindsight} control variates to reduce the variance of RL for input-driven MDPs (which include Exo-MDPs)~\citep{mao_variance_2019,mesnard_counterfactual_2021}. 
They derive unbiased policy gradients by subtracting from the observed outcomes a function that depends additionally on hindsight information.  
However, we find that for many resource management scenarios, the variance reduction from hindsight control variates is not enough for data-efficient learning in practical regimes (see Table~\ref{tab:performance}).

\ifdefined\arxiv
\begin{figure*}[t!]
\centering
\subfigure{\includegraphics[\empty,height=30mm,]
{Figures/Legend.png}}\hspace{15mm}
\subfigure{\label{fig:pto}\includegraphics[height=30mm]{Figures/PTO.png}}\hspace{10mm}
\subfigure{\label{fig:rl}\includegraphics[height=30mm]{Figures/RL.png}}\hspace{10mm}
\subfigure{\label{fig:hl}\includegraphics[height=30mm]{Figures/HL.png}}
\vspace{-3mm}
\caption{\srsedit{Conceptual view of different ML approaches to solving exo-MDPs ((a) and (b) are detailed in \cref{sec:exogenous_value}). Arrows in the figure indicate the flow of information from one component to the other.  In (a) {\bf Predict-Then-Optimize} (PTO) uses the dataset to train an ML forecasting model, uses the model to predict future exogenous inputs, and uses the forecasted inputs online for planning optimal actions.   
In (b) {\bf RL} replays the dataset through the simulator to evaluate an ML policy’s performance, and tunes policy parameters using the collected rewards as the training signal.  In (c) our {\bf \GuidedRL}(HL) approach uses the dataset directly with the planner (top arrow) to identify hindsight-optimal values, and trains the ML policy using state trajectories from the simulator annotated with the hindsight optimal values. }
} 
\label{fig:approaches}
\vspace{-2mm}
\end{figure*}
\else
\fi

{
We argue that hindsight information can be more effectively used to improve learning, as it can largely reduce the variance of exogenous inputs at the cost of a small amount of asymptotic bias in many cases. Based on this insight, we develop a family of algorithms called \GuidedRL (HL), which uses hindsight planners during learning to lessen the variance from exogenous inputs.  
A hindsight planner~\citep{chong2000framework,gopalan2010polynomial,conforti2014integer} is an optimization algorithm that provides computationally tractable approximations to hindsight-optimal decisions, which is the optimal action specialized to a fixed sequence of future demands and thus is not affected by the exogenous variability. 
Therefore, by using hindsight planners during learning, HL can more efficiently identify decisions critical to future performance under high-variance exogenous inputs. Figure~\ref{fig:approaches} contrasts the HL algorithm schematic with PTO and RL.}

% We argue that hindsight information can be used more effectively to improve learning, specifically that it offers a powerful new lever for solving Exo-MDPs by trading off a small amount of asymptotic bias for a large reduction in variance from exogenous inputs. We leverage the rich body of work on hindsight planners~\citep{chong2000framework,gopalan2010polynomial,conforti2014integer} which provide computationally tractable approximations to hindsight-optimal decisions. A hindsight-optimal decision is the optimal action specialized to a fixed sequence of future demands and thus is not affected by the exogenous variability. We develop a family of algorithms called \GuidedRL (HL) which use hindsight planners during learning to ameliorate the variance from exogenous inputs. Figure~\ref{fig:approaches} contrasts the HL algorithm schematic with PTO and RL. . 

We theoretically characterize when HL succeeds using a novel quantity called {\em hindsight bias} (which arises due to the mismatch between truly optimal and hindsight-optimal decisions). Remarkably, we find that hindsight bias is small for many resource management problems, so HL can learn with extremely limited data for them. 
To prove this, we use recent advances in prophet inequalities~\citep{dutting2020prophet,vera2021bayesian} with novel adaptations for the Exo-MDP setting. 
We empirically test HL in several domains: Multi-Secretary problems, Airline Revenue Management (ARM) benchmarks, and Virtual Machine (VM) allocation. We find that HL is significantly better than both domain heuristics and RL (with and without hindsight control variates), illustrating that HL indeed strikes a better bias-variance trade-off. 
Notably, our VM allocation experiments use real historical traces from a large public cloud provider, where HL is the only approach that consistently beats the currently used heuristics ($0.1\%-5\%$ better). Recall that even a $1\%$ better allocator can yield massive savings in practice~\citep{hadary_protean_2020}.

\section{Related Work}
\label{sec:related_work}

Here we include a brief discussion of salient related work. Please see \cref{sec:related_work_full} for more details.

Recent studies have exploited causal structure in MDPs for better decision-making~\citep{lattimore2016causal,lu2022efficient}. Their causal graphs however do not capture Exo-MDPs (e.g., Figure~\ref{fig:example}). When the exogenous process is only partially observed, HL may additionally need causal RL techniques~\citep{zhang2020causal}; this is left for future work. 

Input-Driven MDPs have been specialized before with additional assumptions that either the rewards or the transitions factorize so that the exogenous process can be filtered out~\citep{dietterich_discovering_2018,efroni2022sample}. However, they are not suited for Exo-MDPs because filtering out the exogenous process yields demand-agnostic policies, which are highly sub-optimal for resource management problems.

\srsedit{Hindsight optimization has been previously attempted~\cite{chong2000framework,feng2021scalable}, and it is well known that these values are over-optimistic.  \citet{mercier2007performance} show that despite the over-optimism, the regret for hindsight optimization policies in several network scheduling and caching problems is a constant.  The Progressive Hedging (PH) algorithm in \citet{rockafellar1991scenarios} attempts to eliminate the over-optimism by iteratively refining the hindsight optimization solution by adding non-anticipative constraints.  PH has weak guarantees for convergence in non-convex problems (like our settings) and is intractable for the large problem sizes that we consider. 
Information relaxation~\citep{brown2021information} extends PH to non-convex rewards and arbitrary action spaces and allows for imperfect penalization of the non-anticipative constraint violations.  These schemes require hand-crafted penalties as well as tractable hindsight planning with those penalized objectives.  Instead, \citet{mercier2007performance} foregoes solving for the optimal policy of the MDP and instead produces a potentially sub-optimal non-anticipatory policy.  We provide a tighter regret analysis for their surrogate policy (\cref{lem:mercier_example}) and additionally describe an imitation learning algorithm to avoid unnecessary computation in large-scale problems. 
}

\section{Problem Setting and Definitions}
\label{sec:preliminary}

\subsection{MDPs with Exogenous Inputs (Exo-MDPs)}

We consider a subclass of finite horizon \ifdefined\short \else~\footnote{All the algorithms and analysis also apply analogously to infinite horizon problems with a discount factor.}\fi Markov Decision Processes (MDPs).
An MDP is defined by %the six-tuple 
$(\mathcal{S}, \A, T, P, R, s_1)$ with horizon $T$, state space $\mathcal{S}$, action space $\A$, reward distribution $R$, state transition distribution $P$, and starting state $s_1$~\citep{puterman2014markov}.
An Exo-MDP further specializes an MDP by separating the process into \emph{endogenous} and \emph{endogenous} parts: a state $s\in\mathcal{S}$ factorizes into {endogenous}/{system} state $x \in \X$ and exogenous inputs $\bfxi \coloneqq (\xi_1, \ldots, \xi_T) \in \Xi^T$ (namely, $\mathcal{S} \coloneqq \X \times \Xi^{T}$). The state transition distribution $P$ also factors into an endogenous part $f$ and exogenous part $\PXi$ as follows. 
% , and the transition distribution factors as $p = f \times \PXi$, where $f:\mathcal{S}\times\mathcal{A}\times\Xi\to S$ $\PXi:\Xi\to\Delta(\Xi)$, and the notation $\Delta(Z)$ denotes distribution over $Z$.
% Let $\bfxi \coloneqq (\xi_1, \ldots, \xi_T)$ be a sequence of exogenous inputs with each $\xi_t \in \Xi$.
At time $t$, the agent selects action $a_t \in \A$ based on the current state $s_t \coloneqq (x_t, \bfxi_{< t})$ where $\bfxi_{< t} \coloneqq (\xi_1, \ldots, \xi_{t-1})$ is the observed exogenous inputs thus far, \emph{and then} $\xi_t$ is sampled from an unknown distribution $\PXi(\cdot \mid \bfxi_{< t})$, independently of the agent's action $a_t$.
With $\xi_t$, the endogenous state evolves according to $x_{t+1} = f(s_t, a_t, \xi_{t})$ and the reward earned is $r(s_t, a_t, \xi_{t}) \in [0,1]$.  Note $\xi_t$ is only observed when the agent makes the decision at time $t+1$, \emph{not} at time $t$.
%This model can be further specialized to settings where $\bfxi$ is Markovian or iid across time. 
% On the other hand, the exogenous input $\bfxi_t = (\xi_1, \ldots, \xi_t)$ is updated by sampling 
We restrict our attention to policies $\pi_t : \X \times \Xi^{t-1} \rightarrow \Delta(\A)$ and let $\Pi$ denote the policy class of the agent.
% We assume $\bfxi$ is sampled from an unknown distribution $\PXi$, independently of any policy. 
The endogenous dynamics $f$ and reward function $r$ are assumed to be known to the agent\footnote{Thus, Exo-MDPs are a subclass of Input-Driven MDPs~\citep{mao_variance_2019} which more generally have unknown $f,r$. \ifdefined\short \else \cref{sec:gen_examples_mdp} shows that even though $f$ and $r$ are known in Exo-MDPs, they can be as difficult as  arbitrary MDPs with state space $\X$.\fi}; the only unknown in the Exo-MDP is $\PXi$.  \srsedit{For notational convenience, we also assume that $f$ and $r$ are deterministic; all of our insights carry over to stochastic rewards and transitions. }
These assumptions of Exo-MDPs are well-motivated for resource management problems, and we list the state decomposition along with $f$ and $r$ for several examples in \cref{sec:gen_examples_mdp}. 

\subsection{Value Decomposition in Exo-MDPs}

Since the only unknown in an Exo-MDP is $\PXi$, a policy's performance can be written as expectations over $\PXi$. This motivates the use of historical samples $\bfxi \sim \PXi$ to evaluate any policy and find optimal policies for the Exo-MDP.

For $\pi \in \Pi$, the \emph{values} and \emph{action-values} are defined as:
\begin{align*}
    V^\pi_t(s) &\coloneqq \E_{\bfxi_{\ge t}, \pi} [\tsum_{\tau \geq t} r(s_\tau, a_\tau, \xi_{\tau}) \mid s_t = s], \\
    Q^\pi_t(s,a) &\coloneqq \E_{\bfxi_{\ge t}, \pi} [\tsum_{\tau \geq t} r(s_\tau, a_\tau, \xi_{\tau}) \mid s_t = s, a_t = a ],
\end{align*}
where the expectation is taken over the randomness in $\pi$ and the exogenous inputs $\bfxi$.
We denote $\pi^\star$ as the optimal policy, i.e. the policy that maximizes $V_t^\pi(s)$ in each state $s$, and denote $Q^\star, V^\star$ for $Q^{\pi^\star}, V^{\pi^\star}$ respectively. Our goal is to find a policy with near-optimal returns, $\argmax_{\pi \in \Pi} \quad \{ V_1^{\pi} \coloneqq V_1^{\pi}(s_1) \}$. Or equivalently, minimize $\Regret(\pi)$ where 
$ \Regret(\pi) \coloneqq V_1^\star - V_1^{\pi}.$ 
For convenience we assume\ifdefined\short \else\footnote{If not, our theoretical results need an additional term for the difference in returns between the best policy in $\Pi$ and $\pi^\star$.}\fi \,that $\pi^\star \in \Pi$. 
We introduce value functions for fixed $\bfxi= \{\xi_1, \ldots, \xi_{T}\}$ as \begin{align}
\label{eq:q_exog}
\hspace{-2mm}
    Q_t^\pi(s,a,\bfxi_{\geq t}) & \coloneqq \E_\pi [\tsum_{\tau \geq t} r(s_\tau, a_\tau, \xi_\tau) | s_t = s, a_t = a], \\
    V_t^\pi(s,\bfxi_{\geq t}) & \coloneqq \sum_a \pi(a| s) Q_t^{\pi}(s,a,\bfxi_{\geq t}).
\end{align}
Note that the expectation is not over $\PXi$ because $\bfxi$ is fixed.
The $\bfxi$-specific values are related to policy values as follows.

%\vspace{-1mm}
\begin{lemma}
\label{thm:exog_bellman}
For every $t \in [T], (s,a) \in \mathcal{S} \times \A,$ policy $\pi \in \Pi$ we have that
\begin{align}
    \label{eq:exog_decomp_expectation}
    Q_t^\pi(s,a) & = \E_{\bfxi_{\geq t}}[Q_t^{\pi}(s,a,\bfxi_{\geq t})], \\
    V_t^\pi(s) & = \E_{\bfxi_{\geq t}}[V_t^{\pi}(s, \bfxi_{\geq t})]. \label{eq:exog_value}
\end{align}
In particular $V_1^{\pi} = \E_{\bfxi}[V_1^{\pi}(s_1, \bfxi)]$.
\end{lemma}
We relegate complete proofs to \cref{sec:proofs}.
Since the transition dynamics $f$ and reward function $r$ are known and the unknown $\PXi$ does not depend on the agent's actions, an immediate consequence is that \emph{action exploration is not needed}. Given any policy $\pi$ and exogenous trace $\bfxi$ we can simulate with $f$ and $r$ to calculate its return in Equation~\ref{eq:q_exog}. 

Suppose we collected traces $\D = \{\bfxi^1, \ldots, \bfxi^N\}$ where each trace $\bfxi^n = \{\xi_1^n \ldots, \xi_T^n\}$ is sampled independently from $\PXi$. 
Finding a near-optimal policy using this historical dataset is known as the \emph{offline RL} problem~\citep{fujimoto2019off,liu2020provably,rashidinejad2021bridging,cheng2022adversarially}, but this is much simpler in Exo-MDPs.
We do not face \emph{support mismatch} wherein trajectories from a data-collection policy may not cover the scenarios that the learner policy would encounter. 
Here $\D$ (collected by a behavior policy) can be safely replayed to evaluate any learner policy. This fact also implies that model selection and hyper-parameter tuning can be safely done using a held-out $\D$ akin to supervised learning. 
Our goal finally is to learn policies that generalize from $\D$ to the unknown $\PXi$, which can be challenging because the exogenous inputs $\bfxi$ introduce variance in a policy's return estimation.

\subsection{Hindsight Planner}

Exo-MDPs not only allow easy policy evaluation using a dataset of traces, but they also allow computing valuable hindsight information like the hindsight-optimal decisions for a trace $\bfxi$. This hindsight information can be stable even when $\PXi$ is highly stochastic. We now make a computational assumption for calculating hindsight-optimal decisions that will enable tractable algorithms for Exo-MDPs.

\begin{assumption}
\label{ass:planning_oracle}
Given any trace $\bfxi_{\geq t} = (\xi_t, \ldots, \xi_T)$ and state $s = (x_t, \bfxi_{< t})$ we can tractably solve:
\begin{align}
\label{eq:plan_assumption}
    \max_{a_t, \ldots, a_T} & \textstyle \sum_{\tau = t}^T r(s_\tau, a_\tau, \xi_{\tau}) \\
    \textrm{s.t. } x_{\tau+1} & = f(s_\tau, a_\tau, \xi_{\tau}), \textrm{ for } \tau = t, \ldots, T \nonumber\\
    s_\tau & = (x_\tau, \bfxi_{< \tau}), \textrm{ for } \tau = t, \ldots, T \nonumber. % \\
\end{align} 
We denote the optimal objective value to this problem as $\plan(t, \bfxi_{\geq t}, s)$. 
\end{assumption}
%\srsedit{Recall that in Exo-MDPs, $f$ and $r$ are known and $\bfxi$ is sampled independently of any policy, hence $\plan$ is computable given the assumptions.}
The optimization community has developed computationally efficient implementations for the $\plan(t, \bfxi_{\geq t}, s)$ oracle; with tight bounds on the optimal value even when \cref{eq:plan_assumption} is intractable. 
For example, online knapsack for a fixed input sequence can be solved in pseudo-polynomial time~\citep{gopalan2010polynomial}.
In many instances, Equation~\ref{eq:plan_assumption} can be represented as an integer program and solved via heuristics~\citep{conforti2014integer}.  
%When the transitions are stochastic however, even representing Equation~\ref{eq:plan_assumption} may be infeasible. 
%In such cases, 
Recently RLCO (RL for Combinatorial Optimization) has proved to be an effective heuristic for hindsight planning~\citep{anthony2017thinking,fang2021universal}. Note that \cref{ass:planning_oracle} or RLCO cannot be used directly as a non-anticipatory policy for the Exo-MDP, since the whole sequence $\bfxi \sim \PXi$ is not observed upfront when making decisions. 
 We discuss several examples of hindsight planners in \cref{sec:planning_oracle_open_control} and assess the impact of approximate planning empirically in \cref{sec:nylon}. 

\section{Using Hindsight In Exo-MDPs: An Example}
\label{sec:example}

In an Exo-MDP, the only uncertainty is due to unknown $\PXi$. When $\PXi$ introduces substantial variability in outcomes, the natural question is whether generic MDP algorithms can learn effectively?  When $T = 1$, Exo-MDPs are isomorphic to a multi-armed bandit and so general bandit algorithms are optimal for the Exo-MDP.  However, we will see in the next example with $T > 1$ that the answer is in general no.  

Consider the sailing example in Figure~\ref{fig:example} which is an Exo-MDP where the decision is to pick between one of two routes {\em prior to observing wind} with hopes of minimizing the trip duration.  By direct calculation, $Q^\star(\text{route2}) - Q^\star(\text{route1}) = -48$.  Hence, the optimal non-anticipative policy will always pick route2.

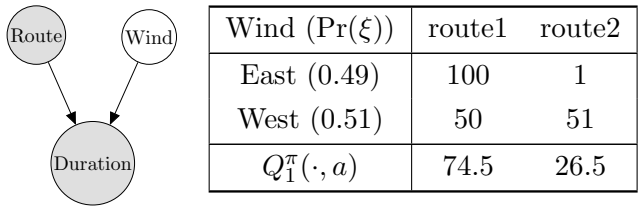
\begin{figure}[htb!]
\centering
\setcellgapes{3pt}
    \makegapedcells 
    \setlength\belowrulesep{0pt}   
    \setlength\aboverulesep{0pt}
\begin{tabular}{cc}
\scalebox{0.75}{\begin{tikzpicture}[baseline={(current bounding box.north)}]
% nodes
 \node[obs] (Duration) {Duration};%
 \node[obs,above=of Duration,xshift=-1cm] (Route) {Route}; %
 \node[latent,above=of Duration,xshift=1cm] (Wind) {Wind}; %
 % edges
 \edge {Route,Wind} {Duration}  
 \end{tikzpicture}}
 & 
        \begin{tabular}[t]{|c|cc|}
        \toprule
         Wind ($\Pr(\xi)$) & route1  & route2  \\
        \midrule
        East ($0.49$) & $100$  & $1$ \\
        West ($0.51$) & $50$ & $51$ \\
        \midrule
        $Q^\pi_1(\cdot, a)$ & $74.5$ & $26.5$ \\
        \bottomrule
        \end{tabular}

\end{tabular}
\vspace{-2mm}
\caption{An Exo-MDP for sailing in uncertain winds: $\bfxi =\{\text{Wind}\}, \A=\{ \text{Route} \}, r=\{\text{Duration}\}$ and $\X = \emptyset$. First, the agent picks a \emph{Route}. Then \emph{Wind} conditions are observed during the trip and the agent receives a cost with respect to \emph{Duration}.  Values in the table denote average trip duration $r(a)$ (accounting for random fluctuations in wind).} 
\vspace{-3mm}
\label{fig:example}
\end{figure}

\noindent {\bf RL:} If a classic RL approach was applied to this problem, it would estimate the average duration for each route using observed samples, include exploration bonuses to account for uncertainty, and compare the averages to pick future routes. 
This requires many $\bfxi$ samples because wind introduces large variance in the $Q$-estimates, and requires sufficient data to be collected across both the routes.

\noindent {\bf Hindsight Learning:} In hindsight, we can instead 
use {\em all }observed samples (leveraging known $f$ and $r$), with {\em no} exploration bonuses, and use paired comparisons to identify the optimal route. Variability due to wind means the routes' durations are typically positively correlated and thus a paired sample between routes will be more statistically efficient.  

% \noindent {\bf Behaviour Cloning} Using this hindsight information naively can instead yield sub-optimal policies.  For example, we may be tempted to try a straightforward behavior cloning approach to incorporate hindsight information: compute the hindsight-optimal actions for every $\bfxi \sim \D$, and learn to imitate these actions using a feasible non-anticipatory policy~\citep{fang2021universal}. This is an instance of the \emph{probability matching} principle which is widely used in Thompson sampling~\citep{russo2018tutorial,hart2000pattern}. 
% Unfortunately, this \emph{value-agnostic} approach is uncontrollably biased.  Since the winds in $\D$ will be west $51\%$ of the time, the hindsight-optimal distribution (marginalizing over wind) is $\Pr(\text{route1})=0.51; \Pr(\text{route2})=0.49$. A non-anticipatory learner policy trained via behavior cloning will converge either to this distribution (if learning a stochastic policy) or its mode (using a deterministic policy). 

\section{\GuidedRL}
\label{sec:hindsight_planning}

We introduce \GuidedRL (HL) to incorporate hindsight information in a principled manner, so as to reduce the exogenous variability and thereby speed-up learning.
 HL first uses a hindsight planner (\cref{ass:planning_oracle}) to derive a surrogate policy $\pip$:
 \ifdefined\arxiv
 \allowdisplaybreaks{
 \begin{align}
    \label{eq:qip}
    \pip_t(s) & \coloneqq \argmax_{a \in \A} \Qip_t(s,a); \\
    \Qip_t(s,a) & \coloneqq \E_{\bfxi_{\geq t}}[r(s,a,\xi_t) + \plan(t+1, \bfxi_{> t}, f(s, a, \xi_t))];  \label{eq:qip_value}\\
    \Vip_t(s) & \coloneqq \E_{\bfxi_{\geq t}}[\plan(t, \bfxi_{\geq t}, s)]. 
\end{align} }
\else
 \allowdisplaybreaks{
 \begin{align}
    \label{eq:qip}
    \pip_t(s) & \coloneqq \argmax_{a \in \A} \Qip_t(s,a), \\
    \Qip_t(s,a) & \coloneqq \E_{\bfxi_{\geq t}}[r(s,a,\xi_t) + \nonumber\\
    & \plan(t+1, \bfxi_{> t}, f(s, a, \xi_t))], \label{eq:qip_value}\\
    \Vip_t(s) & \coloneqq \E_{\bfxi_{\geq t}}[\plan(t, \bfxi_{\geq t}, s)]. 
\end{align} }
\fi

We define $\Qip_t(s,a,\bfxi_{\geq t})$ and $\Vip_t(s,\bfxi_{\geq t})$ as the terms inside of the respective expectations. 
Note that $\pip$ is a non-anticipatory policy, which considers expectation over future exogenous $\bfxi_{\geq t}$ rather than being defined for a fixed trace. 
$\pip$ is called ``Bayes Selector'' in the literature~\citep{vera2021bayesian,mercier2007performance} and has been used for applications like bin packing and refugee resettlement~\citep{bansak2022outcome,ahani2021dynamic,banerjee2020uniform}. Intuitively $\pip$ uses the returns accumulated by hindsight-optimal actions to score and rank good decisions, instead of mimicking the hindsight-optimal actions directly. 
%In the example of Section~\ref{sec:example}, this value-aware $\pip$ coincides with $\pi^\star$ whereas the value-agnostic imitation policy does not. 
However, it is always the case that $\Vip_t(s) \geq V^\star(s)$ and $\Qip_t(s,a) \geq Q^\star(s,a)$~\citep{chong2000framework}, and so $\pip$ can be a sub-optimal surrogate for $\pi^\star$.
In~\cref{sec:theory} we will bound its gap to $\pi^\star$ with a novel quantity called \emph{hindsight bias} and discuss the implications for learning.

\subsection{Imitating the ``Bayes Selector''}

Executing $\pip$ requires frequent calls to the hindsight planner online to evaluate $Q_t^\dag(S_t, a)$ on every observed state. Therefore running this \emph{tabular} policy (i.e., considering every state separately) can be prohibitively costly when policy execution must also satisfy latency constraints (e.g., in VM allocation).  Additionally, in resource allocation domains it is infeasible to enumerate all possible states.
We describe a family of algorithms in \cref{alg:training}
that offloads the hindsight planner invocations to an offline training phase and 
distills $\pip$ into a computationally feasible policy of neural networks that can extrapolate to unseen states.

\begin{algorithm}
\begin{algorithmic}[1]
%   \hrulefill \\
  \STATE {\bfseries Input:} An empty buffer $\mathcal{B}$, simulator for $f$ and $r$, initial policy $\pi$, dataset $\mathcal{D}$, number of epochs $K$.
   \FOR{$k = 1,2,\dotsc,K$}
    \STATE Sample a trace $\bfxi$ from $\D$ 
    \STATE Sample trajectory $\{s_1 \dots s_T\}$ from $\pi$ using the $f, r, \bfxi$
    \STATE Label sampled states from the trajectory with $\{\Qip_t(s_t,a, \bfxi_{\geq t}) : a \in \A, t \in [T]\}$
    \STATE Aggregate the labeled data into the buffer $\mathcal{B}$
    \STATE Optimize $\pi$ on $\mathcal{B}$ either with Hindsight MAC (Equation~\ref{eq:guided_mac}) or Hindsight Q-Distillation (Equation~\ref{eq:guided_q}) 
   \ENDFOR
\end{algorithmic}
\captionof{algorithm}{\GuidedRL}
\label{alg:training}
\end{algorithm}

\srsedit{
We use online imitation learning (IL) with $\pip$ as the expert policy, specifically, the AggreVaTE algorithm~\citep{ross2014reinforcement} with expert $\Qip(s,a)$ values from the hindsight planner.  
By interleaving the trajectory sampling and policy updates we avoid querying $\Qip(s,a)$ (and hence solving hindsight planning problems) in uninformative states. %and are more robust than behavior cloning.  
We note that all of this interactive querying of the expert occurs during {\em offline} training, and hence the planning is not needed online during test time.
Since we allow $\bfxi$ to be arbitrarily correlated across $t$, in \cref{alg:training} we sample an entire trace $\bfxi^i$ from $\D$.  However, if $\bfxi$ is iid across time-steps $t$ we can enhance step 3 by resampling a single $\xi_t$ at each $t$.
}

% \srscomment{TODO: Comment on regenerative cycles?}

Many popular RL algorithms~\citep{konda2000actor,van2016deep,schulman2017proximal} compute $Q$-values or advantages via Monte Carlo estimation. HL can be easily incorporated in them by replacing the Monte Carlo estimates (e.g., $Q^\pi(s,a)$) with Step 5 of \cref{alg:training} (i.e., $\Qip(s,a)$). 
We outline two such modifications below, one using a policy network and the other using a critic network. Since we can simulate $\Qip_t(s,a, \bfxi_{\geq t})$ for any action, we use the \emph{common random numbers} insight~\citep{ng2000pegasus} for variance reduction and sum across all actions in both instantiations. This additional sum over actions trick is not critical, and is used in our experiments only because the action spaces are relatively small.

\noindent {\bf Hindsight MAC:} We modify Mean Actor Critic~\citep{asadi2017mean} by incorporating $\Qip$ into differentiable imitation learning~\citep{sun2017deeply}. For the policy represented with a neural network $\pi_\theta$, consider the loss function:
\ifdefined\arxiv
\begin{equation} \label{eq:guided_mac}
 \ensuremath{\ell}(\pi_\theta) = \E_{\bfxi} [ \sum_{t=1}^T \E_{S_t \sim \Pr_t^{\pi}} \sum_{a \in \A} \pi_\theta(a \mid S_t) \Qip_t(S_t, a, \bfxi_{\geq t})].
\end{equation}
\else
\begin{equation} \label{eq:guided_mac}
\small \ensuremath{\ell}(\pi_\theta) = \E_{\bfxi} [ \sum_{t=1}^T \E_{S_t \sim \Pr_t^{\pi}} \sum_{a \in \A} \pi_\theta(a \mid S_t) \Qip_t(S_t, a, \bfxi_{\geq t})].
\end{equation}
\fi

\noindent {\bf Hindsight Q-Distillation:} We can represent $\Qip$ values directly using a neural network critic $Q^\theta$.   
The policy is defined implicitly w.r.t. $Q^\theta$ as $\pi_\theta = \argmax_{a \in \A} Q^\theta(s,a)$.  
The loss function $\ensuremath{\ell}(\pi_\theta)$ to fit $Q^\theta$ is:
\ifdefined\arxiv
\begin{equation}  \label{eq:guided_q}
 \ensuremath{\ell}(\pi_\theta) = \E_{\bfxi} [ \sum_{t=1}^T \E_{S_t \sim \Pr_t^{\pi}}[ \sum_{a \in \A} (Q_t^\theta(S_t,a) - \Qip_t(S_t, a, \bfxi_{\geq t}))^2 ]].
\end{equation}
\else
\begin{equation}  \label{eq:guided_q}
  \small  \ensuremath{\ell}(\pi_\theta) = \E_{\bfxi} [ \sum_{t=1}^T \E_{S_t \sim \Pr_t^{\pi}}[ \sum_{a \in \A} (Q_t^\theta(S_t,a) - \Qip_t(S_t, a, \bfxi_{\geq t}))^2 ]].
\end{equation} \fi
We optimize a sample approximation of Equation~\ref{eq:guided_mac} or~\ref{eq:guided_q} using the dataset $\D$.
The current policy defines the state sampling distribution, while $\Qip$ gives the long-term reward signal for each action.

\section{Theoretical Guarantees}
\label{sec:theory}

Compared with RL, HL uses the known dynamics $f$ and rewards $r$ of an Exo-MDP, the hindsight planner of \cref{ass:planning_oracle}, and the dataset $\D$ to trade-off a small asymptotic bias (which we define in \cref{eq:delta ip gap}) for a large reduction in the variance from exogenous inputs. 
To prove this, we first characterize the regret of $\pip$ in terms of a novel quantity, called the \emph{hindsight bias}, and show that it is negligible in many resource management problems. 
Next we show that HL is sample efficient, and can imitate the $\pip$ policy with faster optimization than RL. 
%Taking them together, we derive an important implication: HL can be extremely sample-efficient when the hindsight bias is small.  
Finally, even if hindsight bias is large in an application, there are techniques to reduce it, including combinations of HL and RL in the future. %; this is left for future work. 
%\jn{very nice lead in! but last point has pushed to conclusion so need to reword.}
%\asedit{Attempted a rewrite of the last sentence. Please refine.}
\begin{definition}
The hindsight bias of $\pstar$ versus $\pip$ at time $t$ in state $s$ is defined as
\begin{align} \label{eq:delta ip gap}
\DeltaIPGap_t(s)  \coloneqq \hspace{1mm} &\Qip_t(s,\pip_t(s)) - Q^\star_t(s, \pip_t(s)) + \nonumber \\ &Q^\star_t(s, \pi^\star_t(s)) - \Qip_t(s, \pi^\star_t(s)).
\end{align}
\end{definition}
\srsedit{Consider the over-estimation error of the hindsight planner $\Omega_t(s,a) \coloneqq \Qip_t(s,a) - Q^\star_t(s, a)$ (referred to as {\em local loss} in \citet{mercier2007performance}).} \cref{eq:delta ip gap} subtracts the over-estimation of $\pi^\star(s)$ from the over-estimation of $\pip(s)$ and so the hindsight bias can be small even if the over-estimation is large; as an extreme example consider the case when the argmax of $\Qip$ and $Q^\star$ coincide (see~\cref{lem:app_eq_equiv_ip}). 
We show that $\DeltaIPGap_t(s)$ bounds the regret of $\pip$.
\begin{theorem}
\label{thm:ip_gap_regret}
$\Regret(\pip) \leq \sum_{t=1}^T \E_{S_t \sim \Pr_t^{\pip}}[\DeltaIPGap_t(S_t)],$ where $\Pr_t^{\pip}$ denotes the state distribution of $\pip$ at step $t$ induced by the exogenous process.
In particular, if $\DeltaIPGap_t(s) \leq \Delta$ for some constant $\Delta$ then we have: $\Regret(\pip) \leq \Delta \sum_{t=1}^T \E_{S_t \sim \Pr_t^{\pip}}[\Pr(\pip_t(S_t) \neq \pi^\star(S_t))].$
\end{theorem}
\srsedit{Regret bounds of this form appear in the prophet inequality literature~\citep{dutting2020prophet,vera2021online} however against a much stronger benchmark of the hindsight planner, where the regret is defined as $\Vip(s_1) - V_1^{\pi}(s_1)$. 
\citet{mercier2007performance} (Theorem 1) show that regret is bounded by the worst-case hindsight bias on the states visited by {\em any} decision policy.  However, there are examples (see~\cref{lem:mercier_example}) where their bound is large and one could incorrectly conclude that hindsight optimization should not be applied.  In contrast, \cref{thm:ip_gap_regret} is tighter, requiring that the hindsight bias be small only on states visited by $\pip$.}  

As corollaries of \cref{thm:ip_gap_regret}, 
the regret of $\pip$ is \emph{constant} for many resource management Exo-MDPs, such as stochastic online bin packing with i.i.d. arrivals. See~\cref{app:bin_packing_results} (\cref{thm:delta_ip_bin_packing}) for a formal statement of the result, and a discussion of related results from the literature. 
%The na\"ive hindsight imitation of~\citept{fang2021universal} requires that $\Pr(\pip_t(S_t) \neq \pi^\star(S_t)) \le o(t)$ to be no-regret (as discussed in~\cref{sec:example}) and~\cref{thm:ip_gap_regret} shows that the regret of $\pip$ is always smaller. 

Finally, we show that the performance of the best policy produced by \cref{alg:training} will converge to that of $\pip$ under standard assumptions for online imitation learning~\citep{sun2017deeply,ross2011reduction,yan2021explaining}.  
We use the overline notation to denote quantities for an empirical MDP whose exogenous distribution $\PXi$ is replaced with the empirical one $\overline{\PXi} \sim \D$.

\begin{theorem} \label{th:weak theorem}
Let $\overline{\pi}^\dagger$ denote the hindsight planning surrogate policy for the empirical Exo-MDP w.r.t. $\mathcal{D}$. Assume $\overline{\pi}^\dagger \in \Pi$ and \cref{alg:training} achieves no-regret in the optimization problem of Equation~\ref{eq:guided_mac}. 
Let $\pi$ be the best policy from \cref{alg:training}.
Then, for any $\delta\in(0,1)$, with probability $1-\delta$,
\ifdefined\arxiv
\begin{align*}
    \Regret(\pi) &\leq 2 T \sqrt{\frac{2 \log(2|\Pi| / \delta)}{N}}+\sum_{t=1}^T \E_{S_t \sim \overline{\textrm{Pr}}_t^{\overline{\pi}^\dagger}}[\bar{\Delta}_t^{\dagger}(S_t)] + o(1),
\end{align*}
\else
\begin{align*}
    \Regret(\pi) &\leq 2 T \sqrt{\frac{2 \log(2|\Pi| / \delta)}{N}}+ \\ & \sum_{t=1}^T \E_{S_t \sim \overline{\textrm{Pr}}_t^{\overline{\pi}^\dagger}}[\bar{\Delta}_t^{\dagger}(S_t)] + o(1),
\end{align*}
\fi
for $\bar{\Delta}_t^{\dagger}$ the sample average of \eqref{eq:delta ip gap}, and $\overline{\textrm{Pr}}_t^{\overline{\pi}^\dagger}$ is the state probability of $\overline{\pi}^\dagger$ 
in the empirical MDP.
\end{theorem}

\srsedit{In \cref{sec:exogenous_value} we also derive sample complexity results for PTO and RL when applied to Exo-MDPs.  In PTO (see \cref{thm:pto_regret}) the guarantees scale quadratically in $T$ and depend on the complexity of $\PXi$ (which in the worst case can scale by $|\Xi|^T$ if the $\xi_t$ are strongly correlated across $t$).  In contrast, \cref{th:weak theorem} scales linearly in $T$ and is only affected by the randomness over the induced $\Qip$ values and not directly by the complexity of $\PXi$. 
Unlike guarantees for RL (see \cref{thm:erm_regret}), \cref{th:weak theorem} is not asymptotically consistent due to the hindsight bias.  However, RL methods have asymptotic consistency {\em only if} they converge to the optimal policy in the empirical MDP. This convergence is an idealized computation assumption that hides optimization issues when studying statistical guarantees and is incomparable to \cref{ass:planning_oracle} for the hindsight planner (for which we show several examples in \cref{sec:planning_oracle_open_control}).}

Although hindsight bias is small for many practical Exo-MDPs, this is not universally true as we now show. Since hindsight bias bounds the regret of $\pip$ (\cref{thm:ip_gap_regret}), Exo-MDPs with large regret must also have large hindsight bias.
\begin{theorem}
\label{lem:neg_instance}
There exists a set of Exo-MDPs such that $\Regret(\pip) \geq \Omega(T)$.
\end{theorem}
Hence, for an arbitrary Exo-MDP the hindsight bias needs to be properly controlled to successfully leverage hindsight planning~\citep{chong2000framework}. 
The information relaxation literature~\citep{brown2014information,el2020lookahead} subtracts a carefully chosen baseline $b(s,a,\bfxi)$ in special cases of \cref{eq:qip_value}; viewed through our results, this procedure essentially reduces the hindsight bias of the eventual $\pip$.  
Building on this technique, we anticipate future works to design HL variants that are more robust to hindsight bias.

\vspace{-2mm}
\section{Experiments}
\label{sec:experiments}

\ifdefined\arxiv
\begin{table*}[t]
\caption{Performance of heuristics, $\pip$, RL and HL algorithms on multi-secretary and ARM problems benchmarked against the optimal policy.  Values are $V^\pi$, the performance of the compared policy evaluated using the Bellman equations, and error bars computed via a standard normal approximation averaging over the randomly sampled dataset.  \srsedit{Since this is a tabular problem, \GuidedMAC and \GuidedQD are identical, so we report the performance of both as \GuidedMAC.}  \srsedit{Relative performance compared against $V^{\pi^\star}$ is shown in parenthesis}.
} \label{tab:multi_sec_performance}
\setlength\tabcolsep{0pt} % let LaTeX compute intercolumn whitespace
\smallskip 
\begin{tabular*}{\columnwidth}{@{\extracolsep{\fill}}lcccccc}
\toprule
  Multi-Secretary & \multicolumn{2}{c}{$T = 5$} & \multicolumn{2}{c}{$T = 10$} & \multicolumn{2}{c}{$T = 100$} \\
\midrule
 % Performance Upper Bound & $15.7\%$ & $2.77\%$ & $0.48\%$ \\
 %\midrule 
 %Random & $-16.8\%$ & $-16.7\%$ & $-22.1\%$ \\
   $\pi^\star$ & $2.22$ & & $5.09$ & & $49.9$ & \\
  $\pip$ & $2.21$ & $(-0.5\%)$ & $4.95$ & $(-2.7\%)$	& $49.85$ & $(-0.2\%)$ \\ \midrule
 Greedy & $1.67$ & $(-24.8\%)$ & $3.81$ & $(-25.1\%)$ & $38.76$ & $(-22.4\%)$ \\
 Tabular Q-learning & $1.67 \pm 0.0032$ & $(-24.8\%)$ & $3.81 \pm 0.0037$ & $(-25.1\%)$ & $48.10 \pm 0.027$ & $(-3.7\%)$ \\
 Hindsight MAC & $\mathbf{2.17 \pm 0.0040}$ & $(-2.4\%)$ & $\mathbf{4.98 \pm 0.0035}$ & $ (-2.1\%)$ & $\mathbf{48.65 \pm 0.022}$ & $(-2.6\%)$ \\
 \hline
 \hline
 ARM & \multicolumn{2}{c}{$T = 5$} & \multicolumn{2}{c}{$T = 10$} & \multicolumn{2}{c}{$T = 100$} \\
\midrule
$\pi^\star$ & $1.89$ & & $3.72$ & & $39.03$ & \\
$\pip$ & $1.88$ & $(-0.3\%)$ & $3.61$ & $(-2.9\%)$ & $37.27$ & $(-4.5\%)$ \\ \midrule
Greedy & $1.39$ & $(-26.5\%)$ & $2.50$ & $(-32.9\%)$ & $31.54$ & $(-19.2\%)$ \\
Tabular Q-learning & $1.28 \pm 0.015$ & $(-32.2\%)$ & $2.75 \pm 0.064$ & $(-26.0\%)$ & $32.59 \pm 0.25$ & $(-16.5\%)$ \\
Hindsight MAC & $\mathbf{1.81 \pm 0.032}$ & $(-4.0\%)$ & $\mathbf{3.30 \pm 0.095}$ & $(-11.4\%)$ & $\mathbf{33.84 \pm 0.37}$ & $(-13.3\%)$ \\
\bottomrule
\end{tabular*}
\end{table*}
\else
\begin{table*}[!t]
\caption{Performance of heuristics, $\pip$, RL and HL algorithms on multi-secretary and ARM problems benchmarked against the optimal policy.  Values are $V^\pi$, the performance of the compared policy evaluated using the Bellman equations, and error bars computed via a standard normal approximation averaging over the randomly sampled dataset. \srsedit{Since these are tabular problems, \GuidedMAC and \GuidedQD are identical, so we report the performance of both as \GuidedMAC.} \srsedit{Relative performance compared against $V^{\pi^\star}$ is shown in parenthesis}.
} \label{tab:multi_sec_performance}
\setlength\tabcolsep{0pt} % let LaTeX compute intercolumn whitespace
\smallskip 
\begin{tabular*}{2\columnwidth}{@{\extracolsep{\fill}}lcccccc}
\toprule
  Multi-Secretary & \multicolumn{2}{c}{$T = 5$} & \multicolumn{2}{c}{$T = 10$} & \multicolumn{2}{c}{$T = 100$} \\
\midrule
   $\pi^\star$ & $2.22$ & & $5.09$ & & $49.9$ & \\
  $\pip$ & $2.21$ & $(-0.5\%)$ & $4.95$ & $(-2.7\%)$	& $49.85$ & $(-0.2\%)$ \\ \midrule
 Greedy & $1.67$ & $(-24.8\%)$ & $3.81$ & $(-25.1\%)$ & $38.76$ & $(-22.4\%)$ \\
 Tabular Q-learning & $1.67 \pm 0.0032$ & $(-24.8\%)$ & $3.81 \pm 0.0037$ & $(-25.1\%)$ & $48.10 \pm 0.027$ & $(-3.7\%)$ \\
 Hindsight MAC & $\mathbf{2.17 \pm 0.0040}$ & $(-2.4\%)$ & $\mathbf{4.98 \pm 0.0035}$ & $ (-2.1\%)$ & $\mathbf{48.65 \pm 0.022}$ & $(-2.6\%)$ \\
 \hline
 \hline
 ARM & \multicolumn{2}{c}{$T = 5$} & \multicolumn{2}{c}{$T = 10$} & \multicolumn{2}{c}{$T = 100$} \\
\midrule
$\pi^\star$ & $1.89$ & & $3.72$ & & $39.03$ & \\
$\pip$ & $1.88$ & $(-0.3\%)$ & $3.61$ & $(-2.9\%)$ & $37.27$ & $(-4.5\%)$ \\ \midrule
Greedy & $1.39$ & $(-26.5\%)$ & $2.50$ & $(-32.9\%)$ & $31.54$ & $(-19.2\%)$ \\
Tabular Q-learning & $1.28 \pm 0.015$ & $(-32.2\%)$ & $2.75 \pm 0.064$ & $(-26.0\%)$ & $32.59 \pm 0.25$ & $(-16.5\%)$ \\
Hindsight MAC & $\mathbf{1.81 \pm 0.032}$ & $(-4.0\%)$ & $\mathbf{3.30 \pm 0.095}$ & $(-11.4\%)$ & $\mathbf{33.84 \pm 0.37}$ & $(-13.3\%)$ \\
\bottomrule
\end{tabular*}
\end{table*}
\fi
We evaluate \GuidedRL on three resource management domains with different characteristics \srsedit{(our code is available at \href{https://github.com/seanrsinclair/hindsight-learning}{https://github.com/seanrsinclair/hindsight-learning})}. First, \textbf{Multi-Secretary}, where the exogenous inputs are the arriving candidates' qualities and the hindsight bias is negligible (see Theorem 4.2 of \citet{banerjee2020constant}).  Next we consider \textbf{Airline Revenue Management} where the exogenous inputs are the current request's (resource demands, revenue) and the hindsight bias is small (see \cref{thm:delta_ip_bin_packing}).  
Lastly, we consider \textbf{VM Allocation} where the exogenous inputs are VM requests and the hindsight bias is unknown. 
In \cref{sec:input_driven_mdp_examples,sec:planning_oracle_open_control} we show explicit constructions of the Exo-MDP and hindsight planner for each domain. 
For the first two domains, we use traces drawn from benchmark distributions and evaluate $\pip$ using Monte-Carlo rollouts. For the VM allocation domain we use real-world historical traces extracted from a large public cloud provider.

\vspace{-2mm}
\subsection{Multi-Secretary Problems}

Multi-secretary is the generalization of the classic secretary problem \citep{buchbinder2009secretary}, where $T$ candidates arrive sequentially but only $B$ can be selected.  An arriving candidate at time $t$ has ability $r_t \in (0,1]$ drawn i.i.d.~from a finite set of $K$ levels of expertise.  At each round, if the decision-maker has remaining budget (i.e., has chosen less than $B$ candidates thus far), they can \emph{accept} a candidate and collect the reward $r_t$, or \emph{reject} the candidate. The goal is to maximize the expected cumulative reward.

When $T$ is large relative to $N$, we can expect historical traces to provide sufficient information about $\PXi$. 
Recent results in~\citet{banerjee2020constant} use the ``Bayes Selector'' with a single exogenous trace to derive a policy with constant regret for a sufficiently large $T$. This suggests that the hindsight bias is negligible in this regime. Our experiment setup is identical to~\citet{banerjee2020constant} and is included in the supplementary material. %A simulator for multi-secretary problems is straightforward, and 
We use $T=\{5,10,100\}, B=\frac{3}{5}T, K=4$ and $N = 1$. 
The Greedy heuristic accepts the first $B$ candidates regardless of their quality.
ML methods use a single trace sampled from the non-stationary candidate arrival process, and use a policy that maps a $3$-dim state (the rounds and budget remaining, and the current candidate ability) to an \emph{accept} probability. 
For the hindsight planner, we use Equation~$2$ from~\citet{banerjee2020constant} which implements a linear program with $2K$ variables. The Bayes Selector $\pip$ solves the LP with the historical trace in every possible state, and is only feasible for problems with small LPs and state spaces. 
We evaluate each policy using dynamic programming with the true arrivals distribution.

In \cref{tab:multi_sec_performance} (Top) we see that the HL algorithm (Hindsight MAC) is competitive with the optimal policy (which depends on the unknown $\PXi$ distribution) using just a \emph{single exogenous trace}. RL (implemented via Tabular Q-learning) however is very sample inefficient; for small $T \le 10$ it performs no better than the sub-optimal Greedy heuristic. 

\subsection{Airline Revenue Management}

 Airline Revenue Management~\citep{littlewood1972forecasting} is a special case of the multi-dimensional Online Bin Packing (OBP) problem (OBP exhibits vanishing hindsight bias via \cref{thm:delta_ip_bin_packing}).  The agent has capacity $B_k$ for $K$ different resources.  At each round, the decision-maker observes a request $A_t \in \mathbb{R}_+^K$ (the consumed capacity in each resource dimension), alongside a revenue $f_t$.  The algorithm can either \emph{accept} the request (obtaining revenue $f_t$ and updating remaining capacity according to $A_t$), or \emph{reject} it (note that partial acceptance is not allowed). The goal of the decision-maker is to maximize the expected revenue.

We use \texttt{ORSuite}~\citep{archer2022orsuite} as an ARM simulator with fixed capacity, i.i.d.~request types and job distribution, using a setting
from~\citet{vera2021bayesian} which shows large regret for existing heuristics. 
We vary $T$ from $5$ to $100$,  
and compute $\pi^\star$ through dynamic programming. Both RL (Tabular Q-learning) and HL (Hindsight MAC) were trained on the same dataset of $N = 100$ traces.  %\srsedit{Since this is a tabular problem, \GuidedMAC and \GuidedQD are identical, so we report the performance of both as \GuidedMAC.}

In \cref{tab:multi_sec_performance} (Bottom)
we see that HL outperforms RL but is not as good as the Bayes Selector $\pip$. Since the state space is much larger in ARM, HL has not sampled all the relevant states for imitating $\pip$ and so its performance suffers. Moreover, as $T$ increases the performance of RL again approaches HL, highlighting that HL strikes a better bias-variance trade-off to perform better with limited data.  

\subsection{VM Allocation}

Arguably, our experiments thus far have been advantageous to HL because the hindsight bias is known to be small.  
We next examine HL on a large-scale allocation problem: allocating virtual machines (VMs) to physical servers. In contrast to previous experiments, in this problem, the bias can be arbitrary, and enumerating all states or solving the Bellman equations is infeasible. From an algorithmic perspective, the allocation problem is a multi-dimensional variant of OBP with stochastic (not i.i.d.) arrivals {and departures} (hence, the bound on hindsight bias does not apply). \srsedit{Due to problem scale we cannot compute $\pi^\star$ exactly with dynamic programming, so we instead benchmark a policy's performance with respect to a {\bf BestFit} heuristic.}

In the VM allocation problem we have $K$ physical machines (PM), each with a fixed capacity limit for both CPU and memory.
VM requests arrive over time, each with an associated CPU, memory requirement, and a lifetime (or duration); the lifetime is in principle unknown to the provider, but it can be predicted \citep{cortez2017resource}. Accordingly, we study below two variants where lifetime information is either available (\cref{subsubsec:stylized}) or not (\cref{sec:nylon}). The decision-maker must assign a feasible PM for the VM or reject the request (incurring a large penalty).
A PM is considered active when one or more VMs are assigned to it. The objective is to
minimize the total time that the machines remain active, normalized by the time horizon $T$ (i.e., the average number of active PMs per unit time). This objective is critical for cloud efficiency; see~\citet{buchbinder2021online} for a longer discussion. 

\subsubsection{Stylized environment} \label{subsubsec:stylized}
To gain insights into the problem domain, we consider first a stylized setting where the VMs arrive at discrete time steps (in practice, time is continuous, and VMs may arrive at any point in time); furthermore, the VM lifetime is perfectly predicted upon arrival. 
To carry out the experiments, 
we use the MARO simulator~\citep{MARO_MSRA} with $K=80$ PMs and $T=288$ (reflecting one day period discretized into time steps, each of which represents $5$ minutes of actual time).
MARO replays the VM requests in the Azure Public Dataset~\citep{cortez2017resource}\footnote{This dataset contains a uniform sample of VM requests received in a real data center over a one month period in 2019.} as follows: 
all VM requests arriving within a time step (i.e., $5$ minutes of actual time) are buffered and instead arrive simultaneously at the next discrete time step. 
The first half of the resulting trace is used for training and the remaining trace for testing. To evaluate any policy, we sampled $50$ different one-day traces from the held-out portion and report the average value of the objective function. 
For the hindsight planner, we implemented the integer program of \cref{sec:vm_oracle} in Gurobi and solved its linear relaxation. 
We used a modified objective function (inverse packing density) which is linear in the decision variables for computational feasibility (see discussion in \cref{sec:experiments_appendix}).

\begin{table}[!tb]
\caption{Performance of heuristics, RL, and HL algorithms on VM allocation benchmarked against the Best Fit baseline. $\star$ indicate significant improvement and $\circ$ indicate significant decrease, over BestFit by Welch's $t$-test. %\jn{t-test compared to what? BestFit? It's strange to me that you are marking significant drops in performance for DQN and Bin Packing. makes it hard to glance at table at determine HL is ``better''. can you just mark sig improvement? or use different marks for improv vs degrade?}
} \label{tab:performance}
\setlength\tabcolsep{0pt} % let LaTeX compute intercolumn whitespace

\smallskip 
\begin{tabular*}{\columnwidth}{@{\extracolsep{\fill}}lc}
\toprule
  Algorithm  & PMs Saved \\
\midrule
  Performance Upper Bound (Oracle) & $4.96^\star$ \\

 \midrule 
 Best Fit & $0.0$ \\
 %\asedit{Best Fit with lifetime information?}
 Bin Packing & $-1.05^\circ$ \\
% Round Robin & $-1.24^*$ \\
% Random & $-1.87^*$ \\
\midrule
 DQN & $-0.64$ \\
 MAC & $-0.51^\circ$ \\ 
 PPO & $-0.50$ \\
\midrule
 PG with Hindsight Baseline~\citep{mao_variance_2019} & $-0.057$ \\
 \textbf{\GuidedMAC} & $\mathbf{4.33^\star}$ \\
 \GuidedQD & $3.71^\star$ \\
\bottomrule
\end{tabular*}
\end{table}

We compare four allocation approaches.
(1) \textbf{Heuristics}: We consider several heuristics that have been widely used for different bin packing  problems (round robin, first fit, load balance, etc.). We report here the results for the best performing heuristic \emph{BestFit}, which has been widely applied in practice \citep{panigrahy2011heuristics}, in particular for VM allocation \citep{hadary_protean_2020}; in a nutshell BestFit chooses the machine which leaves less amount of unused resources (%resources are weighted in proportion to their scarcity, 
see \citep{panigrahy2011heuristics} for details). (2) \textbf{RL}: We benchmark several popular RL algorithms including Double-DQN~\citep{van2016deep}, MAC~\citep{asadi2017mean} and PPO~\citep{schulman2017proximal}. (3) \textbf{Hindsight approaches:} We test Hindsight MAC (Equation~\ref{eq:guided_mac}) and Hindsight Q-Distillation (Equation~\ref{eq:guided_q}). In addition, we test ~\citet{mao_variance_2019} which uses hindsight-aware control variates to reduce the variance of policy gradient (PG) methods. (4) \textbf{Oracle:} We report the $\plan(1, \bfxi, s_1)$ (objective of the relaxed IP) evaluated on the test traces, and use the experiment outcome as a performance upper bound. % on the performance we can expect from any allocation policy. 
%\jn{if PerfUpperBound in table corresponds to the oracle, maybe list ``(Oracle)'' in table row to clearly connect}

All the ML methods use a $4$-layer neural net to map features describing a PM and the VM request to a score. 
In \cref{sec:experiments_appendix}, we detail the network design, state features and the hyper-parameter ranges we used.
\cref{tab:performance} reports the \emph{PMs Saved} which is the regret for the objective function relative to \emph{BestFit}, averaged across the evaluation traces. 
We created realistic starting state distributions by executing the \emph{BestFit} heuristic for a random duration (greater than one day).
Error bars are computed by $(i)$ training each algorithm over 20 random seeds (neural network parameters and the offline dataset) and $(ii)$ evaluating each algorithm on 50 one-day traces sampled from the hold-out set. We then compared its performance to Best Fit on each evaluation trace with a paired $t$-test of value $p = 0.05$. 
We observe that HL outperforms all the heuristics and RL methods, requiring $4$ fewer PMs on average (or a $5\%$ improvement in relative terms, since $K=80$).

\subsubsection{Real-World Resource Allocation}
\label{sec:nylon}

We now consider a more realistic setting where VM arrivals are in continuous time and the allocation agent has no information about VM lifetimes.  
\srsedit{In real-world settings, the scale of clusters can be much larger than the one considered in \cref{subsubsec:stylized}, see~\citet{hadary_protean_2020}; scaling ML algorithms to larger inventory sizes is an ongoing research direction. %thousands of PMs\footnote{We avoid listing the exact cluster sizes to preserve confidentiality for the cloud operator.}.
}
Furthermore, each VM arrival or departure is modeled as a step in the Exo-MDP, resulting in much larger time horizon $T$ (order of 100k). Our total trace period was $88$ days, and we used the exact methodology as in \cref{subsubsec:stylized} to obtain the training and test datasets. Due to the large scale, even the linear relaxation of the integer program was not tractable. Consequently, we carefully designed a \emph{hindsight heuristic} (\cref{alg:hindsight_heuristic}) to derive $\plan(t, \bfxi, s)$. The heuristic prioritizes VMs according to both their size and lifetime (see \cref{app:heuristic_performance}).

\ifdefined\arxiv
\begin{table}[!t]
\caption{Average number of PMs saved by RL and HL policies across $5$ clusters, calculated over $44$ days and benchmarked against the production {\bf BestFit} heuristic.  $\star$ indicate significant improvement and $\circ$ indicate a significant decrease, over BestFit by Welch's $t$-test.}
\label{tab:nylon_results}
\setlength\tabcolsep{0pt} % let LaTeX compute intercolumn whitespace

\smallskip 
\begin{tabular*}{\columnwidth}{@{\extracolsep{\fill}}rccccc}
\toprule
  Cluster  & A & B & C & D & E \\
\midrule
  RL & $-0.14$ & $-0.35$ & $-0.27^\circ$ & $1.34^\star$ & $-0.37$ \\
  HL & ${\bf 3.20^\star}$ & ${\bf 1.35}$ & ${\bf 1.13^\star}$ & ${\bf 2.27^\star}$ & ${\bf 0.02}$ \\
\bottomrule
\end{tabular*}
\end{table}
\else
\begin{table}[!htb]
\caption{Average number of PMs saved by RL and HL policies across $5$ clusters, calculated over $44$ days and benchmarked against the production {\bf BestFit} heuristic.  $\star$ indicate significant improvement and $\circ$ indicate a significant decrease, over BestFit by Welch's $t$-test.}
\label{tab:nylon_results}
\setlength\tabcolsep{0pt} % let LaTeX compute intercolumn whitespace

\smallskip 
\begin{tabular*}{\columnwidth}{@{\extracolsep{\fill}}rccccc}
\toprule
  Cluster  & A & B & C & D & E \\
\midrule
  RL & $-0.14$ & $-0.35$ & $-0.27^\circ$ & $1.34^\star$ & $-0.37$ \\
  HL & ${\bf 3.20^\star}$ & ${\bf 1.35}$ & ${\bf 1.13^\star}$ & ${\bf 2.27^\star}$ & ${\bf 0.02}$ \\
  %A &  &  & $9.97$\\
  %B &  &  & $29.95$\\
  %C &  &  & $12.42$\\
  %D &  &  & $6.26$\\
  %E &  &  & $16.11$\\
\bottomrule
\end{tabular*}
\end{table}
\fi

%The {\bf BestFit} strategy used in production used a proprietary implementation that prioritizes between CPU and memory depending on their scarcity. %This heuristic is used not only to benchmark the evaluation results, but to also produce realistic cluster states at the start of the evaluation period. This was done by executing the {\bf BestFit} strategy on the first $44$ days of data, and then switching over to the evaluation policy (which the agent will learn) on the hold-out set from day $44$ onward. 
%We hope that better policies will use, on average over the $44$ days in the test set, fewer PMs than what {\bf BestFit} required. 

We adapt \GuidedMAC (HL) and compare it with MAC~\cite{asadi2017mean} (RL), where both used the same network architecture, which embeds VM-specific and PM-specific features using a $6$-layer GNN. The resulting architecture is rich enough to represent the {\bf BestFit} heuristic, but can also express more flexible policies. 
The Bayes Selector $\pip$ is infeasible to run within the latency requirements for VM allocation, and so is not compared. 
    
 \cref{tab:nylon_results} summarizes the results over five different clusters. Unlike \cref{subsubsec:stylized} where we sampled many $1$-day periods from the test trace, the demands on the real clusters were non-stationary throughout the test period. Hence we report results on the entire $44$-day test trace. We trained each algorithm over $3$ random seeds and evaluated $5$ rollouts to capture the variation in the cluster state at the start of the evaluation trace. Unlike the other experiments, we cannot account for the randomness in exogenous samples because we only have one evaluation trace for each cluster.  Error metrics are computed with a paired $t$-test of value $p = 0.05$.
 
 We observe that RL exhibits unreliable performance: in fact, it is sometimes worse than {\bf BestFit}, intuitively this can happen because it overfits to the request patterns seen during training. In contrast, HL always improved over {\bf BestFit}, with relative improvements of $0.1\%-1.6\%$ over RL and $0.1\%-0.7\%$ over BestFit (note cluster sizes are much larger here than \cref{subsubsec:stylized}).
As noted earlier, any percent-point (or even fractions of a percent) improvement implies millions of dollars in savings. The relative gains obtained here are more modest than in the stylized setting due to a combination of reasons. First, intuitively, every packing ``mistake" is more costly in a smaller cluster, meaning that algorithms have more room to shine in smaller-scale problems. Second, using a heuristic for hindsight learning is inherently sub-optimal. Lastly, we have not used \emph{any} information about VM lifetime; an interesting direction for future work is incorporating lifetime predictions to HL.

\section{Conclusion}
\label{sec:conclusion}

\srsedit{In this paper, we introduced \GuidedRL (HL) as a family of algorithms that solve a subclass of MDPs with exogenous inputs, termed Exo-MDPs. Exo-MDPs capture a variety of important resource management problems, such as VM allocation. We show that the HL algorithms outperform both heuristics and RL methods. 
One direction for future work is to blend RL with HL using reward shaping~\cite{cheng2021heuristic} for solving Exo-MDPs with large hindsight bias. Intuitively, combining pessimistic value estimates from RL with optimistic estimates from HL can provide finer-grained control for trading-off hindsight bias and variance from exogenous inputs. Another direction is designing hindsight learning algorithms in ``nearly Exo-MDP'' environments where the action can have a limited impact on the exogenous variables, such as using recent results from~\citet{liu2021exploiting}.
}

% Acknowledgements should only appear in the accepted version.
\section*{Acknowledgements}

We thank Janardhan Kulkarni, Beibin Li, Connor Lawless, Siddhartha Banerjee, and Christina Yu for inspiring discussions. We thank Dhivya Eswaran, Tara Safavi and Tobias Schnabel for reviewing early drafts. Part of this work was done while Sean Sinclair and Jingling Li were research interns at Microsoft Research, and while Sean Sinclair was a visitor at Simons Institute for the semester on the Theory of Reinforcement Learning and Data-Driven Decision Processes program.  We gratefully acknowledge funding from the National Science Foundation under grants ECCS-1847393, DMS-1839346, CCF-1948256, CNS-195599, and CNS-1955997, the Air Force Office of Scientific Research under grant FA9550-23-1-0068, and the Army Research Laboratory under grants W911NF-19-1-0217 and W911NF-17-1-0094.

\bibliography{references}
\bibliographystyle{icml/icml2023}

    \newpage
    \appendix

\section{Table of Notation}
\label{sec:notation_table}

\begin{table*}[!h]
\begin{tabular}{p{0.25\linewidth}|p{0.73\linewidth}}
\textbf{Symbol} & \textbf{Definition} \\ \hline
\multicolumn{2}{c}{Problem Setting Specification}\\
\hline
$\mathcal{S}, \A, T, s_1, R, P$ & MDP primitives: state and action space, time horizon, starting state, \\& reward function and transition probabilities \\
$\X$ & Endogenous space for the system \\
$\Xi$ & Exogenous input space \\
$\PXi$ & Distribution over exogenous inputs \\
$f(s, a, \xi), r(s,a,\xi)$ & Underlying deterministic transition and reward as function of exogenous input \\
$s_t$ & MDP state space primitive, $(x_t, \bfxi_{<t})$ for shorthand \\
$s_t, a_t, \xi_t$ & State, action, and exogenous input for time step $t$ \\
$\bfxi$ & An exogenous input trace $(\xi_1, \ldots, \xi_T)$ \\
$\bfxi_{\geq t}$ & Component of an exogenous input trace $(\xi_{t}, \ldots, \xi_T)$ \\
$\Pi$ & Set of all admissible policies\\
$Q_t^\pi(s,a), V_t^{\pi}(s)$ & $Q$-Function and value function for policy $\pi$ at time step $t$ \\
$\pi^\star, Q_t^\star(s,a), V_t^\star(s)$ & Optimal policy and the $Q$ and value function for the optimal policy \\
$\Pr_t^\pi$ & State-visitation distribution for policy $\pi$ at time step $t$\\
$\D$ & Dataset containing $N$ traces of exogenous inputs $\{\bfxi^1, \ldots, \bfxi^N\}$ \\
$\plan(t,s,\bfxi_{\geq t}, f)$ & Hindsight optimal cumulative reward $r$ starting in state $s$ at time $t$ \\
\, & with exogenous inputs dictated by $\bfxi_{\geq t}$ and dynamics $f$ (see \cref{eq:plan_assumption}) \\
$Q^\pi_t(s,a,\bfxi_{\geq t}), V^\pi_t(s,a,\bfxi_{\geq t})$ & $Q$ and $V$ functions for policy $\pi$ starting from $s$\\
& where exogenous inputs are given by $\bfxi_{\geq t}$ (see \cref{thm:exog_bellman}) \\
$\BarExp{\cdot}$ & Empirical expectation taken where $\bfxi$ is sampled uniformly from $\D$ \\
$\perm$ & Policy obtained by ERM, i.e. solving $\argmax_{\pi \in \Pi} \BarExp{V_1^\pi(s_1, \bfxi)}$. \\
$\overline{\PXi}, \overline{Q}_t, \overline{V}_t$, $\overline{\pi}$ & Estimated exogenous distribution using $\D$, $Q_t^\star$, $V_t^\star$ estimates \\
& using the estimated distribution, and the resulting policy \\
\hline
\multicolumn{2}{c}{Hindsight Planner}\\
\hline
$\Qip_t(s,a,\bfxi_{\geq t})$ & $r(s,a,\xi_t) + \plan(t+1, \bfxi_{> t}, f(s,a,\xi_t))$ \\
$\Vip_t(s,\bfxi_{\geq t})$ & $\plan(t, \bfxi_{\geq t}, s)$ \\
$\Qip_t(s,a), \Vip_t(s,a)$ & Expectations of $\Qip_t(s,a,\bfxi_{\geq t})$ and $\Vip_t(s, \bfxi_{\geq t})$ over $\bfxi_{\geq t}$ \\
$\pip$ & Greedy policy with respect to $\Qip$ \\
$\DeltaIPGap_t(s)$ & Hindsight bias for state $s$ at time step $t$ (see \cref{eq:delta ip gap})\\
$\Delta$ & Absolute bound on $\DeltaIPGap_t(s)$ \\
%$\Pr_t^\pi$ & State visitation distribution under the policy $\pi$ at step $t$ \\
$\overline{\PXi}$ & Empirical distribution over $\bfxi$ from $\D$ \\
$\overline{\pip}$ & Greedy policy with respect to $\overline{\Qip}$ where true expectation\\
& over $\PXi$ replaced with $\overline{\PXi}$ \\
$\bar{\Delta}_t^{\dagger}(s)$ & Value of $\DeltaIPGap_t(s)$ where expectation over $\PXi$ replaced with $\overline{\PXi}$\\
$\overline{\textrm{Pr}}_t^{\pi}$ & State visitation distribution of $\pi$ at time step $t$ with exogenous dynamics $\overline{\PXi}$ \\
\hline
\end{tabular}
\caption{List of common notation}
\label{table:notation}
\end{table*}

\section{Detailed Related Work}
\label{sec:related_work_full}

There is an extensive literature on reinforcement learning and its connection to tasks in operations management; below, we highlight the work which is closest to ours, but for more extensive references, see \citet{sutton_reinforcement_2018, agarwal_reinforcement_nodate,powell_reinforcement_nodate} for RL, and \citet{bubeck_regret_2012, slivkins_introduction_2019} for background on multi-armed bandits.

\noindent \textbf{Information Relaxation for MDP Control}: Information relaxation as an approach for calculating performance bounds on the optimal $Q^\star$ function has been developed recently using rich connections to convex duality~\citep{vera2021bayesian,brown2021information,balseiro2019approximations,brown2017information,kallus2021stateful,mercier2007performance}.  As discussed in the main paper, for general problems, using hindsight planning oracles as in \cref{ass:planning_oracle} creates estimates for the $Q^\star$ value which are overly optimistic of their true value.  These differences can be rectified by introducing a control variate, coined \emph{information penalties}, to penalize the planner's access to future information that a truly non-anticipatory policy would not have.  The goal is to construct penalties which ensure that the estimates of $Q^\star$ are truly consistent for the underlying value.  This work has been developed explicitly in the context of infinite horizon MDPs \citep{brown2017information} where constructions are given for penalty functions as a function of the future randomness of the $\bfxi$ process.  Moreover, concrete algorithmic implementations using hindsight planners and information penalties has been developed in the tabular setting with no finite sample guarantees~\citep{el2020lookahead}.  Constructing these penalties in practice using suitable functions for arbitrary $\bfxi$ is unknown.   Our work differs by foregoing consistency of the estimates to instead focus on showing that in problem domains of interest, the policy which is greedy with respect to the hindsight planner is indeed consistent.  

\srsedit{\noindent {\bf Behaviour Cloning}: One approach for using hindsight information is behavior cloning.  This will compute the hindsight-optimal actions for every $\bfxi \sim \D$, and learn to imitate these actions using a feasible non-anticipatory policy~\citep{fang2021universal}. This is an instance of the \emph{probability matching} principle which is widely used in Thompson sampling~\citep{russo2018tutorial,hart2000pattern}. 
Unfortunately, this \emph{value-agnostic} approach is uncontrollably biased.  Consider the example in \cref{sec:example}.  Since the winds in $\D$ will be west $51\%$ of the time, the hindsight-optimal distribution (marginalizing over wind) is $\Pr(\text{route1})=0.51; \Pr(\text{route2})=0.49$. A non-anticipatory learner policy trained via behavior cloning will converge either to this distribution (if learning a stochastic policy) or its mode (using a deterministic policy). Both these policies are very sub-optimal compared to the optimal policy.
}

\noindent \textbf{Policy Based Methods with Control Variates}: Recent work has developed black box tools to modify policy gradient algorithms with control variates that depend on the exogenous trace.  Recall that a typical policy-based algorithm uses either on-policy data (or off-policy with re-weighted importance sampling strategies), and estimates the gradient in the return via
$$\nabla V^{\pi_\theta} = \E_{S \sim \Pr_t^{\pi_\theta}, A \sim \pi_\theta}[\nabla \log \pi_\theta(A \mid S) \hat{Q}^{\pi_\theta}(S,A)].$$
From here, most methods subtract an appropriate baseline (commonly taken to be an estimate of the value function) as a form of Rao-Blackwellization to reduce the variance of the estimator while incurring no additional bias.  In particular, for any function $b : \mathcal{S} \rightarrow \mathbb{R}$ we can instead take
$$\nabla V^{\pi_\theta} = \E_{S \sim \Pr_t^{\pi_\theta}, A \sim \pi_\theta}[\nabla \log \pi_\theta(A \mid S) (Q_{\pi_\theta}(S,A) - b(S)) ]$$ while remaining unbiased.
However, due to the exogenous input structure on the MDP any function $b : \X \times \Xi^{T} \rightarrow \mathbb{R}$ also results in an unbiased gradient.  Through this, the existing literature has taken different approaches for constructing these \emph{input driven baselines}.  In \citet{mao_variance_2019} they consider directly using a baseline of the form $b(x, \bfxi)$.  As an architecture to learn a network representation of this baseline the authors propose either using a multi-value network or meta learning.
In \citet{mesnard_counterfactual_2021} they consider using future conditional value estimates for the policy gradient baseline.  In particular, they use $\Psi_t$ as a new statistic to calculate new information from the rest of the trajectory and learn value functions which are conditioned on the additional hindsight information contained in $\Psi_t$.  They provide a family of estimators, but do not specify which form of $\Psi_t$ to use in generating an algorithm.

\noindent \textbf{Recurrent Neural Network Policy Design}: A related line of work modifies black box policy gradient methods by using a recurrent neural network (RNN) explicitly in policy design.  In \citet{venuto_policy_2021} they augment the state space to include $\bfxi$ while simultaneously limiting information flow in the neural network to ensure that the algorithm is not overly relying on this privileged information.  This approach, named {\em policy gradients incorporating the future}, is easy to implement as it just augments the network using an LSTM and adds a new loss term to account for the information bottleneck.

\noindent \textbf{Learning to Search}: The Exo-MDP model is closely related to the learning-to-search model.  Expert iteration~\citep{anthony2017thinking} separates planning and generalization when learning to search, and provides an alternative approach to implement the $\plan$ oracle.  Retrospective imitation~\citep{song2018learning} faces a similar challenge as us: a given $\bfxi$ defines a fixed search space and we seek search policies that generalize across $\PXi(\bfxi)$. However, retrospective imitation reduces to realizable imitation problems because the learner witnesses $\bfxi$ beforehand whereas in Exo-MDPs, $\bfxi_{\ge t}$ is privileged information and imitating $\plan$ is typically unrealizable. Asymmetric Imitation Learning~\citep{warrington2021robust} studies problems when imitating an expert with privileged information but essentially use RNN policies to ameliorate unrealizability. % of imitation learning.  

\noindent \textbf{RL for OR}: In our work we primarily consider simulations on dynamic Virtual Machine (VM) scheduling.  On the theoretical side, variants of greedy algorithms have been usually proposed to solve the dynamic VM scheduling problems with competitive ratio analysis.  In \citet{stolyar2013infinite} they assume the VM creation requests can be modeled as a Poisson process with lifetimes as an exponential distribution and show that the greedy algorithm achieves the asymptotically optimal policy.  In \citet{li2015dynamic} they develop a hybrid \textsc{FirstFit} algorithm with an improvement on the competitive ratio.  On the more practical side using deep reinforcement learning techniques, in \citet{mao2016resource} they develop a DeepRM system which can pack tasks with multiple resource demands via a policy gradient method.  They also built a job scheduler named DECIMA by modifying actor critic algorithms with input driven baselines~\citep{mao2019learning}.  In \citet{zhang2020learning} they solved the heterogeneous scheduling problem with deep $Q$ learning.  Lastly, in \citet{sheng2022learning} they developed SchedRL, a modification of Deep Q Learning with reward shaping to develop a VM scheduling policy.  All of these algorithms modify existing RL algorithms and show empirical gains on variations of the VM scheduling problem.  Our work differs from two perspectives: 1) we consider using hindsight planning explicitly during training time, 2) our algorithms can be applied to any Exo-MDP problems.  

We also note that existing deep reinforcement learning has also been applied in other systems applications (without exploiting their exo-MDP structure) including ride-sharing systems \citep{feng2021scalable}, stochastic queueing networks~\citep{dai2021queueing}, power grid systems~\citep{chen2020powernet}, jitter buffers~\citep{fang2019reinforcement}, and inventory control~\citep{harsha2021math}.

\noindent \textbf{RL for Combinatorial Optimization}: A crucial assumption underpinning our algorithmic framework is the implementation of {hindsight planners} as in \cref{ass:planning_oracle}.  Our framework is well motivated for problems where hindsight planning is efficient, building on the existing optimization literature on solving planning problems~\citep{conforti2014integer,bertsimas1997introduction}.  However, in general these problems as a function of a fixed exogenous input trace can be written as combinatorial optimization problems.  While we consider using hindsight planners for RL problems, a dual lens is using machine learning techniques for combinatorial optimization, as has been explored in recent years \citep{vinyals2015pointer,bello2016neural,chitnis2020learning}.  In particular, in \citet{vinyals2015pointer} they designed a new network architecture and trained using supervised learning for traveling salesman problems.  Similarly in \citet{hu2017solving} they solve variants of online bin packing problems using policy gradient algorithms.  In \citet{tang2020reinforcement} they design novel heuristic branch and bound algorithms using machine learning for integer programming optimization.

\noindent \textbf{Exo-MDPs}: Exo-MDPs, as highlighted in \cref{sec:preliminary} were described in \citet{powell_reinforcement_nodate}.  They characterize sequential decision making problems as an evolution of \emph{information}, \emph{decision}, \emph{information} sequence represented mathematically as the sequence $(s_1, a_1, \xi_1, s_2, a_2, \xi_2, \ldots, s_T)$.  Here, the state variable $s_t$ is written explicitly to capture the information available to the decision maker to make a decision $a_t$, followed by the information we learn after making a decision, i.e. the exogenous information $\xi_t$.  Similar models have been outlined in \citep{mao_variance_2019,dietterich_discovering_2018,efroni2022sample}.

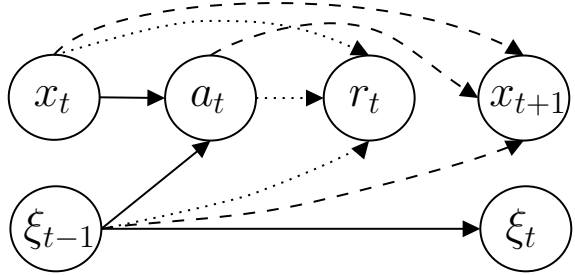
\begin{figure}[!t]
\begin{center}
\tikzset{every picture/.style={line width=0.75pt}} %set default line width to 0.75pt

\tikzset{every picture/.style={line width=0.75pt}} %set default line width to 0.75pt        

\begin{tikzpicture}[x=0.75pt,y=0.75pt,yscale=-1,xscale=1]
%uncomment if require: \path (0,287); %set diagram left start at 0, and has height of 287

%Shape: Ellipse [id:dp07150147747401359] 
\draw   (4,60.57) .. controls (4,48.72) and (14.26,39.11) .. (26.93,39.11) .. controls (39.59,39.11) and (49.85,48.72) .. (49.85,60.57) .. controls (49.85,72.43) and (39.59,82.04) .. (26.93,82.04) .. controls (14.26,82.04) and (4,72.43) .. (4,60.57) -- cycle ;
%Shape: Ellipse [id:dp6800763411508273] 
\draw   (4.86,126.94) .. controls (4.86,115.09) and (15.13,105.48) .. (27.79,105.48) .. controls (40.45,105.48) and (50.72,115.09) .. (50.72,126.94) .. controls (50.72,138.79) and (40.45,148.4) .. (27.79,148.4) .. controls (15.13,148.4) and (4.86,138.79) .. (4.86,126.94) -- cycle ;
%Shape: Ellipse [id:dp2468360179406074] 
\draw   (82.86,60.94) .. controls (82.86,49.09) and (93.13,39.48) .. (105.79,39.48) .. controls (118.45,39.48) and (128.72,49.09) .. (128.72,60.94) .. controls (128.72,72.79) and (118.45,82.4) .. (105.79,82.4) .. controls (93.13,82.4) and (82.86,72.79) .. (82.86,60.94) -- cycle ;
%Shape: Ellipse [id:dp2578784680727393] 
\draw   (162.86,60.94) .. controls (162.86,49.09) and (173.13,39.48) .. (185.79,39.48) .. controls (198.45,39.48) and (208.72,49.09) .. (208.72,60.94) .. controls (208.72,72.79) and (198.45,82.4) .. (185.79,82.4) .. controls (173.13,82.4) and (162.86,72.79) .. (162.86,60.94) -- cycle ;
%Shape: Ellipse [id:dp0398698875161021] 
\draw   (240.86,60.94) .. controls (240.86,49.09) and (251.13,39.48) .. (263.79,39.48) .. controls (276.45,39.48) and (286.72,49.09) .. (286.72,60.94) .. controls (286.72,72.79) and (276.45,82.4) .. (263.79,82.4) .. controls (251.13,82.4) and (240.86,72.79) .. (240.86,60.94) -- cycle ;
%Shape: Ellipse [id:dp09593087562325264] 
\draw   (241.86,126.94) .. controls (241.86,115.09) and (252.13,105.48) .. (264.79,105.48) .. controls (277.45,105.48) and (287.72,115.09) .. (287.72,126.94) .. controls (287.72,138.79) and (277.45,148.4) .. (264.79,148.4) .. controls (252.13,148.4) and (241.86,138.79) .. (241.86,126.94) -- cycle ;
%Straight Lines [id:da5366075947042049] 
\draw    (50.72,126.94) -- (238.86,126.94) ;
\draw [shift={(241.86,126.94)}, rotate = 180] [fill={rgb, 255:red, 0; green, 0; blue, 0 }  ][line width=0.08]  [draw opacity=0] (8.93,-4.29) -- (0,0) -- (8.93,4.29) -- cycle    ;
%Straight Lines [id:da06573052990178196] 
\draw    (49.85,60.57) -- (79.86,60.91) ;
\draw [shift={(82.86,60.94)}, rotate = 180.64] [fill={rgb, 255:red, 0; green, 0; blue, 0 }  ][line width=0.08]  [draw opacity=0] (8.93,-4.29) -- (0,0) -- (8.93,4.29) -- cycle    ;
%Straight Lines [id:da6216358398159987] 
\draw  [dash pattern={on 0.84pt off 2.51pt}]  (128.72,60.94) -- (159.86,60.94) ;
\draw [shift={(162.86,60.94)}, rotate = 180] [fill={rgb, 255:red, 0; green, 0; blue, 0 }  ][line width=0.08]  [draw opacity=0] (8.93,-4.29) -- (0,0) -- (8.93,4.29) -- cycle    ;
%Curve Lines [id:da0910221298326307] 
\draw  [dash pattern={on 0.84pt off 2.51pt}]  (26.93,39.11) .. controls (40.69,35.93) and (81.12,22.03) .. (104.12,22.03) .. controls (126.66,22.03) and (148.24,21.07) .. (183.61,38.39) ;
\draw [shift={(185.79,39.48)}, rotate = 206.7] [fill={rgb, 255:red, 0; green, 0; blue, 0 }  ][line width=0.08]  [draw opacity=0] (8.93,-4.29) -- (0,0) -- (8.93,4.29) -- cycle    ;
%Curve Lines [id:da8361613559633803] 
\draw  [dash pattern={on 4.5pt off 4.5pt}]  (26.93,39.11) .. controls (39.81,22.83) and (87.58,12.92) .. (139.12,13) .. controls (183.86,13.06) and (231.44,20.65) .. (261.51,38.11) ;
\draw [shift={(263.79,39.48)}, rotate = 211.65] [fill={rgb, 255:red, 0; green, 0; blue, 0 }  ][line width=0.08]  [draw opacity=0] (8.93,-4.29) -- (0,0) -- (8.93,4.29) -- cycle    ;
%Straight Lines [id:da5367932242181332] 
\draw    (50.72,126.94) -- (103.46,84.29) ;
\draw [shift={(105.79,82.4)}, rotate = 141.04] [fill={rgb, 255:red, 0; green, 0; blue, 0 }  ][line width=0.08]  [draw opacity=0] (8.93,-4.29) -- (0,0) -- (8.93,4.29) -- cycle    ;
%Curve Lines [id:da36916665301290896] 
\draw  [dash pattern={on 4.5pt off 4.5pt}]  (105.79,39.48) .. controls (118.12,31.03) and (153.12,20.03) .. (178.12,24.03) .. controls (202.49,27.93) and (210.55,41.79) .. (238.66,59.57) ;
\draw [shift={(240.86,60.94)}, rotate = 211.65] [fill={rgb, 255:red, 0; green, 0; blue, 0 }  ][line width=0.08]  [draw opacity=0] (8.93,-4.29) -- (0,0) -- (8.93,4.29) -- cycle    ;
%Curve Lines [id:da22717800119006681] 
\draw  [dash pattern={on 0.84pt off 2.51pt}]  (50.72,126.94) .. controls (72.12,123.03) and (116,113) .. (125,110) .. controls (133.24,107.26) and (172.55,91.98) .. (183.51,84.32) ;
\draw [shift={(185.79,82.4)}, rotate = 130.56] [fill={rgb, 255:red, 0; green, 0; blue, 0 }  ][line width=0.08]  [draw opacity=0] (8.93,-4.29) -- (0,0) -- (8.93,4.29) -- cycle    ;
%Curve Lines [id:da01706556059659159] 
\draw  [dash pattern={on 4.5pt off 4.5pt}]  (50.72,126.94) .. controls (72.12,123.03) and (117.12,123.03) .. (159.12,113.03) .. controls (199.23,103.48) and (242.07,92.11) .. (261.24,83.59) ;
\draw [shift={(263.79,82.4)}, rotate = 153.98] [fill={rgb, 255:red, 0; green, 0; blue, 0 }  ][line width=0.08]  [draw opacity=0] (8.93,-4.29) -- (0,0) -- (8.93,4.29) -- cycle    ;

% Text Node
\draw (15.14,53.63) node [anchor=north west][inner sep=0.75pt]  [font=\LARGE] [align=left] {$\displaystyle x_{t}$};
% Text Node
\draw (10,115) node [anchor=north west][inner sep=0.75pt]  [font=\LARGE] [align=left] {$\displaystyle \xi _{t-1}$};
% Text Node
\draw (94,53.63) node [anchor=north west][inner sep=0.75pt]  [font=\LARGE] [align=left] {$\displaystyle a_{t}$};
% Text Node
\draw (174,53.63) node [anchor=north west][inner sep=0.75pt]  [font=\LARGE] [align=left] {$\displaystyle r_{t}$};
% Text Node
\draw (245,53.63) node [anchor=north west][inner sep=0.75pt]  [font=\LARGE] [align=left] {$\displaystyle x_{t+1}$};
% Text Node
\draw (253,115) node [anchor=north west][inner sep=0.75pt]  [font=\LARGE] [align=left] {$\displaystyle \xi _{t}$};

\end{tikzpicture}
\end{center}
\caption{Causal diagram for an Exo-MDP where $k = 1$.  Here the dotted line indicates the influence of $(x_t, a_t, \xi_t)$ on the immediate reward $r_t$ via $r(x_t, a_t, \xi_t)$ and the dashed line on the transition evolution as $x_{t+1} = f(s_t, a_t, \xi_t)$.  The key facet to notice is the lack of influence on the $\xi$ process from the current endogenous state $x_t$ and action $a_t$.}
\label{fig:causal_diagram}
\end{figure}

\section{MDPs with Exogenous Inputs}
\label{sec:gen_examples_mdp}

In this section we further discuss the definition of Exo-MDPs and its relations to contextual bandits and MDPs and highlight some examples in the operations management literature.

\subsection{Generality of MDPs with Exogenous Inputs}
\label{sec:relation_input_driven_MDPs}

As highlighted in \cref{sec:preliminary} we consider the finite horizon reinforcement learning setting where an agent is interacting with a Markov Decision Process (MDP)~\citep{puterman2014markov}.  The underlying MDP is given by a five-tuple $(\mathcal{S}, \A, T, p, R, s_1)$ where $T$ is the horizon, $(\mathcal{S}, \A)$ denotes the set of states and actions, $R$ is the reward distribution, $p$ the distribution governing the transitions of the system, and $s_1$ is the given starting state.  

\begin{definition}
In an \textbf{MDP with Exogenous Inputs} (Exo-MDP) we let $\bfxi = (\xi_1, \ldots, \xi_T)$ be a trace of exogenous inputs with each $\xi_t$ supported on the set $\Xi$.  We assume that $\bfxi$ is sampled according to an unknown distribution $\PXi$.  The agent has access to an endogenous or system state $x \in \X$.  With this, the dynamics and rewards of the Markov decision process evolve where at time $t$, the agent selects their action $a_t \in \A$ based solely on $s_t = (x_t, \bfxi_{< t})$.  After, the endogenous state evolves according to $x_{t+1} = f(s_t, a_t, \xi_{t})$, and the reward earned is $r(s_t, a_t, \xi_{t})$, and $\xi_t$ is observed.  We assume that $f$ and $r$ are known by the principal in advance.  
\end{definition}

Note that this imposes that the state space for the underlying MDP can be written as $\mathcal{S} = \X \times \Xi^{T}$ where the first component corresponds to the endogenous state and the second to the exogenous input trace observed so far.  We use the shorthand $s_t$ to refer to $(x_t, \bfxi_{<t})$.

As written, the distribution $\PXi$ can be arbitrarily correlated across time.  We can relax this setting to assume that $\bfxi$ evolves according to a $k$-Markov chain.  More formally, that at each step $t$, $\xi_t \mid (\xi_{t-k}, \xi_{t-k+1}, \ldots, \xi_{t-1})$ is conditionally independent of $(\xi_1, \ldots, \xi_{t-k-1})$.  This allows the state space to be represented as $\mathcal{S} = \X \times \Xi^{k}$.  
Lastly, the dataset $\D$ contains a series of $N$ traces sampled independently according to $\PXi$ as $\D = \{\bfxi^1, \ldots, \bfxi^N\}$ where each $\bfxi^i = \{\xi_1^i, \ldots, \xi_T^i\}$.

For more intuition, consider the model under various values of $k$:
\begin{itemize}
    \item \textbf{Case $k = T$}: Here we assume that $\bfxi$ is an arbitrarily correlated process and $\mathcal{S} = \X \times \Xi^{T}$ so that $s_t = (x_t, \bfxi_{< t})$.  An example of this is VM allocation, where exogenous VM requests are highly correlated across time~\citep{hadary_protean_2020}.
    \item \textbf{Case $k = 1$}: Here we assume that $\bfxi$ process evolves according to a $1$-Markov chain.  The state space factorizes as $\X \times \Xi$ where $\X$ is the endogenous space and $\Xi$ is the exogenous space.  The current state is $s_t = (x_t, \xi_{t-1})$, and the state updates to $(f(s_t, a_t, \xi_t), \xi_t)$ where $\xi_t$ is drawn from the conditional distribution given $\xi_{t-1}$.  A representation of the causal diagram under this setting is in \cref{fig:causal_diagram}.  An example of this is bin-packing, where it is typically assumed that jobs arrive according to a Markov chain.
    \item \textbf{Case $k = 0$}: Here we have that $\mathcal{S} = \X$.  After taking an action $a_t$ based solely on $x_t$ we transition to $x_{t+1} = f(x_t, a_t, \xi_t)$ with $\xi_t$ sampled independently from an unknown distribution $\PXi^T$.  The previous variable $\xi_t$ can be either observed or unobserved.  An example of this is inventory control or newsvendor models, where the demand is typically assumed to be i.i.d. across periods.
\end{itemize}

\paragraph{Relation between Contextual Bandits, MDPs, and Exo-MDPs}
 We first notice that Exo-MDPs are a bridge Between Contextual Bandit and MDPs.
When $\X$ is empty or a singleton, an Exo-MDP describes several variants of the \emph{contextual bandit} introduced in~\citet{langford2007epoch}. 
They can be solved efficiently independent of the horizon~\citep{foster2021statistical}, unlike MDPs. 
If $|\Xi| \leq 1$, an Exo-MDP is simply an MDP whose complexity scales with $|\X|$.  When both $\left| \X \right| > 1$ and $\left| \Xi \right| > 1$ the complexity of learning an Exo-MDP is not known in general, to the best of our knowledge.  When the exogenous inputs are iid, an Exo-MDP is equivalent to an MDP with state space $\X$ much smaller than $\mathcal{S}$.

\paragraph{Difference in Dataset Assumptions}
We next focus briefly on the case when $k = 0$, showing that the key difference between Exo-MDPs and the typical MDP models is the assumptions on the historical dataset provided.  The typical assumptions in an MDP involve that $s_{t+1} \sim P(\cdot \mid s_t, a_t)$ where the underlying distribution $P$ is unknown.  This can be written equivalently as $s_{t+1} = f(s_t, a_t, \xi_t)$ where $\xi_t$ is sampled uniformly in $[0,1]$ and the underlying function $f$ is unknown.  As such, typically in an MDP we consider:
\begin{itemize}
    \item Unknown structure of the dynamics and rewards (i.e. unknown $f$ and $r$)
    \item Known distribution on the underlying exogenous inputs where each $\xi_t$ is uniform over $[0,1]$
    \item Access to a logged dataset of $(s_t, a_t, r_t, s_{t+1})$ pairs
\end{itemize}

In Exo-MDPs we instead assume:
\begin{itemize}
    \item Known structure of the dynamics and rewards (i.e. known $f$ and $r$)
    \item Unknown distribution on the exogenous inputs $\PXi$
    \item Access to a dataset of exogenous traces $\xi_1, \ldots, \xi_T$
\end{itemize}
These types of assumptions (where the \emph{form} of the randomness is known but the true underlying distribution is unknown) is common in the graphon literature~\citep{borgs2008convergent,lovasz2006limits}.  In the following lemma we show that these two models are equivalent, in that any MDP can be written as an MDP with exogenous inputs and $k = 0$ using the uniform random number trick.  However, the {\em assumptions are not equivalent} since in Exo-MDPs we assume access to a dataset of historical exogenous traces rather than trajectories under a fixed behavior policy.

\begin{lemma}
\label{lem:app_model_equiv}
Any MDP of the form $(\mathcal{S}, \A, T, p, R, s_1)$ where the distribution on $p$ and $R$ are unknown has an equivalent Exo-MDP form with $k = 0$, and vice-versa.
\end{lemma}

\begin{rproof}  Without loss of generality we will assume that both $\Xi$ and $\mathcal{S}$ are either discrete or one dimensional (where higher dimensions follow via the same chain of reasoning).

\noindent \textbf{Exo-MDP $\rightarrow$ MDP}: Suppose that $s_{t+1} = f(s_t, a_t, \xi_t)$ where $\xi_t$ is sampled from $\PXi$ and $f$ is known.

We can write this of the form where $f$ is unknown and $\PXi$ is known by setting $\widetilde{\xi_t} \sim U[0,1]$ and $\widetilde{f}(s_t, a_t, \widetilde{\xi_t}) = f(s_t, a_t, \PXi^{-1}(\widetilde{\xi_t})).$  Here the form of $\widetilde{f}$ is unknown as we cannot evaluate $\PXi^{-1}$, but the distribution on the underlying randomness $\widetilde{\xi}$ is known.

\noindent \textbf{MDP $\rightarrow$ Exo-MDP}: Suppose that $s_{t+1} \sim P(\cdot \mid s_t, a_t)$ where the distribution is unknown.  We can write this as $s_{t+1} = f(s_t, a_t, \xi_t)$ with a known $f$ and unknown distribution $\PXi$ as follows.  

First set $\Xi = \Delta(\mathcal{S})^{\mathcal{S} \times \A} \times [0,1]$.  Given any $\xi \in \Xi$ we define the transition kernel $f(s_t, a_t, \xi)$ as follows:
\begin{itemize}
    \item Set $\tilde{p} \in \Delta(\mathcal{S})$ to be the component of $\xi$ indexed via $s_t, a_t$
    \item Letting $z$ be the last component of $\xi$, set $s_{t+1} = \tilde{p}^{-1}(z)$.
\end{itemize}

The distribution over $\Xi$ is then defined as an indicator variable over the first $\mathcal{S} \times \A$ components indicating the true unknown distribution $p$, and the last component over $[0,1]$ being $\text{Uniform}[0,1]$.
\end{rproof}

\subsection{Examples of Exo-MDPs}
\label{sec:input_driven_mdp_examples}

We now give several examples of Exo-MDPs alongside with their exogenous decomposition of the transition distribution. We also highlight the underlying Markovian assumption on the exogenous inputs $\bfxi$. % We note that most of these models fall under an MDP with exogenous inputs and $k = 1$.

\subsubsection{Inventory Control with Lead Times and Lost Sales \texorpdfstring{$(k = 0)$}{k=0}}

This models a single product stochastic inventory control problem with lost sales and lead times~\citep{agrawal_learning_2019, goldberg_survey_2021, xin2015distributionally, feinberg2016optimality}.  In the beginning of every time step $t$, the inventory manager observes the current inventory level $\inv_t$ and $L$ previous unfulfilled orders in the pipeline, denoted $o_L, \ldots, o_1$ for a product.  $L$ denotes the lead time or delay in the number of time steps between placing an order and receiving it.  The next inventory is obtained as follows.  First, $o_1$ arrives and the on-hand inventory rises to $I_t = \inv_t + o_1$.  Then, an exogenous demand $\xi$ is drawn independently from the unknown demand distribution $\PXi$.  The cost to the inventory manager is $$h(I_t - \xi)^+ + p(\xi - I_t)^+$$ where $h$ is the holding cost for remaining inventory and $p$ is the lost sales penalty.  The on-hand inventory then finishes at level $(I_t - \xi)^+$.

This can be formulated as an Exo-MDP by letting $\X = [n]^{L+1}$ denote the current inventory level and previous orders, $\Xi = [n]$ as the exogenous demand, and $\A = [n]$ for the amount to order where $n$ is some maximum order amount.  The reward function is highlighted above, and the state transition updates as $x' = f(x_t, a_t, \xi)$ where $\inv_{t+1} = (\inv_t + o_{1} - \xi)^+$, $o_{k} = o_{k-1}$ for all $1 < k < L$ and $o_L = a$.  This model can also be expanded to include multiple suppliers with different lead times.  %Note that this model can be expanded to include multiple suppliers with different lead times.

\subsubsection{Online Stochastic Bin Packing \texorpdfstring{$(k = 1)$}{(k=1)}}
\label{sec:bin_packing}

Here we consider a typical online stochastic bin packing model~\citep{gupta2012online,ghaderi2014asymptotic, perboli2012stochastic} where a principal has access to an infinite supply of bins with maximum bin size $B$.  Items $u_t$ arrive over a sequence of rounds $t=1,\ldots, T$ where each $u_t \in [B]$ denotes the item size.  At every time step, the principal decides on a bin to allocate the item to, either allocating it to a previously opened bin or creating a new bin.  The goal is to allocate all of the items using the smallest number of bins.

This can be modeled in the framework as follows.  Here we let $\X = \mathbb{R}^{B}$, $\Xi = [B]$ and $\A = [B]$.  Each vector $x \in \X$ has components $x_1, \ldots, x_B$ as the current number of bins opened with current utilization of one up to $B$, with $\Xi$ corresponding to the current item arrival's size.  Hence the state space is $\mathcal{S} = \X \times \Xi$ where $s_t = (x_t, \xi_{t-1})$ corresponds to the current bin capacity and current item arrival.  Actions $a \in \A$ correspond to either $0$, for opening up a new bin, or $1, \ldots, B$ to be adding the current item to an existing bin with current utilization one up to $B$.  The reward is:
\begin{align*}
    r(x_t, \xi_{t-1}, a, \xi_t) = \begin{cases}
        -1 & a = 0\\
0 & a > 0, s_a > 0, \text{ and }a + \xi_{t-1} \leq B\\
-100 & \text{otherwise}
\end{cases}
\end{align*}
where $-1$ corresponds to the cost for opening a new bin, and the condition on zero reward verifies whether or not there is currently an open bin at level $a$ and the action is feasible (i.e. allocating the current item to the bin at size $a$ is smaller than the maximum bin capacity).

The transition distribution is updated similarly.  Let $\xi_t$ be drawn from the conditional distribution given $\xi_{t-1}$.  If $a = 0$ then $x' = x$ except $x'_{\xi_{t-1}}$ is incremented by one (for opening up a new bin at the level of the size of the current item).  If $a > 0$ and the action is feasible (i.e. $s_a > 0$ and $a + \xi_{t-1} \leq B$) then $x' = x$ with $x'_{a + \xi_{t-1}}$ incremented by one and $x'_{a}$ decreased by one.

We note again that this model can be extended to include different reward functions, multiple dimensions of item arrivals, and departures, similar to the Virtual Machine allocation scenario.

\subsubsection{Online Secretary \texorpdfstring{$(k = 1)$}{(k=1)}}
\label{sec:secretary_example}

Multi-secretary is the generalization of the classic secretary problem \citep{buchbinder2009secretary}, where $T$ candidates arrive sequentially but only $B$ can be selected.  Over time periods, a candidate arrives with ability $r_t \in (0,1]$ drawn i.i.d.~from a finite set of $K$ levels of expertise.  At each round, if the decision-maker has remaining budget (i.e., has chosen less than $B$ candidates thus far), they can \emph{accept} a candidate and collect the reward $r_t$, or \emph{reject} the candidate. The goal is to maximize the expected cumulative reward.

This can be modeled as an Exo-MDP as follows.  Here we let $\X = [B]$, $\Xi = [K]$, and $\A = \{0,1\}$.  The endogenous space $\X$ corresponds to the number of remaining candidates that can be accepted.  The exogenous space $\Xi$ corresponds to the ability level of the next time period's candidate.  Lastly, actions $a \in \A$ correspond to either accepting or rejecting the current candidate.  Hence the state space is $\mathcal{S} = \X \times \Xi$ where $s_t = (x_t, \xi_{t-1})$ corresponds to the number of accepted candidates thus far and the skill of the current candidate.  The reward is:
\[
r(x_t, \xi_{t-1}, a, \xi_t) = \begin{cases}
        \xi_{t-1} & a = 1 \text{ and } x_t > 0\\
0 & \text{otherwise}
\end{cases}
\]

The transition distribution is updated similarly by accounting for whether a candidate was accepted.  Indeed we have:
\[
f(x_t, \xi_{t-1}, a, \xi_t) = \begin{cases}
        x_{t-1} & a = 1 \text{ and } x_t > 0\\
0 & \text{otherwise}
\end{cases}
\]

\subsubsection{Airline Revenue Management \texorpdfstring{$(k = 0)$}{(k=0)}}
\label{sec:example_airline_revenue_management}

 Airline Revenue Management~\citep{littlewood1972forecasting} is a special case of the multi-dimensional Online Bin Packing (OBP) problem, but we reiterate its model here for completeness.  There are a set of $K \in \mathbb{N}$ resources, and each resource $i$ has a maximum capacity $B_i$. 
 Customers are segmented into $M \in \mathbb{N}$ types.  Customers of type $j \in [M]$ request $A_j \in \mathbb{R}_+^K$ resources and yield a revenue of $f_j$.  Over time, the algorithm will decide whether or not to accept customers of type $j$.  Afterwards, a customer type $j_t$ is drawn from an independent distribution.  If the algorithm decided to accept customers of type $j_t$, the relevant resources are consumed and revenue earned.  The goal of the decision-maker is to maximize the expected revenue.

 This is modeled as an Exo-MDP where $\X = [0, B_1] \times \ldots \times [0, B_K], \Xi = [M]$, and $\A = \{0,1\}^M$.  The system space $\X$ corresponds to the remaining capacity of the $K$ different resources, exogenous space $\Xi$ to the sampled customer type, and $\A$ to the accept / reject decisions for each of the customer types.  The reward is then defined via:
 \[
r(x_t, a, \xi_t) = 
    \begin{cases}
        f_{\xi_t} & a_{\xi_t} = 1 \text{ and } x_t - A_{\xi_t} \geq 0 \\
        0 & \text{otherwise} 
        \end{cases}
\]

The transition distribution is updated by accounting for consumed resources if a request is accepted:

 \[
f(x_t, a, \xi_t) = 
    \begin{cases}
        x_t - A_{\xi_t} & a_{\xi_t} = 1 \text{ and } x_t - A_{\xi_t} \geq 0 \\
        x_t & \text{otherwise} 
        \end{cases}
\]

\subsubsection{Virtual Machine Allocation \texorpdfstring{$(k = T)$}{(k=T)}}
\label{sec:virtual_machine_allocation}

The Cloud has modified the way that users are able access computing resources~\citep{sheng2021vmagent, MARO_MSRA, cortez2017resource, hubbs2020or, hadary_protean_2020}.  Cloud service providers allow customers easy access to resources while simultaneously applying efficient management techniques in order to optimize their return.  One of the most critical components is the Virtual Machine (VM) allocator, which assigns VM request to the physical hardware (henceforth referred to as PMs).  The important issue is how to allocate physical resources to service each VM efficiently by eliminating fragmentation, performance impact and delays, and allocation failures.

These VM allocation models can be thought of as a multi-dimensional variant of bin-packing with an additional component of arrival and departures.  The typical VM scheduling scenarios models users requesting resources over time, where each request contains the required CPU and memory uses, and its lifetime.  The allocator then decides which available physical machine to allocate the virtual machine to.  To limit notational overload, we provide a high level view of the Virtual Machine allocation scenario here, and defer concrete discussion and notation when discussing the planning oracle required for solving.

In this set-up the current system state of the model is measured by the physical resources available for each PM, including the physical cores and memory available.  Over time,
\begin{itemize}
    \item Coming VM requests ask for a certain amount of resources (CPU and memory requirements) and their lifetime.  Resource requirements are varied based on the different VM requests.
    \item Based on the action selected by the algorithm, the VM will be allocated to and be created in a specified PM as long as that PM's remaining resources are enough.
    \item After a period of execution, the VM completes its tasks.  The simulator will then release the resources allocated to this VM and deallocate this VM from the PM.
\end{itemize}

At a high level, these problems can be modeled as an MDP with exogenous inputs where the exogenous space $\Xi$ contains the space of possible VM requests (along with their lifetime, memory, and CPU requirements).  The endogenous space $\X$ measures the current capacity of each physical machine on the server, and action space $\A$ for allocation decisions for the current VM request to a given PM.  More details on the concrete experimental set-up will be in \cref{sec:experiments_appendix}.  We also note that the VM arrival process in practice is highly correlated so this fits under the model where $k = T$~\citep{hadary_protean_2020}.

\section{Hindsight Planners}
\label{sec:planning_oracle_open_control}

Here we outline the feasibility of \cref{ass:planning_oracle} in many operations tasks.  Planning problems induced by online knapsack problems as a function of a deterministic input sequence $\bfxi$ can be solved via their induced linear relaxation in pseudo-polynomial time (as the constraint polytope is a polymatroid).  Other problems, such as inventory control with lead times have planning problems where a simple greedy control policy is optimal.  More generally, \cref{ass:planning_oracle} requires us to solve large-scale combinatorial optimization problems.  However, we note that all of these computations are done \emph{offline} and so the computational burden is not required at run-time.  Moreover, it is easy to incorporate existing heuristics from the optimization literature for efficient solutions to these problems, including linear programming or fluid relaxations~\citep{conforti2014integer}.  This appeals to our \GuidedRL algorithms only relying on the objective value of the planner, instead of the actual sequence of actions.

\subsection{Inventory Control with Lead Times and Lost Sales}

In inventory control, given knowledge of the exact sequence of demands $\bfxi = (d_1, \ldots, d_T)$, the optimal open loop control policy is trivial to write down.  Indeed, setting:
\[
    a_t = \begin{cases}
            d_{t+L} & t \leq T - L \\
            0 & \text{otherwise}
        \end{cases}
\]
is clearly the optimal policy.  This is as, for any $t \leq T - L$ we ensure the current on-hand inventory is exactly equal to that period's demands.  For the last $L$ periods we order nothing in order to minimize the accumulated purchase costs for inventory which will be ordered and cannot be sold.  % An implementation of this simulator and the planning oracle is provided in the attached code base.

\subsection{Online Stochastic Bin Packing}
\label{app:bin_pack_plan}

We give the integer programming representation of the optimal open loop control for Bin Packing as follows.  Consider a state $x$ with components $x_1, \ldots, x_B$ as the current number of bins opened with a utilization of $1$ up to $B$ and a sequence of items with sizes $\bfxi = (u_1, \ldots, u_T)$.  Given $(x_1, \ldots, x_B)$ we pre-process this list to a vector of length $\sum_i x_i$ where each component corresponds to the current utilization of any bin.  For example, if $B = 3$ and $x = (1,0,2)$ then we make a list containing $\alpha = (1,3,3)$ for two bins with a utilization of three and one bin with a total utilization of one.  Since the total number of bins required will be $\sum_i x_i + T$ we use the variables $y_b$ for $b \in [\sum_i x_i + T]$ to denote an indicator of whether or not bin $y_b$ is currently utilized.  We also use variables $z_{v,b}$ to denote whether item $v \in [T]$ is assigned to bin $b$.  Similarly, denote $\alpha_b$ as the current utilization of a bin $b$.  The optimization program can then be written as follows:
\begin{align*}
    \max_{z,y}&   - \sum_{b} y_b \\
    \text{s.t.} & \sum_b z_{v,b} = 1 \text{ for all } v \in [T] \\
    & \sum_v z_{v,b} + \alpha_b \leq B y_b \text{ for all } b \\
    & y_b = 1 \text{ for any $b$ with } \alpha_b > 0
\end{align*}

The objective corresponds to minimizing the number of utilized bins.  The first constraint ensures that each item is assigned to a bin.  The second constraint enforces capacity constraints for each bin, and the last constraint ensures that bins are marked as used if they have current utilization on them (i.e. $\alpha_b > 0$).  % An implementation of this simulator and the planning oracle is provided in the attached code base.

\subsection{Online Secretary}
\label{app:secretary_planner}

In online secretary, given knowledge of the exact sequence of future candidate qualities the open loop control policy is trivial to write down.  Indeed, for any $\bfxi_{\geq t}$ we denote $\sigma$ as the ranking function over it such that $\xi_\sigma(1) > \xi_\sigma(2) > \ldots > \xi_\sigma(T - t)$, with ties broken arbitrarily.  Then given a remaining number of candidates to accept $x$, $\plan(t, \bfxi_{\geq t}, x)$ will simply be $\sum_{i=1}^x \xi_{\sigma(i)}$.  This corresponds to taking the best $x$-candidates from the future trace $\bfxi_{\geq t}$.

\subsection{Airline Revenue Management}
\label{app:arm_planner}

The planning oracle for the airline revenue management problem can be formulated as a so-called ``knapsack'' problem.  Indeed, suppose that $x_t$ is the remaining capacity for the $K$ resources.  We use variables $z_y$ for the number of customers of type $y \in [M]$ to accept.  Then, we solve the following optimization problem:
\begin{align*}
    \max_{z} & \sum_{y} f_y z_y \\
    \text{ s.t. } & 0 \leq z_y \leq N_y(\bfxi) \text{ for all } y \in [M] \\
    & Az \leq x_t
\end{align*}
where $N_y(\bfxi) = \sum_{t} \Ind{\xi_t = y}$ is the number of type $y$ customers in the exogenous dataset $\bfxi$.

\subsection{Virtual Machine Allocation}
\label{sec:vm_oracle}

The planning oracle for the VM allocation problem can be formulated as a large-scale mixed integer linear program.  In this section we discuss approaches which utilize this fact in developing an oracle for the hindsight planning problem.

Use $\Xi = V$ to denote the set of VM requests and $P$ as the set of physical machines.  We use the following constants which depend on the current inventory of virtual and physical machines contained in the current state $s_t$, including:
\begin{itemize}
    \item $\alpha_{t,p}$ for the remaining CPU cores for physical machine $p$ at event $t$
    \item $\beta_{t, p}$ for the remaining memory for physical machine $p$ at event $t$
    \item $\texttt{CPU-CAP}_p$ and $\texttt{MEM-CAP}_p$ the CPU and memory capacity of physical machine $p \in P$
    \item $\texttt{LIFETIME}_v, \, \texttt{CORE}_v, \, \texttt{MEM}_v$ as the lifetime, cores, and memory utilization of VM $v \in V$
    \item $\eta_{v, t}$ an indicator that the VM $v$ is active at time $t$ (i.e. $\eta_{v,t}$ is equal to one for any time starting from the time the VM $v$ arrives until the end of its lifetime)
\end{itemize}

With this we introduce variables $x_{v,p}$ for each virtual machine $v$ and physical machine $p$ to indicate the assignments.  We also use variables $y_{t,p}$ which encode whether physical machine $p$ has a VM assigned to it at time step $t$.

We start by considering the various constraints in the problem:
\begin{itemize}
    \item \textbf{Assignment Constraint} (\cref{cons:pack}): For every $v$ we need $\sum_{p} x_{v,p} = 1$ indicating that each virtual machine is assigned to a physical machine.
    \item \textbf{CPU Capacity Constraint} (\cref{cons:cpu}): For every $p$ and $t$ we need $\alpha_{p, t} + \sum_{v} \texttt{CORE}_v \eta_{v, t} x_{v,p} \leq \texttt{CPU-CAP}_p$ to ensure CPU usage capacity constraints are satisfied.
    \item \textbf{Memory Capacity Constraint} (\cref{cons:mem}): For every $p$ and $t$ we need $\beta_{p, t} + \sum_{v} \texttt{MEM}_v \eta_{v, t} x_{v,p} \leq \texttt{MEM-CAP}_p$ to ensure memory capacity constraints are satisfied.
\end{itemize}

We also need additional constraints which encode the $y_{t,p}$ variable as follows:
\begin{itemize}
    \item \textbf{PM Historical Utilization} (\cref{cons:hen_ind}): $y_{t,p} \geq 1$ for all $p$ and $t$ if $\alpha_{t, p} > 0$
    \item \textbf{PM VM Utilization} (\cref{cons:hen_vm}): $x_{v,p} \eta_{v,t} \leq y_{t,p}$ for all $v$ and $p$
    \item \textbf{PM OR Constraint} (\cref{cons:hen_or}): $\sum_{v} x_{v,p} \eta_{v, t} + \Ind{\alpha_{t, p} > 0} \geq y_{t,p}$ for all $t$ and $p$.
\end{itemize}
These constraints essentially encode that $y_{t,p}$ is an indicator for whether $y$ has a VM assigned to it from $\bfxi$ (in the second bullet), or has historical allocations on it (for when $\alpha_{t, p} > 0$).

We note that there always exists a feasible solution since we are in an over provisioned regime where we have capacity to service every VM request.

Lastly, the objective function (\cref{obj:packing}) for the packing density can be formulated via:
\[
    - \sum_t \frac{\sum_p y_{t,p} \texttt{CPU-CAP}_p}{\sum_p \alpha_{t,p} + \sum_{v} \texttt{CORE}_v}
\]

The numerator corresponds to the total CPU capacity of all physical machines which are in use.  The denominator corresponds to the total utilization (both from the VMs currently in service and the VMs arriving over the time horizon).  This then encodes the inverse of the core packing density, as described earlier.

The full integer program is now summarized below:
\begin{align}
    \max_{x,y} & - \sum_{t \in [T]} \frac{\sum_{p \in P} y_{t,p} \texttt{CPU-CAP}_p}{\sum_{p \in P} \alpha_{t,p} + \sum_{v \in V} \texttt{CORE}_v} \label{obj:packing} \\
    \text{s.t. } & \sum_{p \in P} x_{v,p} = 1 ~~~ \forall v \in V \label{cons:pack}\\
    & \alpha_{p, t} + \sum_{v \in V} \texttt{CORE}_v \eta_{v, t} x_{v,p} \leq \texttt{CPU-CAP}_p ~~~ \forall t \in [T], \, p \in P \label{cons:cpu}\\
    & \beta_{p, t} + \sum_{v \in V} \texttt{MEM}_v \eta_{v, t} x_{v,p} \leq \texttt{MEM-CAP}_p ~~~ \forall t \in [T], \, p \in P \label{cons:mem}\\
    & y_{t,p} \geq \Ind{\alpha_{t,p} > 0} ~~~ \forall t \in [T], \, p \in P \label{cons:hen_ind}\\
    & x_{v,p} \eta_{v,t} \leq y_{t,p} ~~~ \forall v \in V, \, t \in [T], \, p \in P \label{cons:hen_vm}\\
    & \sum_{v \in V} x_{v,p} \eta_{v, t} + \Ind{\alpha_{t, p} > 0} \geq y_{t,p} ~~~ \forall t \in [T], \, p \in P \label{cons:hen_or}
\end{align}

%An implementation of this program along with heuristic computational improvements via warm-starting and initialization methods are provided in the attached code base.

\section{Existing Approaches to MDPs in Exo-MDPs}
\label{sec:exogenous_value}

Here we briefly discuss existing approaches to MDPs applied in the context of Exo-MDPs to highlight the advantages and disadvantages of our \GuidedRL approach. 

\subsection{Predict Then Optimize}
\label{sec:pto}
Given the historical trace dataset $\D = \{\bfxi^1, \ldots, \bfxi^N\}$, a popular plug-in approach learns a generative model $\overline{\P_\Xi}(\xi_t \mid \bfxi_{< t})$ to approximate the true distribution $\P_\Xi(\xi_t \mid \bfxi_{< t})$, since the exogenous process is the only unknown. 
Given this model $\overline{\P_\Xi}$, estimates for the $Q_t^\star$ value for the optimal policy can be obtained by solving the Bellman equation with the learned predictor $\overline{\P_\Xi}$ in place of 
the true distribution $\P_\Xi$. More concretely, we denote $\overline{Q}_t$ as the model-based estimate of $Q_t^\star$, which follows
\begin{align*}
    \overline{Q}_t(s,a) &\coloneqq \E_{\xi | \bfxi_{< t}}[r(x,a,\xi) + \overline{V}_{t+1}(f(x,a,\xi) \mid \overline{\PXi}] \\
    % \int_\Xi (r(x,a,\xi) + \overline{V}_{t+1}(f(x,a,\xi)) \, d \overline{\PXi}(\xi | \bfxi_{< t}) \\
    \overline{V}_t(s) & \coloneqq \max_{a \in \A} \overline{Q}_t(s,a) \\
    \overline{\pi}_t(s) & \coloneqq \argmax_{a \in \A} \overline{Q}_t(s,a).
\end{align*}

While intuitive, this ML forecast approach requires high-fidelity modeling of the exogenous process to guarantee good downstream decision-making, due to the quadratic horizon multiplicative factor in regret we show below. This quadratic factor in the horizon is due to the compounding errors of distribution shift, similar to those shown in the imitation learning literature~\citep{ross2011reduction}.

\begin{theorem}
\label{thm:pto_regret}
Suppose that $\sup_{t \in [T], \bfxi_{< t} \in \Xi^{[t-1]}} \norm{\overline{\PXi}(\cdot | \bfxi_{< t}) - \PXi(\cdot | \bfxi_{< t})}_{TV} \leq \epsilon$ where $\norm{\cdot}_{TV}$ is the total variation distance. Then we have that $\Regret(\pibar) \leq 2T^2 \epsilon$.  
In addition, if $\bfxi \sim \PXi$ has each $\xi_t$ independent from $\bfxi_{<t}$ with $\Xi$ discrete, $\overline{\PXi}$ is the empirical distribution, then $\forall \delta \in (0,1)$, with probability at least $1 - \delta$,
\ifdefined\arxiv
$\Regret(\pibar) \leq T^{3/2}|\Xi|\sqrt{\frac{2 \log(2 |\Xi| / \delta)}{N}}.$
\else
$\Regret(\pibar) \leq T^{3/2}|\Xi|\sqrt{\frac{2 \log(2 |\Xi| / \delta)}{N}}.$
\fi
\end{theorem}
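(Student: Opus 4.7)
The plan is a two-step argument: first establish a deterministic simulation lemma that relates regret to the per-step TV gap between $\overline{\PXi}$ and $\PXi$, then invoke standard concentration to convert the TV gap into a sample complexity bound. The key structural fact I would exploit is that $\overline\pi$ is by construction optimal in the \emph{surrogate} Exo-MDP whose exogenous distribution is $\overline{\PXi}$, even though it may be suboptimal in the true one. This lets me sandwich $\Regret(\overline\pi)$ between two model-mismatch terms.

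For the deterministic part, I would first prove a telescoping performance-difference lemma: for any Markov policy $\pi$,
\begin{align*}
V^{\pi}_1(s_1;\PXi) - V^{\pi}_1(s_1; \overline{\PXi}) = \sum_{t=1}^{T} \E_{s_t \sim \Pr_t^{\pi,\overline{\PXi}}}\!\left[ \textstyle\sum_{\xi} \bigl(\PXi(\xi\mid \bfxi_{<t}) - \overline{\PXi}(\xi \mid \bfxi_{<t})\bigr)\bigl( r(s_t,\pi(s_t),\xi) + V^{\pi}_{t+1}(f(s_t,\pi(s_t),\xi);\PXi) \bigr) \right].
\end{align*}
Since rewards are in $[0,1]$, each $V^{\pi}_{t+1}$ is bounded by $T$, so each summand is at most $2T\epsilon$ by definition of TV, giving $|V^{\pi}_1(s_1;\PXi) - V^{\pi}_1(s_1;\overline{\PXi})| \leq 2T^2\epsilon$ (or the tighter $T^2\epsilon$ by the usual cancellation of rewards into the cumulative value). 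Applying this to $\pi^\star$ and to $\overline\pi$ separately, and using $V_1^{\overline\pi}(s_1;\overline{\PXi}) \geq V_1^{\pi^\star}(s_1;\overline{\PXi})$ (optimality of $\overline\pi$ in the surrogate), one gets
\begin{align*}
\Regret(\overline\pi) = V_1^{\pi^\star}(s_1;\PXi) - V_1^{\overline\pi}(s_1;\PXi) \leq \bigl(V_1^{\pi^\star}(s_1;\PXi) - V_1^{\pi^\star}(s_1;\overline{\PXi})\bigr) + \bigl(V_1^{\overline\pi}(s_1;\overline{\PXi}) - V_1^{\overline\pi}(s_1;\PXi)\bigr) \leq 2T^2\epsilon,
\end{align*}
which is the first claim.

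For the high-probability statement, the independence assumption collapses the conditional distributions to a single marginal on $\Xi$, and with $N$ iid traces we estimate $\PXi(\xi)$ by the empirical frequency $\overline{\PXi}(\xi)$. I would apply Hoeffding to $\mathbf{1}[\xi=\xi^*]$ for each atom $\xi^*\in\Xi$ and union-bound, yielding $\max_{\xi} |\overline{\PXi}(\xi) - \PXi(\xi)| \leq \sqrt{\log(2|\Xi|/\delta)/(2N)}$ with probability $1-\delta$, and plug this into the simulation bound above. The main technical worry here is that controlling $\|\overline{\PXi} - \PXi\|_{TV}$ (an $\ell_1$ quantity) from an $\ell_\infty$ concentration naively costs an extra $|\Xi|$ factor, whereas the statement has only $\log|\Xi|$ inside the square root. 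To match the stated rate I would not go through uniform TV at all: instead, apply Hoeffding \emph{directly} to the inner expectation $\sum_\xi h_t(\xi)(\PXi(\xi)-\overline{\PXi}(\xi))$ using that $h_t(\xi) = r + V^\pi_{t+1} \in [0,T]$ is a fixed $[0,T]$-valued function per step, and handle the (finite, data-independent) conditioning $\bfxi_{<t}$ by a union bound over its support — this is the step I expect to be the main obstacle, since making the argument uniform in the current state/history without paying extra factors requires carefully exploiting that the policy $\overline\pi$ is measurable w.r.t.\ $\D$ rather than adaptive to the held-out fluctuations, plus the Markov structure of the Exo-MDP.
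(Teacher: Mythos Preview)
Your deterministic argument (the first claim) is essentially the paper's: both use the sandwich
\[
\Regret(\overline\pi)\ \leq\ \bigl(V_1^{\pi^\star}(s_1)-\overline{V}_1^{\pi^\star}(s_1)\bigr)+\bigl(\overline{V}_1^{\overline\pi}(s_1)-V_1^{\overline\pi}(s_1)\bigr)
\]
via optimality of $\overline\pi$ in the surrogate MDP, then bound each model-mismatch term by a simulation lemma. The paper cites the finite-horizon simulation lemma from an external reference and then checks that the induced transition-kernel TV equals the exogenous TV, whereas you spell out the telescoping yourself; the content is the same.

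For the high-probability claim, your worry is exactly right---and in fact the paper takes precisely the naive route you flagged. It applies Hoeffding with a union bound over $\Xi$ to obtain $\max_{\xi}|\overline{\PXi}(\xi)-\PXi(\xi)|\leq \sqrt{\log(2|\Xi|/\delta)/(2N)}$ and then asserts that the $\ell_1$ (hence TV) distance is bounded by this same quantity. That step silently drops a multiplicative $|\Xi|$, so the paper's written argument does not actually establish the stated $\sqrt{\log|\Xi|/N}$ rate. Your proposed workaround---applying Hoeffding directly to $\E_{\PXi}[h_t]-\E_{\overline{\PXi}}[h_t]$ with $h_t=r+V^{\pi}_{t+1}$---does not close the gap either: for $\pi=\overline\pi$ the integrand $h_t$ is itself a function of $\D$, and even for the fixed $\pi^\star$ term $h_t$ varies with the visited state $s_t$, so making the bound uniform forces a union over a family of functions whose cardinality is governed by the state space or by $|\Pi|$, not by $|\Xi|$. (Your remark that ``$\overline\pi$ is measurable w.r.t.\ $\D$'' points in the wrong direction here---that measurability is the source of the dependence, not a cure for it.) In short, your first part is correct and matches the paper; your concern about the second part is legitimate and is in fact shared, unaddressed, by the paper's own proof.
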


The $T^2$ dependence here is tight (see \citet{domingues2021episodic}), in contrast to the $O(T)$ dependence in \cref{th:weak theorem}.  Moreover, the ML forecast approach can be impractical when the exogenous process is complex since $\epsilon$ in the worst case can scale as $|\Xi|^T$ if the $\xi_t$ are strongly correlated across $t$.  An example of this is VM allocation where researchers observed that the VM lifetime varies substantially across time, the demand has spikes and a diurnal pattern, and that subsequent requests are highly correlated~ \citep{hadary_protean_2020}.  

This discrepancy highlights an advantage of our \GuidedRL approach.  Consider a VM allocation example with two physical machines each large enough to satisfy the entire demand. Under chaotic and unpredictable arrivals, a planner using erroneous forecasts might spread the requests over the two machines. In contrast, the hindsight learning policy will correctly learn that one machine is sufficient and achieve low regret, even if the total variation distance on the underlying distribution over exogenous inputs is large.

\subsection{Reinforcement Learning}
\label{sec:erm}
Recall that our objective is to solve $\argmax_{\pi \in \Pi} V_1^\pi(s_1)$.  This can be written as $\argmax_{\pi \in \Pi} \E_{\bfxi}[V_1^\pi(s_1, \bfxi)]$ by \cref{thm:exog_bellman}.
Therefore, an alternative way to find approximately optimal policies for an Exo-MDP is to maximize the empirical return directly, similar to the empirical risk minimization strategy of supervised learning:
% As the true underlying distribution $\PXi$ is unknown, we can replace the expectation with the empirical one dictated by the dataset $\D$, i.e. $\BarExp{V_1^{\pi}(s_1, \bfxi)} = \frac{1}{N} \sum_i V_1^\pi(s_1, \bfxi^i).$  Consider the policy generated by this empirical risk minimization strategy:
% \begin{equation}
    % \label{eq:erm}
$    \perm = \argmax_{\pi \in \Pi} \BarExp{V_1^{\pi}(s_1, \bfxi)}
$
% \end{equation}
where $\BarExp{V_1^{\pi}(s_1, \bfxi)} = \frac{1}{N} \sum_n V_1^\pi(s_1, \bfxi^n).$ 
First observe that the number of samples required to learn a near optimal policy scales linearly with $T$ in this approach. If an additive control variate $\phi(\bfxi)$ (as in~\citet{mao_variance_2019}) is used, the $T$ term is replaced with $T - \E_{\bfxi}[\phi(\bfxi)]$. %\srscomment{Unfinished sentence}
\begin{theorem}
\label{thm:erm_regret}
\ifdefined\arxiv
For any $\delta \in (0,1)$, with probability at least $1 - \delta$ we have that
\[
\Regret(\perm) \leq T \sqrt{\frac{2\log(2|\Pi| / \delta)}{N}}.\]
\else $\forall \delta \in (0,1)$, with probability at least $1 - \delta$, $\Regret(\perm) \leq T \sqrt{\frac{2 \log(2|\Pi| / \delta)}{N}}.$
\fi
\end{theorem}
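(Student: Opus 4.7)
The plan is to prove Theorem \ref{thm:erm_regret} via a standard uniform convergence plus empirical risk minimization argument, since $\perm$ is essentially the ERM solution over $\Pi$ when each $\bfxi^n$ is viewed as an i.i.d. sample from $\PXi$. The key structural fact supplied by \cref{thm:exog_bellman} is that $V_1^\pi(s_0) = \E_{\bfxi}[V_1^\pi(s_0, \bfxi)]$, so $\overline{\E}[V_1^\pi(s_0, \bfxi)]$ is an unbiased estimator of the true value $V_1^\pi(s_0)$ for every $\pi \in \Pi$. Crucially, because rewards lie in $[0,1]$ and the horizon is $T$, we have $V_1^\pi(s_0, \bfxi^n) \in [0, T]$ almost surely.

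First I would fix a policy $\pi \in \Pi$ and apply Hoeffding's inequality to the i.i.d. bounded random variables $\{V_1^\pi(s_0, \bfxi^n)\}_{n=1}^N$, yielding
\begin{align*}
\Pr\left(\left|\overline{\E}[V_1^\pi(s_0, \bfxi)] - V_1^\pi(s_0)\right| \geq \epsilon\right) \leq 2 \exp\left(-\frac{2N\epsilon^2}{T^2}\right).
\end{align*}
Next I would take a union bound over the (assumed finite) policy class $\Pi$ and set $\epsilon = T\sqrt{\log(2|\Pi|/\delta)/(2N)}$, so that with probability at least $1-\delta$, the inequality $|\overline{\E}[V_1^\pi(s_0, \bfxi)] - V_1^\pi(s_0)| \leq \epsilon$ holds simultaneously for every $\pi \in \Pi$.

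Finally, I would perform the canonical two-sided comparison: on the good event,
\begin{align*}
V_1^{\pstar}(s_0) - V_1^{\perm}(s_0)
&\leq \overline{\E}[V_1^{\pstar}(s_0,\bfxi)] + \epsilon - \left(\overline{\E}[V_1^{\perm}(s_0,\bfxi)] - \epsilon\right) \\
&\leq 2\epsilon,
\end{align*}
where the last step uses $\overline{\E}[V_1^{\perm}(s_0,\bfxi)] \geq \overline{\E}[V_1^{\pstar}(s_0,\bfxi)]$ by the defining optimality of $\perm$ on the empirical objective (and $\pstar \in \Pi$ by assumption). Substituting the chosen $\epsilon$ yields $\Regret(\perm) \leq T\sqrt{2\log(2|\Pi|/\delta)/N}$, as claimed.

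There is no genuine obstacle here, just a bookkeeping check that the constants line up (doubling $\epsilon$ converts the factor $1/2$ inside Hoeffding into the factor $2$ inside the square root). The only mild subtlety is justifying the boundedness $V_1^\pi(s_0,\bfxi) \in [0,T]$, which follows directly from the per-step reward bound $r \in [0,1]$ in the Exo-MDP setup; and noting that the finite-$|\Pi|$ assumption is implicit since $\log |\Pi|$ appears in the bound (an extension to infinite classes would replace this by a covering/Rademacher term but is not needed here).
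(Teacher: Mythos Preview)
Your proposal is correct and follows essentially the same approach as the paper's own proof: Hoeffding's inequality on the $[0,T]$-bounded random variables $V_1^\pi(s_1,\bfxi^n)$, a union bound over $\Pi$, and the standard three-term ERM decomposition using optimality of $\perm$ on the empirical objective and $\pstar\in\Pi$. If anything, you track the constants more carefully than the appendix version (which has a stray $N^2$ in place of $N$); your $2\epsilon = T\sqrt{2\log(2|\Pi|/\delta)/N}$ matches the theorem statement exactly.
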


\cref{thm:erm_regret} highlights that model-free RL methods are a {\em theoretically viable} approach for Exo-MDPs (especially compared to Predict-Then-Optimize in \cref{thm:pto_regret}).  Indeed, \cref{thm:erm_regret} shows that RL methods have asymptotic consistency guarantees {\em if} they converge to the optimal policy in the empirical MDP.  This convergence is an idealized computation assumption that hides optimization issues when studying statistical guarantees, and is incomparable to \cref{ass:planning_oracle} for the hindsight planner (for which we showed several examples in \cref{sec:planning_oracle_open_control}.

Moreover, \cref{tab:multi_sec_performance} suggests that HL and RL are trading bias and variance differently.  When variance is the dominating factor, we expect an algorithm’s performance to improve with additional data. When bias is the dominating factor, however, we expect no marginal benefit from additional data. In Table 1 we see that as $T$ (and accordingly, $N$, the number of data points) increases, the Tabular RL algorithm performance improves. However, hindsight learning has a stable non-zero regret even as we increase $T$.

\section{Proofs of Main Results}
\label{sec:proofs}

\begin{lemma}[\cref{thm:exog_bellman} of \cref{sec:preliminary}]
\label{app:exog_bellman}
For every $t \in [T], (s,a) \in \mathcal{S} \times \A,$ and $\pi \in \Pi$, we have the following:
\begin{align*}
    Q_t^\pi(s,a) & = \E_{\bfxi_{\geq t}}[Q_t^{\pi}(s,a,\bfxi_{\geq t})] \\
    V_t^\pi(s) & = \E_{\bfxi_{\geq t}}[V_t^{\pi}(s, \bfxi_{\geq t})].
\end{align*}
In particular $V_1^{\pi}(s_1) = V_1^{\pi} = \E_{\bfxi}[V_1^{\pi}(s_1, \bfxi)]$.
\end{lemma}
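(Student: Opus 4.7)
The plan is to obtain both identities as immediate consequences of the tower property (law of total expectation), leveraging two structural facts: (a) the exogenous trace $\bfxi$ is sampled from $\PXi$ independently of any internal randomization of $\pi$, and (b) the policy is non-anticipatory and Markov, so that once $\bfxi_{\geq t}$ and the current state $s_t$ are fixed, the only remaining stochasticity at times $\tau \geq t$ comes from $\pi$'s action sampling (because $x_{\tau+1} = f(s_\tau, a_\tau, \xi_\tau)$ and $r$ are deterministic given the exogenous input).

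First I would establish the $Q$-identity. Starting from the definition
\[
Q_t^\pi(s,a) = \E\!\left[\tsum_{\tau \geq t} r(s_\tau,a_\tau,\xi_\tau) \,\Big|\, s_t = s,\, a_t = a\right],
\]
I would insert a conditioning on $\bfxi_{\geq t}$ and apply the tower property:
\[
Q_t^\pi(s,a) = \E_{\bfxi_{\geq t}}\!\left[\,\E\!\left[\tsum_{\tau \geq t} r(s_\tau,a_\tau,\xi_\tau) \,\Big|\, s_t = s,\, a_t = a,\, \bfxi_{\geq t}\right]\right].
\]
By the structural facts (a)–(b), the inner conditional expectation is precisely the definition of $Q_t^\pi(s,a,\bfxi_{\geq t})$ from \eqref{eq:q_exog}, so the first identity follows directly. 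The $V$-identity then follows by writing $V_t^\pi(s) = \sum_a \pi(a \mid s)\, Q_t^\pi(s,a)$, substituting the $Q$-identity just established, and pulling the sum inside the expectation by linearity to recognize $\sum_a \pi(a \mid s)\, Q_t^\pi(s,a,\bfxi_{\geq t}) = V_t^\pi(s,\bfxi_{\geq t})$. Specializing $t = 1$ and $s = s_1$ yields the final display.

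The main (and only) subtlety is checking that the tower property is applied correctly: I must verify that the conditional law of $\bfxi_{\geq t}$ given $(s_t=s, a_t=a)$ is well defined and equal to $\PXi(\cdot \mid \bfxi_{< t})$, which is a consequence of the non-anticipatory assumption that $\bfxi$ is exogenous and independent of the agent's decisions. Once this independence is articulated, everything else is definitional bookkeeping — there is no probabilistic inequality or approximation argument needed, which is exactly why the Exo-MDP decomposition is the foundational tool that subsequent results in the paper build upon.
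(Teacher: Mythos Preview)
Your proposal is correct. It differs from the paper's proof in how you establish the $Q$-identity: you apply the tower property directly, conditioning on $\bfxi_{\geq t}$ in one step and recognizing the inner conditional expectation as $Q_t^\pi(s,a,\bfxi_{\geq t})$, whereas the paper proves it by backwards induction on $t$, expanding $Q_t^\pi(s,a)$ via the one-step Bellman equation, invoking the induction hypothesis for $V_{t+1}^\pi$, and recombining. Your route is more concise and avoids the inductive bookkeeping; the paper's route makes the recursive Bellman structure explicit and verifies step by step that the fixed-trace value functions are compatible with it. For the $V$-identity, both proofs are identical: write $V_t^\pi(s) = \sum_a \pi(a\mid s)\, Q_t^\pi(s,a)$, substitute the $Q$-identity, and pull the sum inside the expectation using that $\pi$ is non-anticipatory.
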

\begin{rproof}
First note that if $Q_t^\pi(s,a)$ is as defined then we have that:
\begin{align*}
    V_t^\pi(s) & = \sum_a \pi(a \mid s) Q_t^\pi(s,a) \text{ (by Bellman equations)} \\
    & = \sum_a \pi(a \mid s) \E_{\bfxi_{\geq t}} \left[ Q_t^\pi(s,a,\bfxi_{\geq t})\right] \\
    & = \E_{\bfxi_{\geq t}} \left[ \sum_a \pi(a \mid s) Q_t^\pi(s,a,\bfxi_{\geq t}) \right] \text{ (by $\pi$ being non-anticipatory)} \\
    & = \E_{\bfxi_{\geq t}} \left[ V_t^\pi(s, \xi_{\geq t})\right].
\end{align*}

Now we focus on showing the result for $Q_t^\pi(s,a)$ by backwards induction on $t$.  The base case when $t = T$ is trivial as $Q_T^\pi(s,a) = \E_{\xi}[r(s,a,\xi)] = \E_{\bfxi_{\geq T}}[Q_T^{\pi}(s,a,\bfxi_{\geq T})]$.

\noindent \emph{Step Case: $(t+1 \rightarrow t)$} For the step-case a simple derivation shows that
\begin{align*}
    Q_t^\pi(s,a) & = \E_\xi\left[ r(s,a,\xi) + V_{t+1}^{\pi}(f(s,a,\xi)) \right] \\
        & = \E_\xi\left[ r(s,a,\xi) + \E_{\bfxi_{\geq t+1}} \left[ V_{t+1}^{\pi}(f(s,a,\xi), \bfxi_{\geq t+1}) \right]\right] \\
        & = \E_{\bfxi_{\geq t}}\left[r(s,a,\xi_t) + V_{t+1}^{\pi}(f(s,a,\xi_t),\xi_{\geq t+1})\right] \\
        & = \E_{\bfxi_\geq t}[Q_t^\pi(s,a,\xi_{\geq t})].
\end{align*}
\end{rproof}

\begin{theorem}[\cref{thm:ip_gap_regret} of \cref{sec:theory}]
\label{app:ip_gap_regret}
$\Regret(\pip) \leq \sum_{t=1}^T \E_{S_t \sim \Pr_t^{\pip}}[\DeltaIPGap_t(S_t)]$ where $\Pr_t^{\pip}$ denotes the state distribution of $\pip$ at step $t$ induced by the exogenous randomness.
In particular, if $\DeltaIPGap_t(s) \leq \Delta$ for some constant $\Delta$ then we have:
$\Regret(\pip) \leq \Delta \sum_{t=1}^T \E_{S_t \sim \Pr_t^{\pip}}[\Pr(\pip_t(S_t) \neq \pi^\star(S_t))].$
\end{theorem}
\begin{rproof}
First we note that via the performance difference lemma we have that for any two non-anticipatory policies $\pi$ and $\tilde{\pi}$ that
\begin{align*}
    V_1^\pi(s_1) - V_1^{\tilde{\pi}}(s_1) & = \sum_{t} \E_{s_t \sim \Pr_t^{\pi}}\left[ \sum_a \pi(a \mid s_t)(Q_h^{\tilde{\pi}}(s,a) - V_h^{\tilde{\pi}}(s)) \right] \text{ and so } \\
    V_1^{\tilde{\pi}}(s_1) - V_1^\pi(s_1) & = \sum_{t} \E_{s_t \sim \Pr_t^{\pi}}\left[ \sum_a \pi(a \mid s_t)(V_h^{\tilde{\pi}}(s) - Q_h^{\tilde{\pi}}(s,a)) \right]
\end{align*}

Moreover, for any state $s$ we also have $Q_t^\star(s, \pi^\star(s)) - Q_t^\star(s, \pip(s)) \leq \DeltaIPGap_t(s)$ since:
\begin{align*}
    & Q_t^\star(s, \pi^\star(s)) - Q_t^\star(s, \pip(s)) - \DeltaIPGap_t(s) \\ & = Q_t^\star(s, \pi^\star(s)) - Q_t^\star(s, \pip(s)) - \Qip_t(s,\pip(s)) + Q^\star_t(s, \pip(s)) - Q^\star_t(s, \pi^\star(s)) + \Qip_t(s, \pi^\star(s)) \\
    & = \Qip_t(s, \pi^\star(s)) - \Qip_t(s, \pip(s)) \leq 0
\end{align*}
as $\pip$ is greedy with respect to $\Qip$.

Finally, recall the definition of the regret of $\Regret(\pip)$ via $V_1^\star(s_1) - V_1^{\pip}(s_1)$.  However, using the previous performance difference lemma with $\tilde{\pi} = \pi^\star$ and $\pi = \pip$ we have that
\begin{align*}
    V_1^\star(s_1) - V_1^{\pip}(s_1) & = \sum_t \E_{S_t \sim \Pr_t^{\pip}} \left[Q_t^\star(S_t, \pi^\star(S_t)) - Q_t^\star(S_t, \pip(S_t)) \right] \\
    & \leq \sum_t \E_{S_t \sim \Pr_t^{\pip}} \left[\DeltaIPGap_t(S_t)\right].
\end{align*}
The second statement follows immediately from the absolute bound on $\DeltaIPGap_t(s)$.
\end{rproof}

\begin{theorem}[\cref{th:weak theorem} of \cref{sec:theory}]
Let $\overline{\pi}^\dagger$ denote the hindsight planning surrogate policy for the empirical MDP w.r.t. $\mathcal{D}$. Assume $\overline{\pi}^\dagger \in \Pi$ and \cref{alg:training} achieves no-regret in the optimization problem. 
Let $\pi$ be the best policy generated by \cref{alg:training}.
Then, for any $\delta\in(0,1)$, with probability $1-\delta$, it holds
\begin{align*}
    \Regret(\pi) \leq  T \sqrt{\frac{2 \log(2|\Pi| / \delta)}{N}} + \sum_{t=1}^T \E_{s_t \sim \overline{\textrm{Pr}}_t^{\overline{\pi}^\dagger}}[\bar{\Delta}_t^{\dagger}(s_t)] + o(1)
\end{align*}
where $\bar{\Delta}_t^{\dagger}$ is the SAA approximation of \eqref{eq:delta ip gap} and $\overline{\textrm{Pr}}_t^{\overline{\pi}^\dagger}$ is the state probability of   $\overline{\pi}^\dagger$ in the empirical MDP.
\end{theorem}
\begin{rproof}
The proof follows the standard proof technique of online IL (cf. \citep{yan2021explaining}). 
\begin{align*}
 \Regret(\pi)&=  \Exp{V_1^{*}(s_1, \bfxi)} - \Exp{V_1^{\pi}(s_1, \bfxi)} \\
 &\leq \left( | \BarExp{V_1^{\pi}(s_1, \bfxi)} - \Exp{V_1^{\pi}(s_1, \bfxi)}| + | \BarExp{V_1^{\pi^*}(s_1, \bfxi)} - \Exp{V_1^{\pi^*}(s_1, \bfxi)}| \right) \\
 &\quad + \left( 
 \BarExp{V_1^{\pi^*}(s_1, \bfxi)} - \BarExp{V_1^{\bar{\pi}^\dagger}(s_1, \bfxi)}
 + \BarExp{V_1^{\bar{\pi}^\dagger}(s_1, \bfxi)} -  \BarExp{V_1^{\pi}(s_1, \bfxi)} \right) \\
 &\leq   2T \sqrt{\frac{2 \log(2|\Pi| / \delta)}{N}}
 + \BarExp{V_1^{{\pi}^*}(s_1, \bfxi)} -  \BarExp{V_1^{\bar{\pi}^\dagger}(s_1, \bfxi)} + o(1)\\
  &\leq   2T \sqrt{\frac{2 \log(2|\Pi| / \delta)}{N}}
  + \sum_{t=1}^T \E_{s_t \sim \bar{\textrm{Pr}}_t^{\bar{\pi}^\dagger}}[\bar{\Delta}_t^{\dagger}(s_t)] + o(1)
\end{align*}
We use the results of \cref{app:erm_regret} to bound the first terms in the second line; we invoke the no-regret optimization assumption and the realizability assumption $\overline{\pi}^\dagger \in \Pi$  for the last term of the second line. Finally, we apply \cref{thm:ip_gap_regret} in the empirical MDP w.r.t. $\D$ and recognize that the middle term is the empirical regret to derive the last step.
\end{rproof}

\begin{lemma}[\cref{lem:neg_instance} of \cref{sec:theory}]
%\label{lem:neg_instance}
There exists a set of Exo-MDPs such that $\Regret(\pip) \geq \Omega(T)$.
\end{lemma}
\begin{rproof}
We first construct a three-step MDP such that $\Regret(\pip) \geq \Omega(1)$.  The main result then follows by replicating the MDP across $T$ periods to construct a $3T$ step MDP with $\Regret(\pip) \geq \Omega(T)$.

We consider a modification of the prototypical Pandora's Box problem~\citep{weitzman1979optimal}.  The endogenous state space $\X = \{0,1\}$ where state $0$ corresponds to ``not yet accepted an item'' and $1$ corresponds to ``accepted an item''.  The action space $\A = \{0,1\}$ where $a = 0$ corresponds to ``reject next item'' and $a = 1$ corresponds to ``accept next item''.  We consider a modification of the typical Pandora box model where at time step $t$, the next item arrivals $\xi_t$ value is unobserved before deciding whether or not to accept.

The trace distribution has $\xi_1 \sim U[0,1]$ and $\xi_2, \xi_3 \sim U[0, .9]$.  Important to note is that $\Exp{\xi_1} = .5, \Exp{\xi_2} = 0.45, \Exp{\xi_3} = 0.45,$ and a straightforward calculation shows that $\Exp{\max(\xi_2, \xi_3)} = 0.6$.  

The rewards and dynamics are:
\begin{align*}
    r(0,0,\xi) & = 0 & f(0,0,\xi) & = 0 \\
    r(0,1,\xi) & = \xi & f(0,1,\xi) & = 1\\
    r(1, a, \xi) & = 0 & f(1,a,\xi) & = 1 \text{ for } a \in \{0, 1\}.
\end{align*}
Lastly, the starting state $s_1 = 0$.  This properly encodes the exogenous dynamics and rewards.  At step $t$ in state $x = 0$ (i.e. not yet accepted an item) taking action $0$ (do not accept) yields no return and transitions to the next state.  However, accepting the next item returns reward $\xi$ and transitions to state $x=1$.

A straightforward calculation following the Bellman equations shows the following for $Q_t^\star$ and $V_t^\star$:
\begin{align*}
    Q_3^\star(0,0) & = 0 & Q_2^\star(0,0) & = \Exp{\xi_3} & Q_1^\star(0,0) & = \max(\Exp{\xi_2}, \Exp{\xi_3}).\\
    Q_3^\star(0,1) & = \Exp{\xi_3} & Q_2^\star(0,1) & = \Exp{\xi_2} & Q_1^\star(0,1) & = \Exp{\xi_1} \\
    Q_3^\star(1,\cdot) & = 0 & Q_2^\star(1, \cdot) & = 0 & Q_1^\star(1, \cdot) & = 0\\
    V_3^\star(0) & = \Exp{\xi_3} & V_2^\star(0) & = \max(\Exp{\xi_3}, \Exp{\xi_2}) & V_1^\star(0) & = \max(\Exp{\xi_1}, \Exp{\xi_2}, \Exp{\xi_3}) \\
    V_3^\star(1) & = 0 & V_2^\star(1) & = 0 & V_1^\star(1) & = 0.
\end{align*}
Using the choice of the distributions for $\xi_1, \xi_2, \xi_3$ we have that $\pi_1^\star(0) = 1$, as in, we will accept the first item since on average it has larger expected return.  This results in $V_1^{\pi^\star}(s_1) = \Exp{\xi_1} = 0.5$.

We can similarly compute $\Qip_t$ and $\Vip_t$ as follows:
\begin{align*}
    \Qip_3(0,0) & = 0 & \Qip_2(0,0) & = \Exp{\xi_3} & \Qip_1(0,0) & = \Exp{\max(\xi_2, \xi_3)}.\\
    \Qip_3(0,1) & = \Exp{\xi_3} & \Qip_2(0,1) & = \Exp{\xi_2} & \Qip_1(0,1) & = \Exp{\xi_1} \\
    \Qip_3(1,\cdot) & = 0 & \Qip_2(1, \cdot) & = 0 & \Qip_1(1, \cdot) & = 0.\\
\end{align*}
In this scenario, we see first hand the bias introduced when considered $\Qip$.  In particular, $\Qip_1(0,0) = \Exp{\max(\xi_2, \xi_3)} \geq Q_1^\star(0,0) = \max(\Exp{\xi_2, \xi_3})$.  Using the choice of distributions for $\xi_1, \xi_2, \xi_3$ we see that the hindsight planning policy $\pip$ is fooled and has $\pip_1(0) = 0$, so the policy rejects the first item thinking it will get the maximum value of the next two items.  As a result we see that $V_1^{\pip}(0) = 0.45$.

Hence, we have that $\Regret{\pip} = 0.5 - 0.45 = 0.05 = \Omega(1)$ as needed.
\end{rproof}

\begin{lemma}[Statement in \cref{sec:theory}]
\label{lem:app_eq_equiv_ip}
Suppose that for every $t$ and state $s$ that $\max_{\pi \in \Pi} \E_{\bfxi_{\geq t}}[V_t^\pi(s, \bfxi_{\geq t})] = \E_{\bfxi_{> t}}[\plan(t, s, \bfxi_{\geq t})]$.  Then we have that $Q_t^\star(s,a) = \Qip_t(s,a)$ for every $t$, $s$, and $a$.
\end{lemma}
\begin{rproof}
First notice that $\max_{\pi \in \Pi} \E_{\bfxi_{\geq t}}[V_t^\pi(s, \bfxi_{\geq t})] = \E_{\bfxi_{\geq t}}[V_t^\star(s, \bfxi_{\geq t})]$ by definition.  Thus using the Bellman equations and definition of $\Qip_t(s,a)$ we trivially have that:
\begin{align*}
    Q_t^\star(s,a) & = \E_{\xi_t}[r(s,a,\xi_t) + V_{t+1}^\star(f(s,a,\xi_t))] \\
    & = \E_{\xi_t}[r(s,a,\xi_t) + \E_{\bfxi_{> t}}[V_{t+1}^\star(f(s,a,\xi_t), \bfxi_{> t})]] \\
    & =  \E_{\xi_t}[r(s,a,\xi_t) + \E_{\bfxi_{> t}}[\plan(t+1, f(s,a,\xi_t), \bfxi_{> t})]] \\
    & = \Qip_t(s,a).
\end{align*}
\end{rproof}

\srsedit{
\begin{lemma}[Statement in \cref{sec:theory}]
\label{lem:mercier_example}
Define $\widetilde{GAG} = \Exp{\max_{\pi} \sum_{t=1}^T \DeltaIPGap_t(S_t) \mid S_t \sim \Pr_t^{\pi}}$.  Then there exists an Exo-MDP such that $\widetilde{GAG} = \Omega(T)$ and yet the upper bound in \cref{thm:ip_gap_regret} is zero.
\end{lemma}

The ``MakeDecision'' function in \citet{mercier2007performance} implements the empirical Bayes Selector policy using the offline dataset (see \cref{eq:qip}).  There are several key differences between our regret analysis and theirs.  First, \citet{mercier2007performance} define regret with respect to the hindsight optimal policy $\Vip(s_1)$, whereas our regret is with respect to the best non-anticipatory policy $V^\star(s_1)$. Their main result (Theorem 1) shows that the hindsight optimal regret is bounded by the “Global Anticipatory Gap” $GAG = \Exp{\max_{\pi} \sum_{t=1}^T \Omega_t(S_t, A_t) \mid (S_t, A_t) \sim \Pr_t^{\pi}}$. However, there are situations where the GAG can be large, and one could incorrectly conclude that hindsight optimization should not be applied when trying to learn the true optimal non-anticipative policy $\pi^\star$. In the Sailing example of \cref{sec:example}, the GAG will be positive since knowing the direction of the wind one can ex-post identify the optimal route compared to any non-anticipatory algorithm. However, $Q^\star(\text{route}) = \Qip(\text{route})$ and so there is no hindsight bias.  Hence, our results which adjust for the difference in benchmark to $\pi^\star$ is more appropriate.

Moreover, their regret bound measures a worst-case overestimation bias on the states visited by any decision policy. In contrast, \cref{thm:ip_gap_regret} requires that the hindsight bias be small only on states visited by $\pip$. Even if we set aside the difference between our benchmarks for regret (adjusting the definition by $V^\star(s_1) - \Vip(s_1)$ resulting in $\widetilde{GAG}$ defined in the statement of \cref{lem:mercier_example}), our analysis is much tighter which we illustrate with an example below.

\begin{rproof}
We construct an Exo-MDP $\M$ as follows.  Consider a starting state $s_0$ with two actions $A$ and $B$ that deterministically transition to two different ``sub-MDPs'' (which we denote as $\M_A$ and $\M_B$ respectively).  

The first action, $A$, transitions to sub-MDP $\M_A$ which contains an absorbing state $s_A$ (i.e. transitions are deterministic to the same state $s_A$ regardless of the action) with large rewards.  Note that since this sub-MDP is deterministic it has no hindsight bias (so $\DeltaIPGap_t(s_A) = 0)$.

The second action $B$, transitions to an MDP $\M_B$ which witnesses \cref{lem:neg_instance}, and hence has $\Omega(T)$ hindsight bias.  We adjust the rewards along $\M_B$ such that the value of the optimal policy in this sub-MDP is much smaller than the deterministic reward accrued in $\M_A$.

In this example, $\pip$ will always select action $A$ in the initial state and collect higher rewards (since the optimal policy knowing the exogenous inputs in MDP $\M_B$ will still collect smaller rewards than the deterministic value accrued in $\M_A$).  Hence, our regret bound in \cref{thm:ip_gap_regret} will be zero (since $\DeltaIPGap(s_A) = 0$ and $\pip$ will never visit $s_B$).  However, $\widetilde{GAG}$ will conservatively account for the sub-optimal $B$ decision which transitions to a state with a large anticipatory gap ($\Omega(T)$), thereby concluding a large regret bound.
\end{rproof}

}

\begin{theorem}[\cref{thm:pto_regret} of \cref{sec:exogenous_value}]
\label{thm:app_pto_regret}
Suppose that $\sup_{t \in [T], \bfxi_{< t} \in \Xi^{[t-1]}} \norm{\overline{\PXi}(\cdot | \bfxi_{< t}) - \PXi(\cdot | \bfxi_{< t})}_{TV} \leq \epsilon$ where $\norm{\cdot}_{TV}$ is the total variation distance. Then we have that $\Regret(\pibar) \leq 2T^2 \epsilon$.  
In addition, if $\bfxi \sim \PXi$ has each $\xi_t$ independent from $\bfxi_{<t}$ with $\Xi$ discrete, $\overline{\PXi}$ is the empirical distribution, then $\forall \delta \in (0,1)$, with probability at least $1 - \delta$, $\Regret(\pibar) \leq T^{3/2}|\Xi|\sqrt{\frac{2 \log(2 |\Xi| / \delta)}{N}}.$
\end{theorem}
\begin{rproof}
First note that $\overline{Q}_t$ and $\overline{V}_t$ refer to the $Q$ and $V$ values for the optimal policy in a modified MDP $\overline{M}$ where the true exogenous input distribution $\PXi(\cdot \mid \bfxi_{< t})$ is replaced by its estimate $\overline{\PXi}(\cdot \mid \bfxi_{< t})$.  As such, denote by $\overline{V}_t^\pi$ as the value function for the policy $\pi$ in the MDP $\overline{M}$.  Note here that $\overline{V}_t^{\pibar} = \overline{V}_t$ by construction.  With this we have that:
\begin{align*}
\Regret(\pibar) & = V_1^\star(s_1) - V_1^{\pibar}(s_1) \\
& = V_1^\star(s_1) - \overline{V}_1^{\pi^\star}(s_1) + \overline{V}_1^{\pi^\star}(s_1) - \overline{V}_1(s_1) + \overline{V}_1(s_1) - V_1^{\pibar}(s_1) \\
& \leq 2 \sup_{\pi} | V_1^\pi(s_1) - \overline{V}_1^\pi(s_1) |.
\end{align*}
However, using the finite horizon simulation lemma (see Lemma 1 in \citet{abbeel2005exploration}) we have that this is bounded from above by $2 T^2 \norm{P(\cdot \mid s,a) - \overline{P})(\cdot \mid s,a)}_{TV}$ where $\norm{P(\cdot \mid s,a) - \overline{P})(\cdot \mid s,a)}_{TV}$ is the total variation distance in the induced state-transition distributions between $M$ and $\overline{M}$.  However, by definition we have that:
\begin{align*}
    \norm{P(\cdot \mid s,a) - \overline{P}(\cdot \mid s,a)}_{TV} & = \frac{1}{2} \int_{\mathcal{S}} \abs{P(s' \mid s,a) - \overline{P}(s' \mid s,a)} ds \\
    & = \frac{1}{2} \int_{\mathcal{S}} \abs{\int_\Xi \Ind{s' = f(s,a,\xi)} d \PXi(\xi \mid \bfxi_{\geq t}) - \Ind{s' = f(s,a,\xi)} d \overline{\PXi}(\xi \mid \bfxi_{\geq t})} ds \\
    & = \frac{1}{2} \int_{\mathcal{S}} \int_\Xi \Ind{s' = f(s,a,\xi)} \abs{d \PXi(\xi \mid \bfxi_{\geq t}) - d \overline{\PXi}(\xi \mid \bfxi_{\geq t})} \\
    & \leq \norm{\PXi(\cdot \mid \bfxi_{\geq t}) - \overline{\PXi}(\cdot \mid \bfxi_{\geq t})}_{TV} \leq \epsilon.
    \end{align*}
Thus we get that $\Regret(\pibar) \leq 2T^2 \epsilon$ as required.

Now suppose that $\bfxi \sim \PXi$ has each $\xi_t$ independent from $\bfxi_{< t}$ and let $\overline{\PXi}$ be the empirical distribution, i.e. $\overline{\PXi}(\xi) = \frac{1}{NT} \sum_{i \in [N], t \in [T]} \Ind{\bfxi_{t}^i = \xi}$.  A straightforward application of Hoeffding's inequality shows that the event:
$$\mathcal{E} = \left\{\forall \xi: |\overline{\PXi}(\xi) - \PXi(\xi)| \leq \sqrt{\frac{\log(2|\Xi| / \delta)}{2NT}} \right\}$$
occurs with probability at least $1 - \delta$.  Under $\mathcal{E}$ we then have that:
\begin{align*}
    \sup_{t \in [T], \bfxi_{< t} \in \Xi^{[t-1]}} \norm{\overline{\PXi}(\cdot | \bfxi_{< t}) - \PXi(\cdot | \bfxi_{< t})}_{TV} & \leq \sup_{t \in [T], \bfxi_{< t} \in \Xi^{[t-1]}} \norm{\overline{\PXi}(\cdot | \bfxi_{< t}) - \PXi(\cdot | \bfxi_{< t})}_{1} \leq |\Xi| \sqrt{\frac{\log(2|\Xi| / \delta)}{2NT}}.
\end{align*}
Taking this in the previous result shows the claim.
\end{rproof}

\begin{theorem}[\cref{thm:erm_regret} of \cref{sec:exogenous_value}]
\label{app:erm_regret}
Given any $\delta \in (0,1)$ then with probability at least $1 - \delta$ we have that if $\perm = \argmax_{\pi} \BarExp{V^{\pi}(\xi)}$ that 
\[
    \Regret(\perm) \leq \sqrt{\frac{2T^2 \log(2|\Pi| / \delta)}{N}}.
\]
\end{theorem}

\begin{rproof}
A quick calculation using Hoeffding's inequality and a union bound shows that the event
$$\mathcal{E} = \left\{\forall \pi \in \Pi: \abs{V_1^\pi(s_1) - \BarExp{V_1^\pi(s_1)}} \leq \sqrt{\frac{T^2 \log(2|\Pi| / \delta)}{2N^2}} \right\}$$
occurs with probability at least $1 - \delta$.  Under $\mathcal{E}$ we then have that:
\begin{align*}
    \Regret(\perm) & = V_1^{\pstar}(s_1) - V_1^{\perm}(s_1) \\
    & = V_1^{\pstar}(s_1) - \BarExp{V_1^{\pstar}(s_1, \bfxi)} + \BarExp{V_1^{\pstar}(s_1, \bfxi)} - \BarExp{V_1^{\perm}(s_1, \bfxi)} + \BarExp{V_1^{\perm}(s_1, \bfxi)} - V_1^{\perm}(s_1) \\
    & \leq 2 \sqrt{\frac{V_{max}^2 \log(2|\Pi| / \delta)}{2N^2}}.
\end{align*}
\end{rproof}

\begin{theorem}[\cref{thm:delta_ip_bin_packing} of \cref{app:bin_packing_results}]
\label{app:delta_ip_bin_packing}
In stochastic online bin packing with i.i.d. arrivals we have that $\sup_{t, s} \DeltaIPGap_t(s) \leq O(1)$, independent of the time horizon and any problem primitives.  As a result, $\Regret(\pip) \leq O(1)$.
\end{theorem}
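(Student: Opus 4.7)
The plan is to reduce the proof to a single absolute bound on the hindsight bias, and then invoke \cref{thm:ip_gap_regret} together with the $O(1/t^2)$ disagreement bound of Lemma~3.1 in \cite{freund_good_2019}. Concretely, once I show $\sup_{t,s}\DeltaIPGap_t(s) \leq C$ for some universal constant $C$ (independent of $T$, bin size $B$, and item distribution), \cref{thm:ip_gap_regret} gives
\[
\Regret(\pip) \;\leq\; C \sum_{t=1}^T \E_{S_t \sim \Pr_t^{\pip}}\bigl[\Pr(\pip_t(S_t)\neq\pi^\star_t(S_t))\bigr] \;\leq\; C \sum_{t=1}^T O(1/t^2) \;=\; O(1).
\]

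The main technical step is the absolute bound. I would first rewrite
\[
\DeltaIPGap_t(s) \;=\; \bigl[\Qip_t(s,\pip(s)) - \Qip_t(s,\pi^\star(s))\bigr] + \bigl[Q^\star_t(s,\pi^\star(s)) - Q^\star_t(s,\pip(s))\bigr],
\]
where each bracket is non-negative by greediness of $\pip$ (resp.\ $\pi^\star$) with respect to $\Qip$ (resp.\ $Q^\star$). It therefore suffices to establish an \emph{action-sensitivity} lemma: for every state $s$ and any two feasible actions $a_1, a_2$,
\[
|\Qip_t(s,a_1) - \Qip_t(s,a_2)| \leq 2 \quad\text{and}\quad |Q^\star_t(s,a_1) - Q^\star_t(s,a_2)| \leq 2,
\]
which together yield $\DeltaIPGap_t(s) \leq 4$.

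Both inequalities follow from a one-item coupling. Fix any trace $\bfxi_{\geq t}$ and let $s^{(i)} = f(s, a_i, \xi_t)$; the two configurations differ only in which bin the single item $\xi_t$ occupies. Given any hindsight-optimal packing of $\bfxi_{>t}$ on top of $s^{(1)}$, a feasible packing on top of $s^{(2)}$ is produced by opening at most one extra bin to absorb the ``misplaced'' copy of $\xi_t$, since any single item fits in a fresh bin of size $B$; combined with the fact that the immediate opening reward differs by at most $1$, this proves the $\Qip$ bound after taking expectation over $\bfxi_{\geq t}$. For the $Q^\star$ bound, the same coupling converts any non-anticipating policy achieving $Q^\star_t(s,a_1)$ into a non-anticipating policy from $(s,a_2)$ that imitates the first action-for-action, treating its one extra/missing bin slot as a phantom bin that is materialized at most once along the trajectory, losing at most two units of total reward overall.

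The main obstacle is making the $Q^\star$ coupling fully non-anticipatory: the mimicking policy must decide \emph{online} when to ``spend'' its extra bin without peeking at $\bfxi_{>t}$. I expect the cleanest formalization to define the mimicking policy via a bijection between bins in the two systems that is updated causally (only when the imitated policy would first attempt to use the phantom bin), which ensures a single extra opening cost is paid and nothing else. Once the constant $C \leq 4$ is established, the regret conclusion is immediate from the sum $\sum_t 1/t^2 = O(1)$, as sketched above.
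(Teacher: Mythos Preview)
Your proposal is correct and follows essentially the same route as the paper. The paper factors the argument through an intermediate ``Lipschitz'' lemma showing $\Vip_t$ and $V^\star_t$ are $1$-Lipschitz in the bin-count vector (in the $\ell_1$ sense), proved by exactly the kind of bin-relabelling coupling you sketch; it then observes that two feasible actions produce next states whose bin-count vectors differ in at most two coordinates, giving a bound of $3$ per bracket (so $\DeltaIPGap_t(s)\leq 6$) where you get $2$ per bracket by coupling directly. Your identification of the non-anticipatory coupling for $Q^\star$ as the main care point, and your proposed fix (causally materialize a phantom bin the first time the imitated policy would touch the unavailable slot, then redirect all future references), is precisely how the paper's Lipschitz lemma handles $V^\star$; once that lemma is in hand, the regret conclusion via \cref{thm:ip_gap_regret} and the $O(1/t^2)$ disagreement probability is immediate, as you say.
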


We show the result by starting with the lemma, highlighting that the value functions for the planning policy and the optimal non-anticipatory policy are ``Lipschitz'' with respect to the capacity of the current bins.  Recall that the state space representation $s \in \mathcal{S}$ corresponds to $s = (x, \xi_{t-1})$ where $x \in \mathbb{R}^{B}$ is the current number of bins at that size, and the last component to the current arrival.  We write this explicitly as containing $s \in \mathbb{R}^{B}$ for the bin utilization and $\xi_{t-1} \in \mathbb{R}$ for the current arrival.
\begin{lemma}
For any $t \in [T]$, current bin capacity $x \in \mathbb{R}^{|B|}$, current arrival $\xi_{t-1}$, $\bfxi_{\geq t} \in \Xi^{T - t}$, and $\Delta \in \mathbb{R}^{B} \geq 0$ we have that:
\begin{itemize}
    \item $\Vip_t(x, \xi_{t-1}, \bfxi_{\geq t}) \geq \Vip_t(x-\Delta, \xi_{t-1}, \bfxi_{\geq t}) \geq \Vip_t(x,\xi_{t-1},\bfxi_{\geq t}) - \norm{\Delta}_1$
    \item $V_t^\star(x,\xi_{t-1}) \geq V_t^\star(x - \Delta,\xi_{t-1}) \geq V_t^\star(x,\xi_{t-1}) - \norm{\Delta}_1$
\end{itemize}
As a result for any $x$ and $x'$ in $\mathbb{R}^{B}$ and current arrival $\xi_{t-1}$ we have that:
\begin{itemize}
    \item $\Vip_t(x, \xi_{t-1}, \bfxi_{\geq t}) - \Vip_t(x', \xi_{t-1}, \bfxi_{\geq t}) \leq \norm{(x-x')^+}_1$
    \item $V_t^\star(x, \xi_{t-1}) - V_t^\star(x', \xi_{t-1}) \leq \norm{(x - x')^+}_1$
\end{itemize}
\end{lemma}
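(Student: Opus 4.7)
The plan is to reinterpret both value functions as negative counts of new bins opened on the remainder of the trajectory. Since opening a new bin (action $0$) is always feasible and all other feasible actions earn zero reward, a trajectory contributes cumulative reward $-N$ where $N$ is the number of new bins opened. Thus $\Vip_t(x,\xi_{t-1},\bfxi_{\geq t}) = -N^{\dagger}(x,\xi_{t-1},\bfxi_{\geq t})$, where $N^{\dagger}$ is the minimum number of new bins needed to pack the remaining items in hindsight, and $V^\star_t(x,\xi_{t-1}) = -\mathbb{E}_{\bfxi_{\geq t}}[N^{\pi^\star}(x,\xi_{t-1},\bfxi_{\geq t})]$ for the optimal non-anticipatory policy $\pi^\star$. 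Under this interpretation, both stated inequalities become bounds on new bins needed from different starting configurations.

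Monotonicity in both lines is then immediate from a trivial coupling: any packing strategy from starting state $x-\Delta$ remains valid from $x$ by leaving the extra $\Delta$ bins untouched, so $N^{\dagger}(x,\bullet)\leq N^{\dagger}(x-\Delta,\bullet)$ pathwise and $\mathbb{E}[N^{\pi^\star}(x,\bullet)]\leq \mathbb{E}[N^{\pi^\star}(x-\Delta,\bullet)]$ in expectation, yielding $\Vip_t(x,\xi_{t-1},\bfxi_{\geq t})\geq \Vip_t(x-\Delta,\xi_{t-1},\bfxi_{\geq t})$ and $V^\star_t(x,\xi_{t-1})\geq V^\star_t(x-\Delta,\xi_{t-1})$.

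For the Lipschitz upper bound on $\Vip$, I would fix any trace and take an optimal offline packing of $(\xi_{t-1},\bfxi_{\geq t})$ starting from $x$. At most $\norm{\Delta}_1$ of the existing bins it uses are ``missing'' in the $x-\Delta$ configuration; for each such missing bin, originally at level $a_i$, open one fresh bin of capacity $B$ and re-route the items the missing bin would have received into it. Since those items have total mass at most $B-a_i\leq B$, they fit, giving a valid packing from $x-\Delta$ that opens at most $N^{\dagger}(x,\xi_{t-1},\bfxi_{\geq t})+\norm{\Delta}_1$ new bins. For $V^\star$, the same replacement idea is implemented as a non-anticipatory policy $\tilde{\pi}$ that simulates $\pi^\star$ on a virtual copy of $x$ and, whenever $\pi^\star$ targets a bin that is missing in the real state, routes that item to a dedicated fresh bin (opening it at most once per missing bin). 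Since the state $s_\tau=(x_\tau,\bfxi_{<\tau})$ contains the full exogenous history, $\tilde{\pi}$ can deterministically reconstruct both the virtual state and its missing-bin bookkeeping from what it observes, and so lies in the class of non-anticipatory Markov policies. The pathwise bound $N^{\tilde{\pi}}(x-\Delta,\xi_{t-1},\bfxi_{\geq t})\leq N^{\pi^\star}(x,\xi_{t-1},\bfxi_{\geq t})+\norm{\Delta}_1$ then passes through expectations and through $V^\star_t(x-\Delta,\xi_{t-1})\geq V^{\tilde{\pi}}_t(x-\Delta,\xi_{t-1})$ to give the bound.

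The main obstacle I expect is the $V^\star$ case: verifying carefully that $\tilde{\pi}$ is a legitimate non-anticipatory Markov policy, i.e. that the identification of ``missing'' and ``replacement'' bins is reconstructible purely from $(x_\tau,\bfxi_{<\tau})$ without any peek into $\bfxi_{\geq\tau}$. The offline hindsight case sidesteps this issue entirely, which is why it works first. Once $\tilde{\pi}$ is in hand, the final two inequalities for general $x,x'$ follow by setting $\Delta=(x-x')^+\geq 0$ so that $x-\Delta=\min(x,x')$ componentwise with $x-\Delta\leq x'$; monotonicity yields $\Vip_t(x',\xi_{t-1},\bfxi_{\geq t})\geq \Vip_t(x-\Delta,\xi_{t-1},\bfxi_{\geq t})$, the Lipschitz bound gives $\Vip_t(x,\xi_{t-1},\bfxi_{\geq t})-\Vip_t(x-\Delta,\xi_{t-1},\bfxi_{\geq t})\leq \norm{(x-x')^+}_1$, and combining the two chains (mirrored for $V^\star$) delivers the claim.
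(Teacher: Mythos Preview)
Your proof is correct and follows essentially the same coupling argument as the paper: monotonicity via feasibility of the smaller-state action sequence from the larger state, and the Lipschitz bound via opening at most $\norm{\Delta}_1$ replacement bins whenever the optimal plan from $x$ targets a bin absent in $x-\Delta$. Your treatment is if anything more careful on the $V^\star$ side (the paper simply writes ``by a similar argument'' without constructing your $\tilde\pi$ or discussing its non-anticipatory status), and your algebraic derivation of the general $x,x'$ bound via the intermediate $\min(x,x')$ is the mirror image of the paper's route via $x'+(x-x')^+=\max(x,x')$.
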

\begin{rproof}
First consider the top statement in terms of the optimal planning policy starting from a fixed state $s = (x, \xi_{t-1})$ and sequence of future exogenous variables $\bfxi_{\geq t}$.

We have that $\Vip_t(x, \xi_{t-1}, \bfxi_{\geq t}) \geq \Vip_t(x - \Delta, \xi_{t-1}, \bfxi_{\geq t})$ since the sequence of actions generated by the planning oracle starting from state $(x - \Delta, \xi_{t-1})$ is feasible for the same problem starting from $(x, \xi_{t-1})$.  Hence, as $\Vip_t(x, \xi_{t-1}, \bfxi_{\geq t})$ denotes the optimal such policy, the inequality follows.

For the other direction consider the sequence of actions starting from $(x, \xi_{t-1})$.  Using at most $\norm{\Delta}_1$ bins the policy is feasible for the same problem starting at $(x - \Delta, \xi_{t-1})$.  Indeed, suppose the sequence of actions starting from the problem at $(x, \xi_{t-1})$ attempts to use a bin which is not available in the problem starting from $(x - \Delta, \xi_{t-1})$.  Then by opening a new bin instead and shifting all future references of the old bin to the newly created bin, the sequence of actions is feasible.  As there are at most $\norm{\Delta}_1$ bins different in the $(x,\xi_{t-1})$ problem versus the $(x - \Delta, \xi_{t-1})$ problem, the bound follows.

Now consider the second statement in terms of the optimal non-anticipatory policy starting from a fixed state $(x, \xi_{t-1})$.  First note that $V_t^\star(x, \xi_{t-1}) = \E_{\bfxi_{\geq t}}[V_t^\star(x, \xi_{t-1}, \bfxi_{\geq t})]$ and similarly for $V_t^\star(x - \Delta, \xi_{t-1})$.  We have that $V_t^\star(x, \xi_{t-1}) \geq V_t^\star(x - \Delta, \xi_{t-1})$ as the optimal policy starting from $(x - \Delta, \xi_{t-1})$ is feasible on all sample paths generated by $\bfxi_{\geq t}$ to the same problem starting at $(x, \xi_{t-1})$. Hence by optimality of $\pi^\star$ the inequality must follow.

For the other direction, on any sample path consider the sequence of actions generated by the optimal policy starting from $(x, \xi_{t-1})$.  By a similar argument, again using at most $\norm{\Delta}_1$ extra bins the policy is feasible for the problem starting at $(x - \Delta, \xi_{t-1})$.  Hence by optimality, the inequality follows.

The second result follows via straightforward algebraic manipulations.  Indeed, the previous statement can be thought of as showing that for $x \in \mathbb{R}^{B}$ and $\Delta \in \mathbb{R}^{B}_+$ that $f(x) \geq f(x - \Delta) \geq f(x) - \norm{\Delta}_1$.  However, 
$$f(x) - f(x') = f(x) - f(x'+(x-x')^+) + f(x'+(x-x')^+) - f(x') \leq f(x'+(x-x')^+) - f(x') \leq \norm{(x-x')^+}_1$$
where the first inequality uses that $x'+(x-x')^+ \geq x$ and the second the previous result.
\end{rproof}

We are now ready to show the bound that $\DeltaIPGap_t(s) \leq O(1)$.

\begin{rproof}
For a fixed time $t$ and state $s$ consider $\DeltaIPGap_t(s) = \Qip_t(s,\pip(s)) - Q^\star_t(s, \pip(s)) + Q^\star_t(s, \pi^\star(s)) - \Qip_t(s, \pi^\star(s))$

However consider $\Qip_t(s,\pip(s)) - \Qip_t(s, \pi^\star(s))$ (with the other term dealt with similarly).  On any sample path, the difference in these terms is bounded by the immediate reward plus the difference of the value at the next states.  By problem definition, the difference in immediate rewards is bounded by one.  However, consider the difference in value functions at the next state.  Their state representation has a value of $\norm{(x - x')^+}_1$ of at most 2 (for the two bins that were potentially modified).  Hence, this difference is bounded by $3$ in total.  A similar argument for $Q^\star$ completes the proof. 
\end{rproof}

\section{Simulation Details}
\label{sec:experiments_appendix}

In this section we provide full details on the simulations conducted, including a formal description of virtual machine allocation scenarios along with its fidelity to the real-world cloud scenarios, training implementations, hyperparameter tuning results, and a description of the heuristic algorithms compared.

\subsection{Online Bin-Packing}
\label{app:bin_packing_results}

In a \textbf{Stochastic Online Bin Packing (OBP)} problem the agent has an infinite supply of bins of size $B$. Each round, items $u_t \in \{0,\dots B\}$ arrive sampled iid from an unknown distribution. The agent either \emph{packs} the item into an opened feasible bin or \emph{opens} a new bin, with the goal to minimize the expected number of opened bins after $T$ rounds. 

\cref{sec:gen_examples_mdp} describes how OBP are Exo-MDPs with $u_t$ as the exogenous inputs. 
Lemma 3.1 from \citet{freund_good_2019} shows that in stochastic OBP, $\Pr(\pip_t(S_t) \neq \pi^\star_t(S_t)) \leq O(\frac{1}{t^2})$. Intuitively, as $t \rightarrow T$ there is little contribution from $\plan(t,\bfxi_{>t})$ to the $\Qip$ of Equation~\ref{eq:qip} and so $\pip$ is more likely to coincide with $\pi^\star$. 
Using a novel absolute bound on $\Delta$ and \cref{thm:ip_gap_regret}, we have:
\begin{lemma}
\label{thm:delta_ip_bin_packing}
In stochastic online bin packing with i.i.d. arrivals, $\sup_{t, s} \DeltaIPGap_t(s) \leq O(1)$, independent of %the horizon and 
any problem primitives.  Hence, $\Regret(\pip) \leq O(1)$.
\end{lemma}

To numerically validate this claim, we use \ORSuite~\citep{archer2022orsuite} as an OBP simulator with $B=5$ and vary $T$ from $5$ to $100$. For these small problem sizes, we can compute $\pi^\star$ by solving Bellman equations exactly. The exogenous process $\PXi$ is uniform: $u_t \sim \text{Unif}[B]$ and we generate $|\D| = 1000$ traces, and benchmark the resulting learned policy against $\pi^\star$. The hindsight planner is represented with the integer program in \cref{app:bin_pack_plan} (solved efficiently by linear relaxation).
Any learned policy maps a $B+2$-dim state (a vector denoting number of bins open with utilization from $1$ to $B$, the current item and the remaining rounds) to a decision $A \in \{0,\dots, B-u_t\}$ to select a feasible bin ($0$ opens a new bin).

\begin{table}[ht]
\caption{Hindsight bias in OBP decreases as $T$ increases. Thus, $\pip$ becomes a better surrogate for $\pi^\star$.} %over one-thousand randomly generated problem instances with $B = 5$, five item types, and distribution over item types uniformly generated from the probability simplex.} 
\label{tab:bin_results}
\setlength\tabcolsep{0pt} % let LaTeX compute intercolumn whitespace

\smallskip 
\begin{tabular*}{\columnwidth}{@{\extracolsep{\fill}}rcc}
\toprule
  $T$  & MaxBias & \% $\{\exists s: \pi^*_t(s) \neq \pip_t(s)\}$ \\
\midrule
  $5$ & $1.240$ & $6.8\%$ \\
  $10$ & $0.646$ & $3.4\%$\\
  $100$ & $0.066$ & $0.3\%$\\
\bottomrule
\end{tabular*}
\end{table}

For each OBP problem with $T \in \{ 5, 10, 100\}$ we report in Table~\ref{tab:bin_results} 
 the maximum hindsight bias $\text{MaxBias}=\Exp{\max_{s,t} \DeltaIPGap_t(s)}$, where the expectation averages over $1000$ sampled problem instances. We also report the percentage of problem instances where at least a single state witnesses $\pi^*_t(s) \neq \pip_t(s)$. We see that as $T$ increases, $\pip$ becomes a good surrogate for $\pi^\star$ which bodes well for HL.

\subsection{Multi-Secretary}

Multi-secretary is the generalization of the classic secretary problem \citep{buchbinder2009secretary}, where $T$ candidates arrive sequentially but only $B$ can be selected.  An arriving candidate at time $t$ has ability $r_t \in (0,1]$ drawn i.i.d.~from a finite set of $K$ levels of expertise.  At each round, if the decision-maker has remaining budget (i.e., has chosen less than $B$ candidates thus far), they can \emph{accept} a candidate and collect the reward $r_t$, or \emph{reject} the candidate. The goal is to maximize the expected cumulative reward.  \cref{sec:secretary_example} shows how the multi-secretary problem can be formulated as an Exo-MDP.

We use $T=\{5,10,100\}, B=\frac{3}{5}T, K=4$ for our experiments. 
 With four expertise levels the corresponding abilities for the expertise levels was chosen to be $\{1/4, 1/2, 3/4, 1\}$. 
The arrival process for the ability types is non-stationary and sinusoidal with a type-dependent shift and frequency.  Denoting $p_j^t$ as the arrival probability of a type $j$ customer at timestep $t$, the distribution is as follows.  First, $p_j^1$ is chosen to be uniformly at random from $[0, 2\pi]$.  Next, the frequency for each $j$ is chosen to be uniformly from $[0, \pi/4]$.  The final arrival probabilities $p_j^t$ are then chosen to be sinusoidal with that shift and frequency value.  These values are then normalized appropriately to be a valid distribution.

In \cref{tab:performance} we report the performance by evaluating each policy using dynamic programming with the true arrivals distribution.
The Greedy heuristic accepts the first $B$ candidates regardless of their quality.
ML methods uses a single trace sampled from the non-stationary candidate arrival process, and use a policy that maps a $3$-dim state (the rounds and budget remaining, and the current candidate ability) to an \emph{accept} probability. 
For the hindsight planner, we use Equation~$2$ from~\citet{banerjee2020constant} which implements a linear program with $2N$ variables.

\subsection{Airline Revenue Management}

 Airline Revenue Management~\citep{littlewood1972forecasting} is a special case of the multi-dimensional Online Bin Packing (OBP) problem (recall that OBP exhibits vanishing hindsight bias via \cref{thm:delta_ip_bin_packing}).  The agent has capacity $B_k$ for $K$ different resources.  At each round, the decision-maker observes a request $A_t \in \mathbb{R}_+^K$ (the consumed capacity in each resource dimension), alongside a revenue $f_t$.  The algorithm can either \emph{accept} the request (obtaining revenue $f_t$ and updating remaining capacity according to $A_t$), or \emph{reject} it (note that partial acceptance is not allowed). The goal of the decision-maker is to maximize the expected revenue.

We use \ORSuite~\citep{archer2022orsuite} as an ARM simulator with fixed capacity, iid.~request types and job distribution. 
We use $T = \{5,10,100\}, K=3$, and $2$ request types.  The starting capacity for the three resources set to be $[8,4,4]$.  The iid arrival distribution is $(1/3, 1/3, 1/3)$ (where the last category corresponds to no arrival).  Job one arrivals have resource requests $[2,3,2]$ with revenue $1$, and job two arrivals have resource requests $[3,0,1]$ with revenue $2$.  This setting satisfies a dual-degeneracy condition of the hindsight planner from ~\citet{vera2021bayesian} which shows large regret for existing heuristics on these problems.

The optimal policy is computed through dynamic programming.  Both RL (Tabular Q-learning) and HL (Hindsight MAC) were trained on the same dataset, which contained $100$ traces.  In \cref{tab:performance} we report the performance of the policies through Monte Carlo simulations averaged over 500 iterations.

\subsection{Virtual Machine Allocation}

\begin{figure}[t]
\centering
\subfigure[Figure A]{\label{fig:core_usage}\includegraphics[width=.46\linewidth]{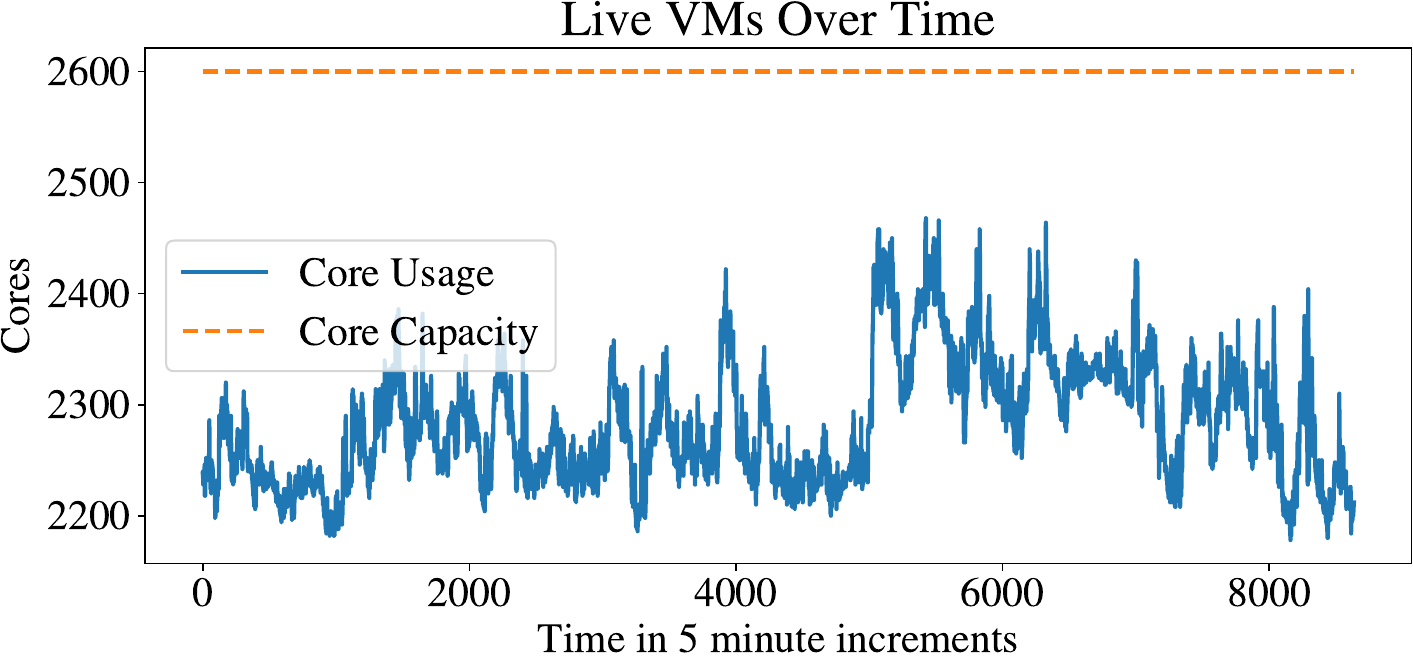}}
\subfigure[Figure B]{\label{fig:histogram}\includegraphics[width=.46\linewidth]{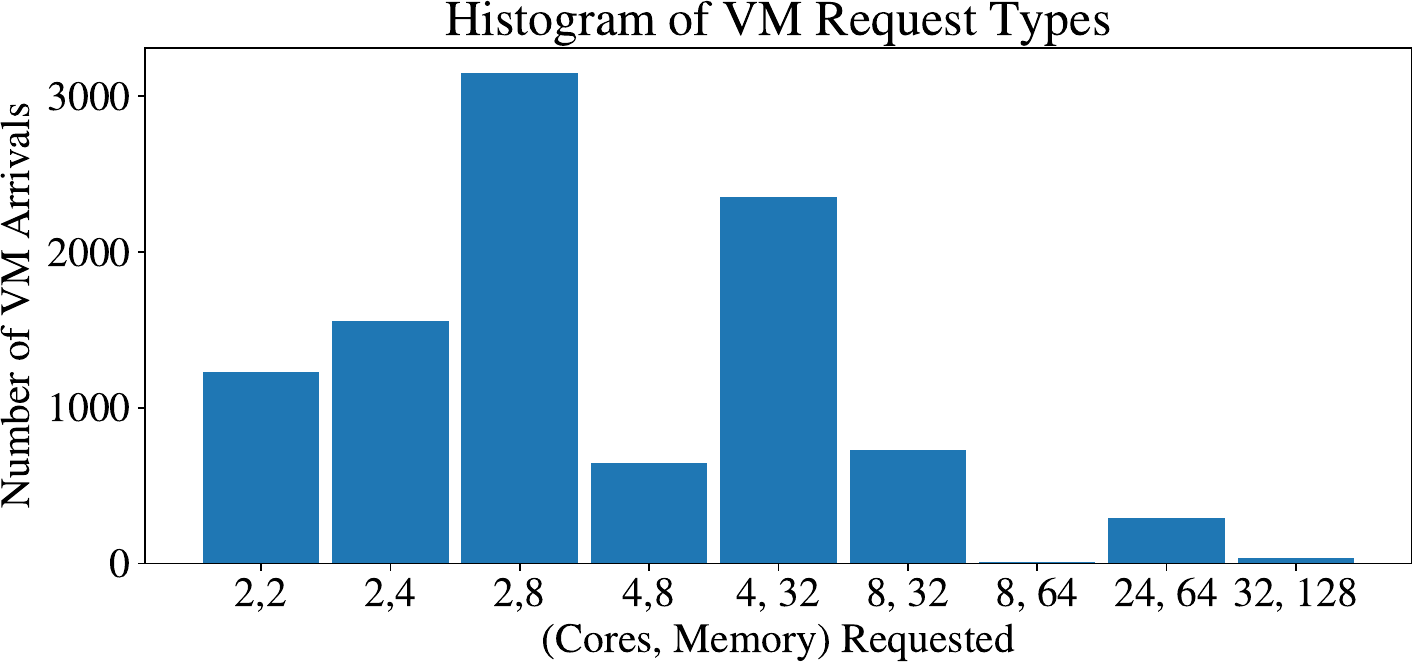}}
\caption{Sample of a thirty-day roll-out from the Azure Public Dataset.  In \cref{fig:histogram} we show a histogram of the various VM types and their corresponding cores and memory resources requested.  In \cref{fig:core_usage} we plot the used cores over time on the observed trace, along with the capacity of the $80$-node cluster simulated using MARO.}
\label{fig:vm_trace_summary}
\end{figure}

Cloud computing has revolutionized the way that computing resources are consumed.  These providers give end-users easy access to state-of-the-art resources.  One of the most crucial components to cloud computing providers is the Virtual Machine (VM) allocator, which assigns specific VM requests to physical hardware.  Improper placement of VM requests to physical machines (henceforth referred to as PMs) can cause performance impact, service delays, and create allocation failures.
%The Microsoft Azure system consists of millions of physical machines spanning more than a hundred countries, offering more than five hundred different VM types (see \cref{fig:histogram}).  
The VMs serve as the primary units of resource allocation in these models. We focus on designing allocation policies at the \emph{cluster} level, which are a homogeneous set of physical machines with the same memory and CPU cores capacity.  

The cluster-specific allocator is tasked with the following:
\begin{itemize}[leftmargin=*]
    \item Coming VM requests ask for a certain amount of resources (CPU and memory requirements) along with their lifetime.  Resource requirements are varied based on the different VM requests.
    \item Based on the action selected by the allocator policy, the VM will be allocated to and be created in a specified PM as long as that PM's remaining resources are enough.
    \item After a period of execution, the VM completes its tasks.  The simulator will then release the resources allocated to this VM and de-allocate this VM from the PM.
\end{itemize}

\subsection{Stylized Environment}

We use \MARO, an open source package for multi-agent reinforcement learning in operations research tasks as a simulator for the VM allocator~\citep{MARO_MSRA}.  In this scenario the VM requests are uniformly sampled from the 2019 snapshot of the Azure Public Dataset~\citep{cortez2017resource}. 
The cluster is a fictitious one consisting of $80$ PMs that we found were similarly over-provisioned as in real-world clusters. See \cref{fig:core_usage} to highlight the demand workload against the cluster capacity for our experiment setup.

By default, \MARO provides reward metrics that can be used when specifying the objective of the algorithm.  The metrics provided include income, energy consumption, profit, number of successful and failed allocations, latency, and total number of overloaded physical machines.  However, typical cloud computing systems run in an \emph{over-provisioned} regime where the capacity of the physical machines is larger than the demand in order to ensure quality of service to its customers.  As a result, any reasonable algorithm has no failed allocations.  Hence, any reasonable algorithm also has identical values for income, energy consumption, profit, etc.  

\begin{figure}[!t]
\centering
\includegraphics[scale=0.35]{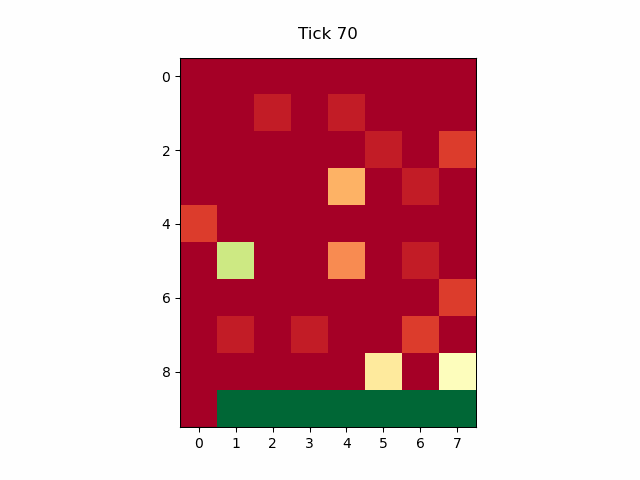}
\caption{Packing density for the Best Fit policy at one time-step. Each square corresponds to a specific PM and the colour corresponds to what portion of that PMs capacity is currently utilized.  Red corresponds to fully used, green is completely empty.  The packing density ignores the empty (or green) PMs on the bottom and counts the cumulative utilization ratio for the remaining PMs.}
\label{fig:packing_density}
\end{figure}
% TODO: Add scale for colors

For the reward function we instead consider $r(s,a,\xi) = - 100*\texttt{Failed-Allocation} - 1 / \texttt{Packing Density}.$ The first component, one hundred times the number of failed allocations, helps to penalize the algorithms in training to ensure valid assignments for all of the VM requests.  The second component corresponds to the \texttt{Packing-Density}, computed via:
\[
    \texttt{Packing-Density} = \frac{\sum_{v \in VM} \texttt{Cores Usage}_v}{\sum_{p \in PM} \Ind{p \text{ is utilized }} \texttt{Capacity}_p}
\]

The numerator here is the total cumulative cores used for all of the VMs currently assigned on the system.  The denominator is the total capacity of all physical machines which are currently utilized (i.e. have a VM assigned and currently running).  The reason for picking this reward (and the inverse of it) is that:
\begin{itemize}[leftmargin=*]
    \item It allows for an easily expressible linear programming formulation (see \cref{sec:vm_oracle}).
    \item For any two policies which allocate all virtual machines, packing density serves as a criteria to differentiate the policies.  An algorithm which has large packing density equivalently uses the physical machines efficiently so that unused PMs can be re-purposed, reassigned, or potentially turned off.
    \item It serves as a proxy to ensure that the virtual machines are packed in as minimal number of physical machines possible.  This allows the allocator to be robust to hardware failures, where entire physical machines are potentially rendered unusable.
\end{itemize}
See \cref{fig:packing_density} for an illustration.

\subsubsection{Simulator Fidelity}

Our training procedure requires a faithful simulator of Virtual Machine allocation focused at a cluster level to validate our experimental results.  We found that the \MARO simulator captures all first-order effects of cloud computing environments specifically at the cluster level.  However, there are several effects not included in the simulations:
\begin{itemize}[leftmargin=*]
    \item When a virtual machine arrives to the system, they request a \emph{maximum} amount of CPU and memory capacity that they can use.  However, over time, any given VM request might only use a fraction of their requested resources.  Current cloud computing systems use an \emph{over-subscription} model where the requested memory and CPU cores for the VMs assigned to a PM can surpass its capacity.  However, when the total realized demand surpasses the PMs capacity, all of the VMs assigned to that system are failed and migrated to a different PM.  In contrast, \MARO assumes that each VM uses exactly its requested cores and memory over time, hence eliminating the need to model over-subscription on the cluster level.
    \item Typical systems involve live migration where virtual machines can be moved between physical machines without disrupting the VM request.  This is used in order to eliminate stranding which occurs when a physical machine has only a few long-running virtual machines allocated to it.  However, such an operation is costly and requires a large amount of system overhead.
    \item Our neural networks explicitly use the VM's lifetime as a feature in the state.  However, in true cloud computing systems the lifetime is unknown.  Only when a user decides to cancel a VM does the system have access to that information.  As such, it is typically observed in the trace dataset but cannot be used in policy network design.  We forgo this when modelling the policy as unlike the dynamics of VM request types, lifetimes for a VM are typically easy to model and these forecasts can be used as a replacement in the network representation~\citep{hadary_protean_2020}.
\end{itemize}
We believe that \MARO serves as a high fidelity simulator of the VM allocation problem at the cluster level while providing open source implementation for additional research experimentation.  However, we complement these results in \cref{sec:nylon} with a real-world model of cloud computing platforms.

\begin{algorithm}[tb]
   \caption{Training Procedure in \MARO}
   \label{alg:training_procedure}
\begin{algorithmic}[1]
   \STATE {\bfseries Input:} number of roll-outs, number of actors, number of training iterations.
   \FOR{each roll-out}
   \STATE For each actor, sample a random duration uniformly at random, and execute the \textbf{Best Fit} heuristic on a historical dataset of that length starting from an empty cluster
   \STATE For each actor, sample a one-hour trace of VM requests $\bfxi^i$
    \FOR{each actor}
        \STATE Collect dataset of $(s_t, a_t, \xi_t, r_t, s_{t+1})$ pairs under the current policy $\pi_\theta$
        \ENDFOR
    \STATE Add collected dataset to current experience buffer
    \FOR{each training iteration}
    \STATE Sub-sample batch from current collected dataset \\
    \STATE Update policy by gradient descent along the sampled batch \\
    \ENDFOR
   \ENDFOR
\end{algorithmic}
\end{algorithm}

\subsubsection{Training Implementation Details}

\cref{alg:training_procedure} presents the pseudocode for our training procedure using \MARO.  We repeat the following process for five hundred roll-outs.  First, we created realistic starting state distributions by executing the \emph{BestFit} heuristic for a random duration (greater than one day).  Line 4 samples one-hour traces of VM requests from the 2019 historical Microsoft Azure dataset.  Then, in line five and six, with fifteen actors in parallel we evaluate the current policy $\pi_\theta$ on the sampled VM request trace, adding the dataset of experience to the experience buffer.  Lines 9-11 samples batches of size $256$ where we update the policy and algorithmic parameters $\theta$ by gradient descent on the loss function evaluated on the sampled batch.  This process repeats for five thousand gradient steps.

We implemented the training framework using PyTorch~\citep{NEURIPS2019_9015} along with \MARO~\citep{MARO_MSRA}, and all experiments were conducted using the Microsoft Azure ML training platform.  For hyperparameters and neural network architectures for the Sim2Real RL and \GuidedRL algorithms, see \cref{sec:experiment_blackbox}.  All experiments were run on the same compute hardware and took similar runtimes to finish.  Runtime was dominated by the \MARO simulator executing the roll-outs under the curent policy versus the \plan~calls or the ML model updates.  As such, each algorithm was essentially given the same computational budget.

To evaluate the trained policies we subdivided the Azure Public Dataset into a temporally contiguous and non-overlapping training (first $15$ days) and test (last $15$ days) portions.  For evaluation we sampled fifty different one-day traces of VM requests from the held-out portion. For each of the different traces, we executed the policy in parallel to a greedy \textbf{Best Fit} algorithm.  Each deep learning algorithm was evaluated over five different random seed initializations and we tuned hyperparameters using grid search.  All metrics are reported with respect to cumulative differences against the baseline \textbf{Best Fit} policy, alongside statistical significance tests.  In \cref{tab:performance} we evaluate the number of machines required to pack the jobs.  Negative numbers correspond to fewer required PMs on average.  Asterisks correspond to statistical significance computed with Welch's $t$-test with $p=0.05$. In \cref{tab:test_performance} we provide the performance metrics on the underlying rewards as well.

\subsubsection{Heuristic Algorithms}

\begin{itemize}[leftmargin=*]
    \item \textbf{Random}: Picks a physical machine uniformly at random from the list of physical machines which have capacity to service the current VM request.
    \item \textbf{Round Robin}: Allocates to physical machines in a round-robin strategy by selecting a physical machine from the list of physical machines which have capacity to service the current VM request that was least recently used.
    \item \textbf{Best Fit}:
This algorithm picks a physical machine based on a variety of metric types.
\begin{itemize}
    \item \textbf{Remaining Cores}: Picks a valid physical machine with minimum remaining cores
    \item \textbf{Remaining Memory}: Picks a valid physical machine with minimum remaining memory
    \item \textbf{Energy Consumption}: Picks the valid physical machine with maximum energy consumption
    \item \textbf{Remaining Cores and Energy Consumption}: Picks a valid physical machine with minimum remaining cores, breaking ties via energy consumption
\end{itemize}
Similar heuristics to this are currently used in most large-scale cloud computing systems~\citep{hadary_protean_2020}.
    \item \textbf{Bin Packing}: Selects a valid physical machine which minimizes the resulting variance on the number of virtual machines on each physical machine. 
\end{itemize}

\subsubsection{Sim2Real RL Algorithms}
\label{sec:experiment_blackbox}

We also compared our \GuidedRL approaches to existing Sim2Real RL algorithms in the literature with custom implementation built on top of the \MARO package.  These include:
\begin{itemize}[leftmargin=*]
    \item \textbf{Deep Q Network (DQN)}: Double $Q$-Learning algorithm from~\citet{van2016deep}.
    \item \textbf{Actor Critic (AC)}: Actor Critic algorithm implementation from~\citet{konda2000actor}.
    \item \textbf{Mean Actor Critic (MAC)}: A modification of the actor critic algorithm where the actor loss is calculated along all actions instead of just the selected actions~\citep{asadi2017mean}.
    \item \textbf{Policy Gradient (VPG)}: A modification of the vanilla policy gradient with a exogenous input dependent baseline from~\citet{mao_variance_2019}.  Instead of training a baseline explicitly, we use $Q_t^{\text{Best Fit}}(s,a,\bfxi_{\geq t})$.
\end{itemize}

\subsubsection{State Features, Network Architecture, and Hyperparameters}
\label{sec:deep_rl_details}

The state space of the VM allocation scenario is combinatorial as we need to include the CPU and memory utilization of each physical machine across time to account for the lifetimes of the VMs currently assigned to the PM.  To rectify this, for each of the deep RL algorithms we use action-dependent features when representing the state space.  In particular, once a VM request arrives, we consider the set of physical machines that are available to service this particular virtual machine.  Each (PM, VM) pair has associated state features, including:
\begin{itemize}[leftmargin=*]
    \item The VM's CPU cores requirement, memory requirement, and lifetime
    \item The PM's CPU cores capacity, memory capacity, and type
    \item The historical CPU cores allocated, utilization, energy consumption, and memory allocated over last three VM requests
\end{itemize}
The last component is serving as a proxy for the historical utilization of the PM across all time to account for all VMs currently assigned to the PM.  The final action dependent features corresponds to the concatenation of these state features for each valid PM to service the current request.

We note that to use action-dependent features some of the algorithms required slight tweaking to their implementations.  In particular, when considering algorithms using a policy network representation (i.e. policy gradient, actor critic, or mean actor critic) when executing the policy we take $\pi_\theta = \text{Softmax}(\pi(s, a_1), \ldots, \pi(s, a_N))$ where $a_1, \ldots, a_N$ is the set of physical machines that can service the current request and $(s, a_i)$ is the corresponding state-features for the physical machine $a_i$.

For the actor and critic network representations in all algorithms we use a four layer neural network with $(32, 16, 8)$ hidden dimensions, an output dimension of one (due to the action-dependent features), and LeakyReLU activation functions.  For each of the algorithms we use the \emph{RMSprop} optimization algorithm.  We implemented the training framework using PyTorch~\citep{NEURIPS2019_9015} and \MARO.  All experiments were run on the same compute hardware and took similar runtimes to finish.  Runtime was dominated by the \MARO simulator executing the roll-outs, and the ML model updates and \plan~oracle calls were quicker.

Lastly we provide a list of the hyperparameters used and which algorithm they apply to when tuning algorithm performance.

\begin{table}[ht]
\captionsetup{size=footnotesize}
\caption{List of hyperparameters tuned over for the Sim2Real RL and \GuidedRL algorithms.} \label{tab:hp_tuning}
\setlength\tabcolsep{0pt} % let LaTeX compute intercolumn whitespace

\smallskip 
\begin{tabular*}{\columnwidth}{@{\extracolsep{\fill}}llc}
\toprule
  Hyperparameter  & Algorithm & Values \\
\midrule
 Discount Factor & DQN, AC, MAC, PG & 0.9, 0.95, 0.99, 0.999 \\
 Learning Rate & All Algorithms & 0.05, 0.005, 0.0005, 0.00005, 0.000005 \\
 Entropy Regularization & PG, AC, MAC & 0, 0.1, 1, 10 \\
 Actor Loss Coefficient & AC, MAC & 0.1, 1, 10, 100 \\
 Target Update Smoothing Parameter & DQN & 0.0001, 0.001, 0.01, 0.1 \\
\bottomrule
\end{tabular*}
\end{table}

For concrete parameters evaluated and experiment results, see the attached code-base.

\subsubsection{Training Performance}

In \cref{fig:loss_curves} we include a plot of the loss curves for the various algorithms.

\begin{figure}[t]
\centering     %%% not \center
\subfigure[DQN]{\label{fig:aaa}\includegraphics[width=.45\linewidth]{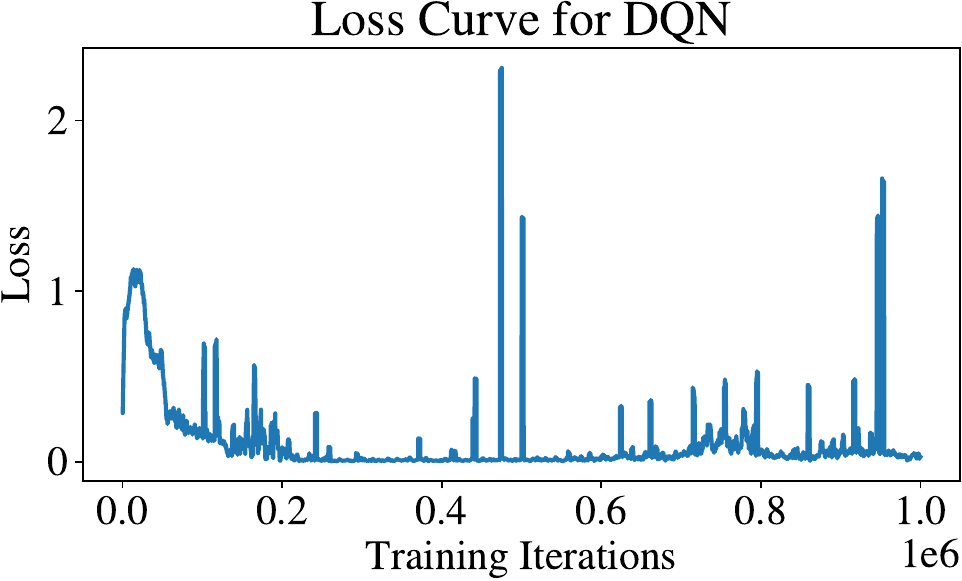}}
\subfigure[AC]{\label{fig:bbb}\includegraphics[width=.45\linewidth]{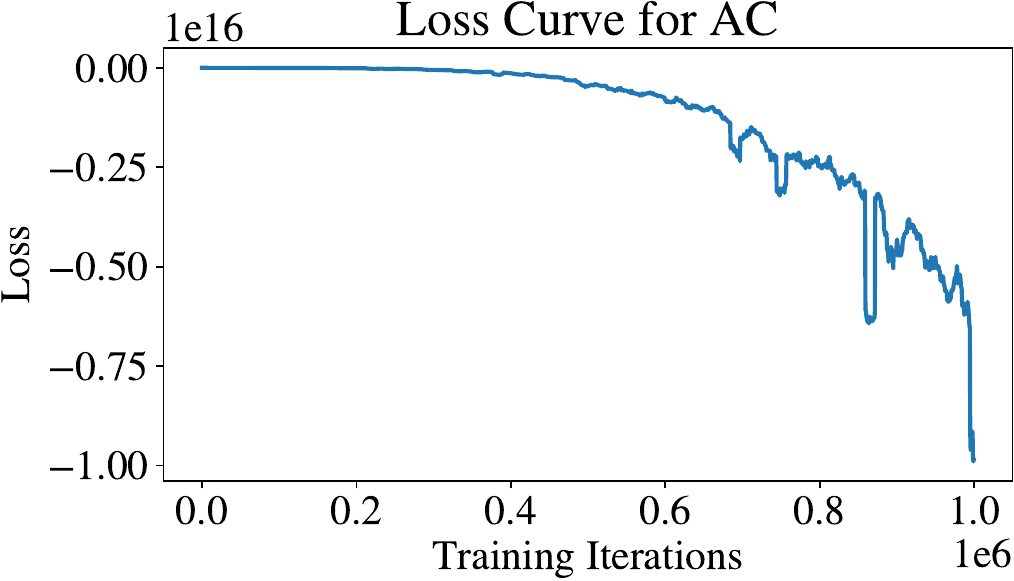}}
\subfigure[MAC]{\label{fig:ccc}\includegraphics[width=.45\linewidth]{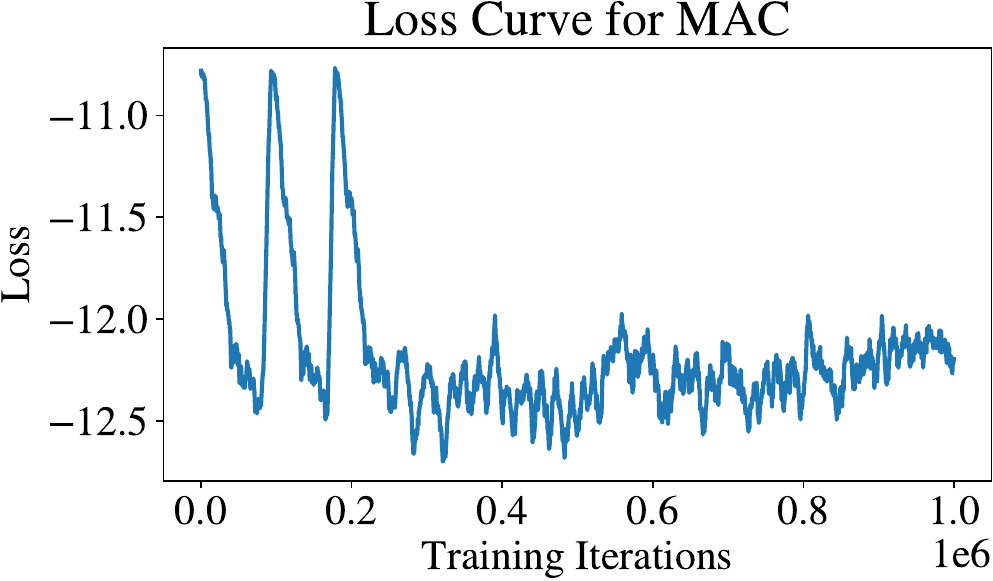}}
\subfigure[PG]{\label{fig:ddd}\includegraphics[width=.45\linewidth]{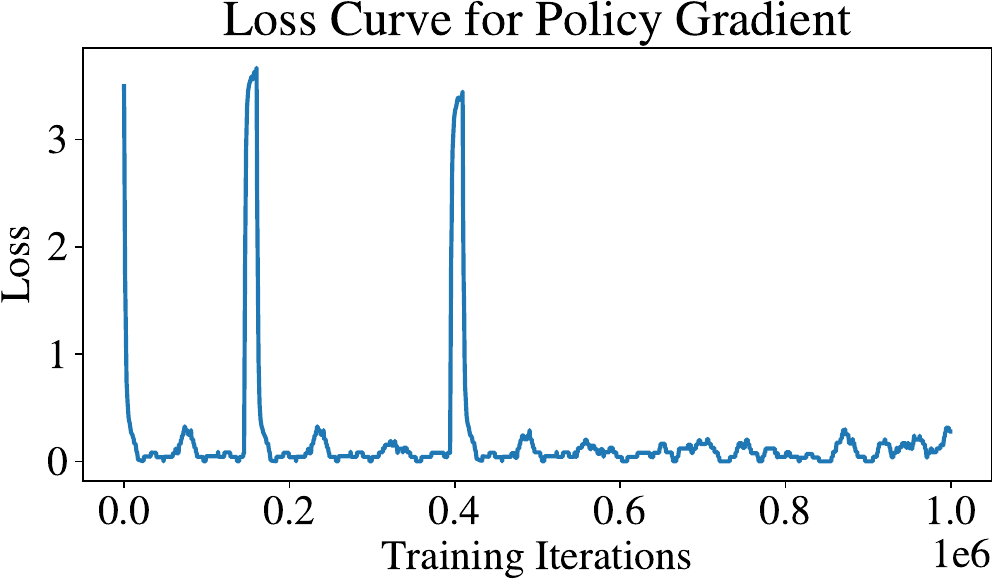}}
\subfigure[\GuidedQD]{\label{fig:eee}\includegraphics[width=.45\linewidth]{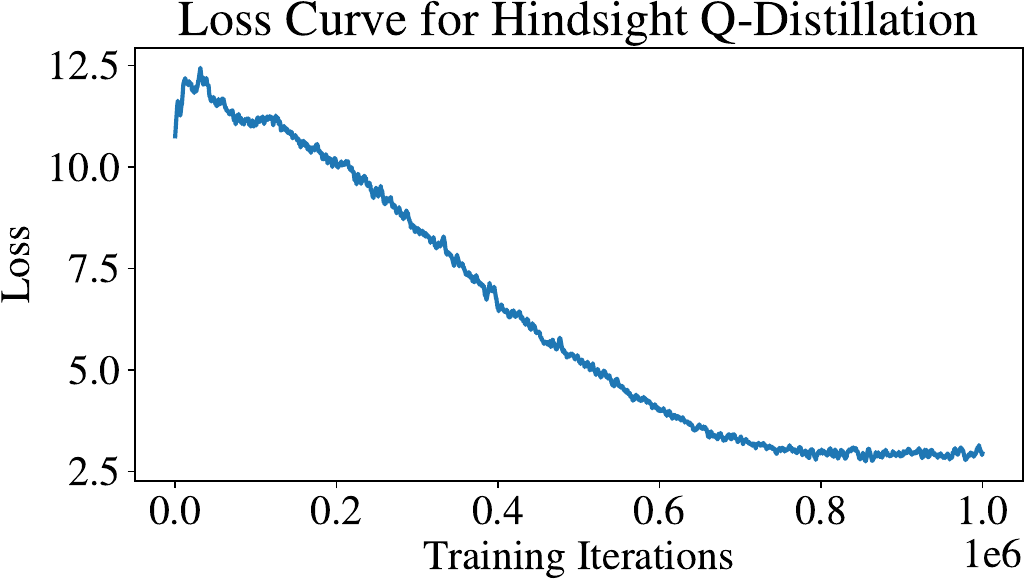}}
\subfigure[\GuidedMAC]{\label{fig:fff}\includegraphics[width=.45\linewidth]{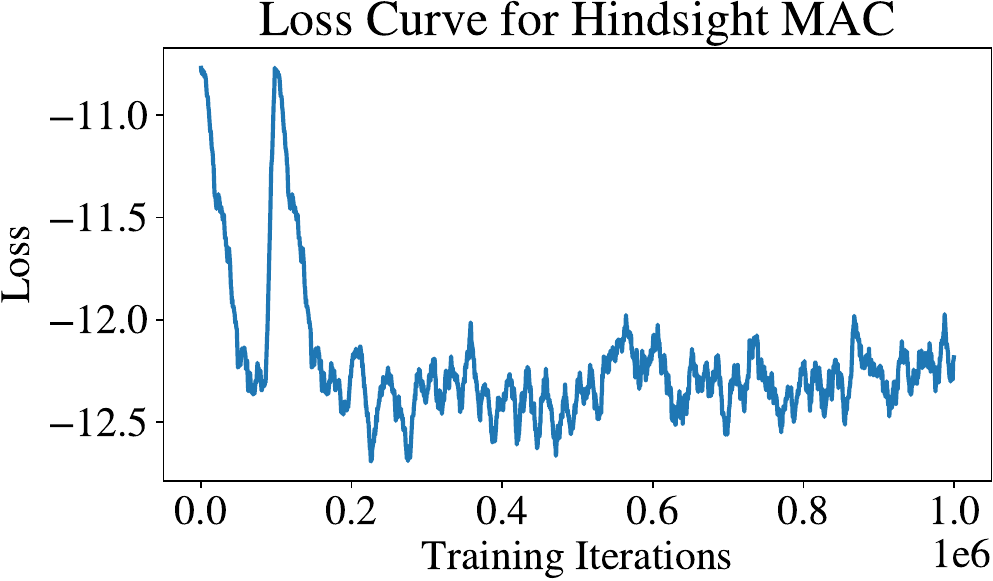}}
\caption{Moving average of the loss curves for the Sim2Real RL and \GuidedRL algorithms over the one million gradient steps computed in each of the experiments.  Note that some of the volatility occurs when an algorithm observes a datapoint with a failed allocation as there is a large penalty.}
\label{fig:loss_curves}
\end{figure}

\subsubsection{Testing Performance}

In \cref{tab:test_performance} we plot the performance of the algorithms on the true reward function.  Here we include the true reward considered:
$$r(s,a,\xi) = - 100*\texttt{Failed-Allocation} - 1 / \texttt{Packing Density}(s)$$
as well as simply $\texttt{Packing Density}(s)$.  All measures are reported with a $95\%$ confidence interval, and computed as differences against the \textbf{Best Fit} allocation policy.

\begin{table}[!tb]
\captionsetup{size=footnotesize}
\caption{Performance of heuristics, Sim2Real RL, and \GuidedRL algorithms on VM allocation.  Here we include the true reward metric the algorithms were trained on (negative inverse of the packing density) and the packing density improvements on average.} \label{tab:test_performance}
\setlength\tabcolsep{0pt} % let LaTeX compute intercolumn whitespace

\smallskip 
\begin{tabular*}{\columnwidth}{@{\extracolsep{\fill}}lcc}
\toprule
  Algorithm  & Performance $r = -1 / \texttt{Packing Density}$ & Packing Density\\
\midrule
  Performance Upper Bound & $0.66 \pm 0.29$ & $.09\% \pm 0.03\%$ \\

 \midrule 
 Best Fit & 0.0 & 0.0\\
 Bin Packing & $-64.44 \pm 2.49$ & $-5.34\% \pm 0.2\%$ \\
 Round Robin & $-56.36 \pm 2.65$ & $-4.67\% \pm 0.22\%$\\
 Random & $-48.94 \pm 2.33$ & $-4.08 \% \pm 0.19\%$\\

\midrule
 DQN & $-1.00 \pm 0.41$ & $-0.05\% \pm 0.04\%$\\
 MAC & $-0.38 \pm 0.033$ & $-0.03\% \pm 0.00\%$\\
 AC & $-2.94 \pm 0.61$ & $-0.21\% \pm 0.06\%$\\
 Policy Gradient & $-1.03 \pm 0.39$ & $-0.06\% \pm 0.04\%$ \\
\midrule
 \textbf{\GuidedMAC} & $\mathbf{0.18 \pm 0.093}$ & $\mathbf{0.05 \% \pm 0.00\%}$\\
 \GuidedQD & $0.08 \pm 0.32$ & $0.04 \% \pm 0.029\%$\\
\bottomrule
\end{tabular*}
\end{table}

\subsection{Real-World VM Allocation}
\label{app:nylon_details}

In this simulation we consider the more realistic setting of VM arrivals in continuous time and where the allocation agent has no information about VM lifetimes.  We avoid giving a concrete description of the reward function trained, cluster sizes, etc. to preserve the confidentiality of the cloud operator.  However, we briefly describe the training implementation details, heuristic algorithms, as well as the hindsight heuristic used.

\subsubsection{Heuristic Algorithms}

\cref{tab:nylon_results} summarizes the results of performance for three different algorithms:

\paragraph{BestFit} Performance is shown relative to a {\bf BestFit} strategy used in production.  This strategy follows a proprietary implementation that prioritizes between CPU and memory depending on their scarcity.

\paragraph{\GuidedRL and Sim2Real RL} We adapt \GuidedMAC (HL) and compare it with MAC~\cite{asadi2017mean} (RL), where both used the same network architecture, which embeds VM-specific and PM-specific features using a $6$-layer GNN. The resulting architecture is rich enough to represent the {\bf BestFit} heuristic, but can also express more flexible policies.

\subsubsection{Training Implementation Details}

We trained each algorithm over $3$ random seeds and evaluated $5$ rollouts to capture the variation in the cluster state at the start of the evaluation trace. Unlike the other experiments, we cannot account for the randomness in exogenous samples because we only have one evaluation trace for each cluster.  Error metrics are computed with a paired $t$-test of value $p = 0.05$.

\subsubsection{Hindsight Heuristic}
\label{app:heuristic_performance}

Due to the large scale of the real-world scenarios, even the linear relaxation of the integer program was not tractable. Consequently, we resort here to using a carefully designed \emph{hindsight heuristic} (\cref{alg:hindsight_heuristic}) to derive $\plan(t, \bfxi, s)$. The heuristic is based on prioritizing VMs according to both their size and duration. See~\cref{alg:hindsight_heuristic} for pseudocode.

In \cref{tab:heuristic_check} we separately tested the accuracy of our heuristic for the hindsight planner (both by comparing to the optimal in small instances, as well as by comparing to a lower bound given in \citet{buchbinder2021online}), and concluded that the heuristic obtains a value that is within few percentages of the optimum.  We found that the dual gap of~\cref{alg:hindsight_heuristic} was typically within $4\%$ of the optimum. 

\begin{algorithm}[t]
\begin{algorithmic}[1]
%   \hrulefill \\
  \STATE {\bfseries Input:} A cluster state $s$, sequence of VM requests $\bfxi_{t:T}$.
    \STATE Sort requests in descending order of their lifetimes
    \FOR{Each request $\xi$}
    \STATE Allocate to the feasible PM where $\xi$ is the only live VM on it for the least amount of time
    \ENDFOR
\end{algorithmic}
\captionof{algorithm}{Hindsight Heuristic.}
\label{alg:hindsight_heuristic}
\end{algorithm}

\begin{table}[t]
\captionsetup{size=footnotesize}
\caption{How close to optimal is the Upper Bound (Oracle)? We measure the average UsedPMs of the Oracle's solution and compare with the lower bound which assumes that VMs can be fractionally split across PMs. } \label{tab:heuristic_check}
\setlength\tabcolsep{0pt} % let LaTeX compute intercolumn whitespace

\smallskip 
\begin{tabular*}{\columnwidth}{@{\extracolsep{\fill}}lccc}
\toprule
  Cluster  & Upper Bound (Oracle) & Lower Bound & Gap (\%)\\
A	& 467.42		&	499.625	& 6.89\% \\
B   &	538.35	&		578.214	& 7.41\% \\
C & 383.19	&		391.329	& 2.12\% \\
D & 27.47	&		32.9769	& 20.07\% \\
E & 448.86	&		475.968	& 6.04\% \\
F &	577.97	&		625.083	& 8.15\% \\
G &	2252.65	&		2287.21	& 1.53\% \\
H &	2295.19 &		2332.31	& 1.62\% \\
I & 341.90	&		361.654	& 5.78\% \\
J	&1212.91	&		1239.9	& 2.23\% \\
K	&565.34	&		570.532	& 0.92\% \\
L	&8.23	&		8.85826	& 7.64\% \\
M	&8.77	&		9.3152	& 6.25\% \\
N	&305.10	&		310.096	& 1.64\% \\
O	&43.27	&		45.3596	& 4.82\% \\
P	&2528.72	&		2588.38	& 2.36\% \\
Q	&1457.37	&		1481.13	& 1.63\% \\
R	&123.04	&		124.866	& 1.49\% \\
S	&2452.25	&		2491.68	& 1.61\% \\
T	&68.68	&		70.0956	& 2.07\% \\
U	&533.61	&		539.872	& 1.17\% \\
V	&1243.70	&		1260.14	& 1.32\% \\
W	&1678.88	&		1702.09	& 1.38\% \\
X	&158.71	&		171.058	& 7.78\% \\
ALL & & & 4.33\% \\
\midrule
 \bottomrule
\end{tabular*}
\end{table}

\end{document}